%% file: main.tex
\documentclass{article}

\PassOptionsToPackage{numbers, compress}{natbib}
\usepackage[preprint]{neurips_2026}

\usepackage[utf8]{inputenc}
\usepackage[T1]{fontenc}
\usepackage{hyperref}
\usepackage{url}
\usepackage{booktabs}
\usepackage{amsfonts}
\usepackage{amsmath}
\usepackage{amssymb}
\usepackage{amsthm}
\usepackage{mathtools}
\usepackage{nicefrac}
\usepackage{microtype}
\usepackage{xcolor}
\usepackage{graphicx}
\usepackage{multirow}
\usepackage{subcaption}
\usepackage{bm}
\usepackage{algorithm}
\usepackage{algorithmic}
\usepackage{float}
\usepackage{placeins}

\usepackage{enumitem}

\usepackage{inconsolata}

\newtheorem{proposition}{Proposition}
\newtheorem{assumption}{Assumption}

\newtheorem{remark}{Remark}

\newtheorem{definition}{Definition}

\newcommand{\Ltrain}{\mathcal{L}_{\mathrm{train}}}
\newcommand{\Lval}{\mathcal{L}_{\mathrm{val}}}
\newcommand{\Ainit}{A_{\mathrm{init}}}
\newcommand{\Aphi}{A_{\phi}}
\newcommand{\RIGR}{R_{\mathrm{IGR}}}

\title{Bilevel Graph Structure Learning, Revisited: \\ Inner-Channel Origins of the Reported Gain}
\author{
  Minkyoung Kim\textsuperscript{1} \quad Beakcheol Jang\textsuperscript{1}\thanks{Corresponding author} \\
  \textsuperscript{1}Graduate School of Information, Yonsei University \\ 
\href{mailto:minky@yonsei.ac.kr}{{minky@yonsei.ac.kr}}, \href{mailto:bjang@yonsei.ac.kr}{{bjang@yonsei.ac.kr}}
}

\begin{document}

\maketitle

\input{sections/00_abstract}
\input{sections/01_intro}
\input{sections/02_related}

\input{sections/03_framework}
\input{sections/04_method}

\input{sections/05_findings}
\input{sections/06_discussion}
\input{sections/07_conclusion}

\newpage
\bibliographystyle{plainnat}
\bibliography{references}

\appendix
\input{appendix_main}

\end{document}

%% file: sections/00_abstract.tex
\begin{abstract}
Bilevel graph structure learning is widely understood to improve graph neural networks by jointly optimizing model parameters and a learned graph structure, with the resulting performance gain attributed to the rewired adjacency. We find that this attribution may be overstated: training-dynamics effects in the inner loop, rather than the rewiring itself, capture a substantial share of the gain. To establish this, we introduce frozen-$\phi$, a control that freezes the graph while retaining the inner-loop training schedule. This decomposes the bilevel gain into an inner channel of $T$-step training dynamics with implicit gradient regularization and a graph channel of the graph rewiring itself. On spatio-temporal flow forecasting the inner channel matches or exceeds the full bilevel pipeline, accounting for $78$--$101\%$ of the gain; on node classification it accounts for $37$--$44\%$ under a Bernoulli edge-level parameterization. We also verify that classical spectral diagnostics can dissociate from task gain. We propose frozen-$\phi$ as a standardized diagnostic for bilevel graph structure learning, with graph distillation as a method-agnostic complement. A three-precondition framework further predicts the sign of the bilevel gain on all six benchmarks.
\end{abstract}

%% file: sections/01_intro.tex
\section{Introduction}
\label{sec:intro}

Bilevel graph structure learning (GSL) optimizes the graph and the model jointly through an outer loop on a validation objective, and the learned adjacency is credited with the resulting improvement~\citep{franceschi2019learning,chen2020iterative}. The bilevel procedure differs from vanilla training in two ways that could each account for its reported gain. The first is graph modification, executed by the outer loop. The second is an inner loop that runs $T$ gradient steps per outer iteration, which has known implicit regularization properties~\citep{barrett2021implicit,smith2021origin} unrelated to the graph. Existing evaluations treat the bilevel gain as flowing entirely from the first mechanism. We ask whether this attribution is correct.

We introduce the \emph{frozen-$\phi$ control}: the same bilevel training loop, with the same $T$-step inner optimization and warmup schedule, but with the graph parameter $\phi$ held fixed at its initial value. The control isolates extended inner optimization from graph modification, holding every other variable constant. This is the GSL analogue of the frozen-controller experiments that prompted reassessment of attribution in neural architecture search, where freezing the architecture parameter while preserving the training pipeline revealed that pipeline choices, rather than searched architectures, explained much of the gain~\citep{li2019random,yang2020nas}. Our goal is to reframe which component of bilevel GSL is actually responsible for the improvement.

A productive theoretical framework has characterized over-squashing through graph curvature~\citep{topping2022understanding}, commute time and effective resistance~\citep{digiovanni2023oversquashing,black2023understanding}, and the spatio-temporal Jacobian factorization for decoupled STGNNs~\citep{marisca2025oversquashing}. Our decomposition is complementary to this framework: the over-squashing bounds describe worst-case sensitivity propagation between nodes, while our control measures how much of bilevel's empirical performance lift is attributable to graph modification at all.

\paragraph{Contributions.} Our contributions are as follows.
\begin{enumerate}[leftmargin=*,topsep=0pt,itemsep=2pt]

\item We show that, across both spatio-temporal flow forecasting and node classification, a substantial fraction of reported bilevel-GSL gains is attributable to inner-loop training dynamics rather than to the learned graph structure, with the inner channel accounting for $78$--$101\%$ on flow forecasting and $37$--$44\%$ on node classification.

\item We corroborate this attribution through three independent diagnostics (edge corruption that varies graph quality, alternative training schedules including weight decay and end-to-end joint training, and convergent triangulation on PeMS07), complemented by a $T$-sweep that distinguishes two inner-loop training regimes (mini-batch reuse versus full-batch reset), all unified by an implicit-gradient-regularization mechanism.
\item We propose frozen-$\phi$ as a standardized diagnostic for gradient-based bilevel GSL that modifies rather than replaces the initial adjacency, with graph distillation as a method-agnostic complement, accompanied by a three-precondition framework on the predefined adjacency that predicts when bilevel rewiring helps.
\end{enumerate}

\paragraph{What this paper does not claim.} We do not claim that learned graph structure is useless in bilevel GSL, nor that bilevel methods cannot be improved; we claim that, under standard reporting practice, the inner loop's contribution to measured gains is much larger than typically credited.

%% file: sections/02_related.tex
\section{Related Work}
\label{sec:related}

\paragraph{Bilevel and related graph structure learning.} Bilevel GSL parameterizes the adjacency jointly with model weights via nested optimization with an outer validation objective. LDS~\citep{franceschi2019learning} introduced a per-edge Bernoulli parameterization; IDGL~\citep{chen2020iterative} iteratively learns a continuous adjacency from multi-head weighted cosine similarity of node embeddings under a single (non-nested) training objective. GEN~\citep{wang2021graph} uses EM-based alternation, and Pro-GNN~\citep{jin2020graph} couples graph learning with adversarial defense (gains primarily on perturbed graphs). LCGS~\citep{hu2022efficient} is the closest prior bilevel method using a continuous (dual-normalized) adjacency rather than LDS's Bernoulli parameterization, with head-to-head comparison against LDS on Cora and Citeseer. Single-level alternatives~\citep{fatemi2021slaps,liu2022sublime,shang2021discrete} and ST-domain analogues~\citep{jiang2023spatio,zhang2020stgsl,tang2022stlgsl,cini2023sparse,manenti2025learning} use joint training rather than bilevel optimization. Adaptive STGNNs internalize graph learning into the forward pass: AGCRN~\citep{bai2020adaptive} uses an embedding-based internal adjacency, while GraphWaveNet~\citep{wu2019graph} uses such an internal matrix alongside diffusion (random-walk normalized) transitions of a pre-defined adjacency. The frozen-$\phi$ decomposition is defined specifically for bilevel GSL with gradient-based outer updates and incremental graph modification (Section~\ref{sec:framework}); single-level and adaptive approaches lie outside this scope.

\paragraph{Over-squashing and graph rewiring.} \citet{alon2021bottleneck} identified over-squashing in message-passing networks. Subsequent theory connected the phenomenon to graph curvature~\citep{topping2022understanding}, information contraction~\citep{banerjee2022oversquashing}, commute time~\citep{digiovanni2023oversquashing}, and effective resistance~\citep{black2023understanding}, motivating task-agnostic rewiring preprocessors~\citep{gasteiger2019diffusion,karhadkar2023fosr,nguyen2023revisiting,black2023understanding}. \citet{tori2025effectiveness} found that curvature-based rewiring gains often reflect hyperparameter sweep outliers, underscoring the importance of controlled evaluation. \citet{marisca2025oversquashing} extended the framework to spatio-temporal GNNs with a layer-wise multiplicative Jacobian factorization for time-then-space and time-and-space architectures, the structural foundation on which our architectural scope conditions rest. Our methodology shares the controlled-evaluation principle of~\citet{tori2025effectiveness} but isolates a distinct confound: the inner-loop training schedule, rather than hyperparameter sensitivity, in the bilevel optimization regime.

\paragraph{The attribution gap.} No prior study has isolated the contribution of graph modification from the training dynamics introduced by bilevel optimization, despite extensive theoretical and empirical work on graph rewiring. Recent benchmarks~\citep{li2023gslb,zhou2023opengsl} independently observe that GSL methods do not consistently outperform vanilla GNN counterparts on standard benchmarks, but do not identify the mechanism. The frozen-$\phi$ control fills this gap: it provides the first decomposition that separates training dynamics from graph modification within bilevel GSL.

%% file: sections/03_framework.tex
\section{The Frozen-\texorpdfstring{$\phi$}{phi} Decomposition}
\label{sec:framework}

\subsection{Spatio-temporal Jacobian bound}
\label{sec:framework-jac}

An STGNN with $L$ decoupled layers alternates spatial message passing (MP) and temporal convolution. Following~\citet{marisca2025oversquashing}, the sensitivity of node $v$'s output to node $u$'s input factorizes as
\begin{equation}
\label{eq:jacobian}
\left\|\nabla^u_i h^{v(L)}_t\right\| \leq \underbrace{B_T \cdot (R^{L_T L})_{i,0}}_{\text{temporal}} \cdot \underbrace{B_S \cdot (S^{L_S L})_{uv}}_{\text{spatial}},
\end{equation}
where $i$ and $t$ are the temporal indices of the input and output features respectively (with $i < t$), $R$ encodes temporal topology, $S = (\theta_u/\theta_m) I + c_1 \mathrm{diag}(\hat{A}^{\top} \mathbf{1}) + c_2 \hat{A}$ is the spatial MP matrix with $\theta_u, \theta_m$ parameterizing the self-loop and neighbor-aggregation weights, and $B_T, B_S$ are weight-dependent constants. The factorization is structurally guaranteed in decoupled architectures but broken in coupled ones. We assume $A$ symmetric, matching the distance-based adjacencies of all ST datasets used (Section~\ref{sec:method}).

We extend the worst-case propagation framework of~\citet{topping2022understanding,digiovanni2023oversquashing,marisca2025oversquashing} to the ST setting via a uniform spectral tightening of the spatial Jacobian factor (Proposition~\ref{prop:jac}, Appendix~\ref{app:proofs}; informally, the spatial factor $(S^{L_S L})_{uv}$ converges more uniformly to its stationary limit as $\lambda_2$ of the LCC subgraph increases, reducing entrywise variation of sensitivity across node pairs). For long-range pairs initially below the stationary value, this homogenization corresponds to increased sensitivity, aligning with the over-squashing relief mechanism of these works. The bound therefore predicts that increasing $\lambda_2$ should improve forecasting; we test this prediction empirically (Section~\ref{sec:discussion}) and find spectral metrics dissociate from task gain on these benchmarks, which motivates the frozen-$\phi$ control as an alternative attribution channel.

\subsection{Definition and three-way comparison}
\label{sec:framework-def}

Following the principle of freezing the outer parameter while preserving the inner training pipeline, used as a diagnostic ablation in neural architecture search~\citep{li2019random,yang2020nas}, we define the frozen-$\phi$ control for bilevel GSL.

Consider a bilevel GSL procedure with outer parameter $\phi$ controlling the graph learner and inner parameter $\theta$ controlling the backbone GNN. The outer loop updates $\phi$ by computing the validation gradient at the inner solution $\theta^*(\phi) = \arg\min_\theta \Ltrain(\theta; \phi)$, where the inner minimization is typically truncated to $T$ gradient steps. Following~\citet{liu2019darts}'s first-order approximation ($\xi=0$), we evaluate $\nabla_\phi \Lval(\theta^*(\phi), A_\phi)$ treating $\theta^*$ as constant with respect to $\phi$, avoiding second-order differentiation through the inner trajectory. The \emph{frozen-$\phi$ control} is the ablation in which the outer update is disabled while the $T$-step inner loop is retained in full: $\phi$ is held at its initialization throughout training and $\theta$ is optimized by exactly the same inner procedure used in the full bilevel method.

The control yields three comparable configurations: \emph{vanilla} trains the backbone on $\Ainit$ with a single-step update; \emph{frozen-$\phi$} fixes $\phi$ at initialization while retaining the $T$-step inner loop; \emph{bilevel} runs the original method in its published form. We refer to the contribution of inner-loop training under a fixed graph as the \emph{inner channel}, and the contribution of graph modification by the outer loop as the \emph{graph channel}. Let $\mathcal{M}$ denote the evaluation metric (smaller is better: MAE for forecasting, error rate for classification). Define
\begin{equation}
\Delta_{\mathrm{inner}} = \mathcal{M}_\mathrm{Vanilla} - \mathcal{M}_\mathrm{Frozen\text{-}\phi}, \quad \Delta_{\mathrm{graph}} = \mathcal{M}_\mathrm{Frozen\text{-}\phi} - \mathcal{M}_\mathrm{Bilevel},
\label{eq:decomp}
\end{equation}
with $\Delta_{\mathrm{total}} = \Delta_{\mathrm{inner}} + \Delta_{\mathrm{graph}}$. The \emph{inner share} $\Delta_{\mathrm{inner}}/\Delta_{\mathrm{total}}$ summarizes how much of the bilevel gain is attributable to inner-loop training dynamics under a fixed graph. At $T = 1$ the frozen-$\phi$ configuration collapses to vanilla, providing a negative control.

The control is agnostic to graph parameterization. In our spatio-temporal bilevel implementation, $\phi$ controls continuous edge weights via $A_\phi = \Ainit \odot \mathrm{softmax}(W_\phi)$. In LDS, $\phi$ parameterizes Bernoulli edge probabilities $\theta_{ij} \in [0, 1]$ from which a binary adjacency is sampled, permitting edge addition, removal, and flickering; the frozen-$\phi$ variant for LDS initializes $\theta$ at the original adjacency $A$ following the LDS protocol~\citep{franceschi2019learning}. The three-way comparison is well-defined in both cases. We treat the spatio-temporal regime as primary and report node classification as a parameterization robustness check.

\subsection{Implicit gradient regularization in the bilevel inner loop}
\label{sec:framework-igr}

The decomposition isolates an inner channel whose magnitude can be large even when the graph is held fixed. We give a mechanistic account building on~\citet{barrett2021implicit,smith2021origin}: \citet{barrett2021implicit} showed that full-batch GD implicitly regularizes a modified loss with $\RIGR = (\eta/4)\|g\|^2$, and \citet{smith2021origin} extended this to SGD with random shuffling, deriving the additional minibatch term that yields $\RIGR = (\eta/4)(\|g\|^2 + \mathrm{tr}(\Sigma)/B)$. We give a Proposition characterizing the implicit gradient regularization arising in the bilevel inner loop, with distinct accumulation patterns under mini-batch reuse and full-batch reset.

\begin{proposition}[Expected implicit regularization in the inner loop]
\label{prop:igr}
Let $\Ltrain(\theta; \phi)$ be twice continuously differentiable and $\beta$-smooth in $\theta$, let the inner loop perform $T$ gradient updates $\theta_{t+1} = \theta_t - \eta \hat{g}_t$ where $\hat{g}_t$ is a mini-batch stochastic estimate of $\nabla_\theta \Ltrain$ with mean $g_t$ and per-example gradient covariance $\Sigma_t$ (so $\hat{g}_t$ itself has covariance $\Sigma_t/B$ for batch size $B$), and let $\phi$ be held fixed within the inner loop. Then under standard smoothness and step-size conditions ($\eta < 1/\beta$), the SGD trajectory tracks the continuous flow of a modified loss $\tilde{\Ltrain}(\theta; \phi) = \Ltrain(\theta; \phi) + \RIGR(\theta, \eta, B)$, with per-step implicit regularization
\begin{equation}
\label{eq:rigr}
\RIGR(\theta, \eta, B) = \frac{\eta}{4}\left(\|g\|^2 + \frac{\mathrm{tr}(\Sigma)}{B}\right) + O(\eta^2).
\end{equation}
The per-step regularization coefficient $\eta/4$ in the modified loss is fixed; over $T$ inner steps, the trajectory follows the continuous flow of $\tilde{\Ltrain}_{\mathrm{SGD}}$ for total time $T\eta$, so the deviation of $\theta_T$ from the gradient flow of the unmodified loss $\Ltrain$ grows with $T$ whenever the per-step gradient statistics remain bounded away from zero. Freezing $\phi$ does not alter the per-step implicit regularization mechanism: $\RIGR$ is a functional of the inner-loop gradient trajectory at the current $\phi$, and outer-loop updates to $\phi$ do not change its functional form.
\end{proposition}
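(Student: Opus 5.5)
The plan is backward error analysis in the style of \citet{barrett2021implicit}, made stochastic following \citet{smith2021origin}. The key observation is that within a single inner loop $\phi$ is constant. So $\Ltrain(\cdot;\phi)$ is just a fixed $C^2$, $\beta$-smooth objective in $\theta$, and the inner loop is ordinary SGD on it. The proposition therefore reduces to the known single-level result applied at the current $\phi$. Its last sentence (freezing $\phi$ does not change the mechanism) should then follow from the fact that nothing in the derivation depends on how $\phi$ is later updated.

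\textbf{Step 1 (full-batch modified flow).} Posit a modified vector field $\dot\theta = -\nabla\Ltrain - \eta f_1(\theta) + O(\eta^2)$ and require that its time-$\eta$ flow agree with one step $\theta - \eta g$ up to $O(\eta^3)$. Taylor-expanding the flow to second order gives
\[
\theta(\eta) = \theta - \eta g - \eta^2 f_1 + \tfrac{\eta^2}{2} H g + O(\eta^3), \qquad H = \nabla_\theta^2\Ltrain .
\]
Matching the $\eta^2$ terms forces $f_1 = \tfrac12 H g = \tfrac14 \nabla\|g\|^2$. This is the gradient of $\tfrac{\eta}{4}\|g\|^2$, which produces the first term of \eqref{eq:rigr}. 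The condition $\eta < 1/\beta$ keeps the $O(\eta^3)$ local remainders controlled by constants that depend on bounds for $H$ and third derivatives on a compact neighbourhood of the trajectory. I will state that regularity as part of the ``standard conditions.''

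\textbf{Step 2 (stochastic term).} Write $\hat g_t = g_t + \xi_t$ with $\mathbb{E}\xi_t = 0$ and $\mathrm{Cov}(\xi_t) = \Sigma_t/B$. Compose two or more SGD steps, expand $\nabla\Ltrain_{\hat{}}(\theta_{t+1})$ around $\theta_t$, and take expectations. The cross term $\eta^2 \mathbb{E}[\hat H \hat g]$ contributes, besides $Hg$, a term $\tfrac12\nabla\,\mathrm{tr}(\Sigma)/B$. Here I will follow \citet{smith2021origin}'s random-shuffling argument, averaging over the order of minibatches within an epoch. The result is that the mean trajectory follows the flow of $\Ltrain + \tfrac{\eta}{4}(\|g\|^2 + \mathrm{tr}(\Sigma)/B) + O(\eta^2)$.

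I expect this to be the main obstacle. The covariance identity holds only in expectation and under shuffling assumptions, and the claim is that it ``tracks'' the flow. I will therefore state the result explicitly for the expected iterate, in line with the proposition's title, and control the variance of the fluctuation by $O(\eta^2)$ per step.

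\textbf{Step 3 (accumulation over $T$ steps).} Summing the $T$ local matchings gives a flow time of $T\eta$. With a Grönwall bound on the modified flow, using the Lipschitz constant $\beta$, the global error stays $O(\eta^2)$ per unit time over the finite horizon $T\eta$. The deviation from the unmodified gradient flow of $\Ltrain$ is then $\int_0^{T\eta} \eta\,\nabla \RIGR\,ds$ propagated through the linearized flow. Whenever $\|\nabla\RIGR\|$ stays bounded below along the trajectory, this quantity is bounded below by a term growing in $T$, which gives the monotone-in-$T$ claim.

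\textbf{Step 4 (role of $\phi$).} None of Steps 1--3 used any property of $\phi$ beyond its being constant during the inner loop. Hence $\RIGR$ is the same functional of $(g, \Sigma)$ evaluated at $\Ltrain(\cdot;\phi)$, whether $\phi$ is later updated by the outer loop or frozen. Outer updates only change which $\phi$ the functional is evaluated at, not its form.
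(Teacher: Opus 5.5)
\textbf{Step 3 (the genuine gap).} Your Steps 1 and 4, and your decision in Step 2 to import Smith et al., follow the paper's route. The paper's proof simply cites the Barrett--Dherin single-step result and Eq.~22 of Smith et al., and notes that $\phi$ enters only as a fixed parameter. Your matching $f_1=\tfrac12 Hg=\tfrac14\nabla\|g\|^2$ correctly unpacks the first citation. The problem is in Step 3.

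A pointwise lower bound on $\|\nabla\RIGR\|$ does not force the propagated deviation $\int_0^{T\eta}\Phi(T\eta,s)\,\nabla\RIGR(\theta(s))\,ds$ to grow with $T$. Here $\Phi$ is the propagator of the linearized flow $\dot\delta=-\nabla^2_\theta\Ltrain\,\delta$. It contracts along positive-curvature directions, and the integrand may rotate and cancel. (Also, $\RIGR$ already carries the factor $\eta/4$, so the extra $\eta$ in your integrand double-counts.)

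Concretely, take $\Ltrain=\tfrac{\mu}{2}\|\theta\|^2$ with $\eta\mu<1$. The distance $\|\theta_0\|\bigl(e^{-\mu T\eta}-(1-\eta\mu)^T\bigr)$ between the GD iterate and the unmodified gradient-flow point peaks near $T\eta\approx 1/\mu$ and then shrinks. For $\eta\mu=0.1$ it is about $0.019\|\theta_0\|$ at $T=10$ and $0.014\|\theta_0\|$ at $T=20$. Yet $\|g\|$ and $\|\nabla\RIGR\|=\tfrac{\eta\mu^2}{2}\|\theta\|$ stay bounded below on any finite horizon. So your inference fails, and the literal deviation-growth clause cannot be proved without extra structure.

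The paper's proof avoids this by claiming growth only for the scalar accumulated regularizer $\int_0^{T\eta}\RIGR(\theta(s),\eta,B)\,ds$. That claim is immediate: by (A4), with the statistics bounded away from zero, the integrand lies between $\tfrac{\eta}{4}c_-$ and $\tfrac{\eta}{4}c_+$ for constants $0<c_-\le c_+$, giving $T$-linear upper and lower bounds. Either prove that statement, or add a hypothesis that makes the vector claim true. One such hypothesis is a fixed unit vector $u$ with $\langle\Phi(T\eta,s)\nabla\RIGR(\theta(s)),u\rangle\ge c>0$ for all $s\le T\eta$.

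\textbf{Step 2 (origin of the variance term).} Your account of where $\mathrm{tr}(\Sigma)/B$ comes from is also off, although you end up deferring to Smith et al., as the paper does. For successive steps on independent batches, $\mathbb{E}[\hat H_{t+1}\hat g_t]=Hg$ exactly. The two-step expansion you describe therefore gives the mean iterate only the GD regularizer $\tfrac{\eta}{4}\|g\|^2$. The identity $\mathbb{E}[\hat H\hat g]=Hg+\tfrac{1}{2B}\nabla\mathrm{tr}(\Sigma)$ requires $\hat H$ and $\hat g$ from the same batch.

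In Smith et al.'s random-shuffling argument over an epoch of $m$ distinct minibatches, the positive term arises differently. Exactly these same-batch products $\hat H_j\hat g_j$ are absent from the order-averaged cross-sum $\sum_{j<k}\hat H_k\hat g_j$, relative to the $\tfrac{(m\eta)^2}{2}Hg$ term of the flow. So it is a without-replacement effect: it carries the factor $(N-B)/(N-1)$ and takes the $\mathrm{tr}(\Sigma)/B$ form only after averaging over partitions.

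With one batch reused for all $T$ inner steps, the same products appear in excess instead. To order $\eta^2$ with $T$ fixed, the same bookkeeping gives the mean iterate the modified loss $\Ltrain+\tfrac{\eta}{4}\|g\|^2-\tfrac{(T-1)\eta}{4}\mathrm{tr}(\Sigma)/B$. So cite Eq.~22 together with its epoch and shuffling hypotheses, rather than presenting the cross-term identity as its derivation.
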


A proof appears in Appendix~\ref{app:proofs}. Two qualitative consequences follow, corresponding to the two training regimes we study. \textbf{Mini-batch reuse regime}: our ST bilevel implementation carries $\theta$ across outer iterations and reuses the same mini-batch across $T$ inner steps; the per-step gradient is deterministic within a batch and $\RIGR$ takes the GD form $(\eta/4)\|g_t\|^2$ with fixed coefficient, but the trajectory traverses the modified flow of $\tilde{\Ltrain}_{\mathrm{GD}}$ for time $T\eta$, predicting a frozen-$\phi$ loss that varies monotonically with $T$. \textbf{Full-batch reset regime}: LDS~\citep{franceschi2019learning} uses full-batch gradients and reinitializes GCN parameters at each outer iteration, so freezing $\phi$ collapses $\theta_T$ to a deterministic function of $T$ that plateaus at inner convergence and the frozen-$\phi$ loss is $T$-invariant past the plateau.

\subsection{Scope conditions}
\label{sec:framework-scope}
The frozen-$\phi$ decomposition applies under three conditions: (i) the bilevel procedure uses a gradient-based outer update with $\phi$ parameterizing the graph continuously or via reparameterized discrete variables; (ii) the graph is modified incrementally from $\Ainit$ rather than replaced from scratch; and (iii) the backbone consumes an external adjacency. LDS~\citep{franceschi2019learning} and our spatio-temporal bilevel implementation satisfy all three. IDGL~\citep{chen2020iterative} (single-level) violates (i); GEN~\citep{wang2021graph} (EM-based replacement) violates (ii); Pro-GNN~\citep{jin2020graph} produces negative clean-graph gain on Cora, leaving the share decomposition undefined (Appendix~\ref{app:nc_mechanism}). Coupled backbones such as DCRNN~\citep{li2018diffusion} embed spatial aggregation inside GRU gates, falling outside the regime where the factored Jacobian of~\citet{marisca2025oversquashing} is structurally guaranteed; adaptive backbones pose a structural exclusion of (iii), since AGCRN~\citep{bai2020adaptive} uses only an internal adaptive matrix and GraphWaveNet~\citep{wu2019graph} learns an internal one alongside the external adjacency. We therefore restrict the frozen-$\phi$ analysis to decoupled polynomial-filter backbones (DiffConv, ChebConv, MPGRU). For node classification, we evaluate on Cora and Citeseer, on which LDS has a published and validated configuration; heterophilous benchmarks~\citep{platonov2023critical} fall outside our scope: no validated bilevel GSL configuration exists, and tuning from scratch would risk the hyperparameter confound of~\citet{tori2025effectiveness}.

\paragraph{Method-agnostic complement.} For methods outside the frozen-$\phi$ scope, graph distillation provides a complementary diagnostic: transfer $\Aphi$ from a completed GSL run into vanilla training of a target backbone and measure improvement over $\Ainit$. The two diagnostics measure different quantities: distillation conflates graph modification with graph-model co-adaptation by re-training $\theta$ from scratch on $\Aphi$, while frozen-$\phi$ holds the inner-loop schedule fixed and isolates only the graph modification. Distillation has weaker resolution than frozen-$\phi$ but applies to any GSL method producing positive gain on a clean graph, and is therefore the appropriate diagnostic when frozen-$\phi$ is inapplicable (Appendix~\ref{app:distillation}).

%% file: sections/04_method.tex
\section{Experimental Setup}
\label{sec:method}

\paragraph{Bilevel formulation.} We formulate task-aligned rewiring as bilevel optimization:
\begin{equation}
\min_\phi \Lval(\theta^*(\phi), A_\phi) \quad \mathrm{s.t.} \quad \theta^*(\phi) = \arg\min_\theta \Ltrain(\theta, A_\phi),
\end{equation}
where a policy $\pi_\phi$ reweights the adjacency: $A_\phi = A \odot \mathrm{softmax}_{\mathrm{row}}(W_\phi)$, preserving sparsity and degree normalization in the spatio-temporal regime. We use~\citet{liu2019darts}'s first-order approximation ($\xi=0$): the inner-loop $\theta$ updates treat $A_\phi$ as detached from the computation graph, and the outer gradient is evaluated as $\nabla_\phi \Lval(\theta, A_\phi)$ at the resulting $\theta$ without backpropagating through the inner trajectory. Training has two phases: warmup ($E_w$ epochs, $\theta$ only) then bilevel (alternating $T$ inner steps for $\theta$ and one outer step for $\phi$). For node classification we use LDS~\citep{franceschi2019learning} with its Bernoulli edge parameterization, full-batch inner updates, and GCN inner parameters reset per outer iteration. The full algorithm is in Appendix~\ref{app:algorithm}.

\paragraph{Datasets and backbones.} Spatio-temporal experiments use the Torch Spatiotemporal framework~\citep{cini2022tsl} on six benchmarks (PeMS04/07/08 flow, METR-LA and PeMS-BAY speed, AirQuality). Primary backbone is DiffConv ($K{=}2$ bidirectional diffusion, the standard configuration); ChebConv and MPGRU are decoupled secondary backbones, DCRNN~\citep{li2018diffusion} is the coupled backbone, and AGCRN and GraphWaveNet are adaptive references (Section~\ref{sec:framework-scope}). Node classification uses the GSLB fork~\citep{li2023gslb} with LDS on Cora and Citeseer (homophilous, the regime where LDS was validated); the GCN inner backbone uses GSLB's per-dataset tuned configuration (hidden $128$ on Cora, $64$ on Citeseer).

\paragraph{Protocol.} ST runs report 5 seeds, NC runs 10 seeds, with $T = 10$ inner steps for ST (100 epochs) and $T = 5$ for NC (GSLB default). We report mean $\pm$ std with paired $t$-tests for channel decompositions. Full setup, seed lists, and statistical reporting conventions are in Appendices~\ref{app:datasets},~\ref{app:hyperparam},~\ref{app:reproducibility}.

%% file: sections/05_findings.tex
\section{Findings}
\label{sec:findings}

\subsection{Does spatial rewiring improve spatio-temporal forecasting?}
\label{sec:find-rewiring}

Across six datasets with the DiffConv backbone (full results in Appendix Table~\ref{tab:survey}), bilevel achieves $3.8$--$6.4\%$ MAE improvement on flow datasets (PeMS04/07/08) with non-overlapping seed distributions, while static rewiring methods (FoSR, BORF, SDRF, GTR) are within noise of vanilla on all six datasets, consistent with prior work on static rewiring~\citep{tori2025effectiveness}. On speed datasets, the pattern diverges: PeMS-BAY shows a modest $-0.7\%$ while METR-LA shows $+2.5\%$ (bilevel actively harms performance). We return to this split in Section~\ref{sec:find-scope}.

\subsection{Training dynamics dominate bilevel gains on flow forecasting}
\label{sec:find-decomp}

We apply the frozen-$\phi$ decomposition to the spatio-temporal regime with DiffConv across three flow-signal datasets and to the node classification regime with LDS~\citep{franceschi2019learning} on Cora and Citeseer. Figure~\ref{fig:hero} shows the schematic and channel breakdown. Table~\ref{tab:decomp} reports the three-way comparison and resulting channel shares. The principal finding is that the inner channel accounts for $78$--$101\%$ of the bilevel gain on flow-signal datasets where bilevel improves over vanilla, and $44\%$ on Citeseer under a distinct Bernoulli edge-level parameterization.

\begin{figure}[h!]
\centering
\begin{subfigure}[c]{0.5\textwidth}
  \includegraphics[width=\textwidth]{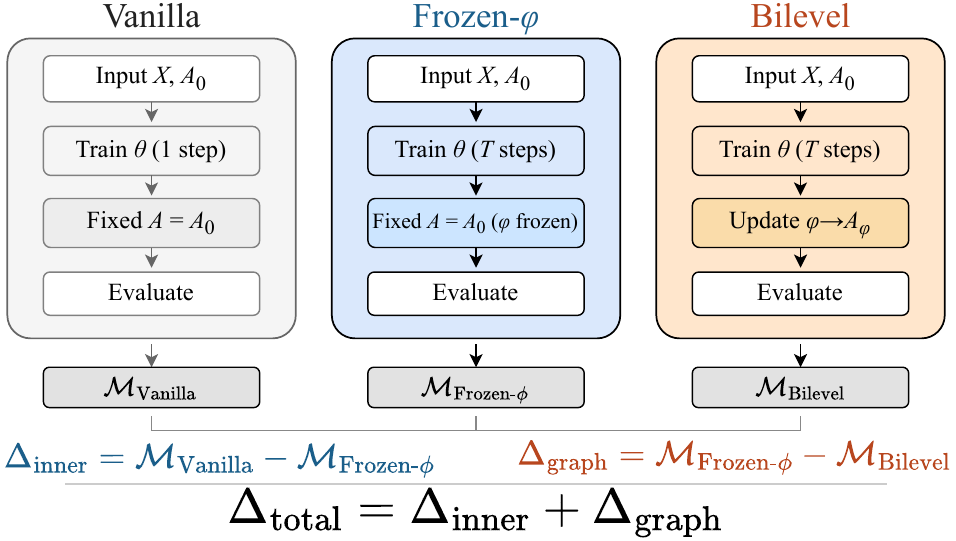}
\end{subfigure}
\hfill
\begin{subfigure}[c]{0.48\textwidth}
  \includegraphics[width=\textwidth]{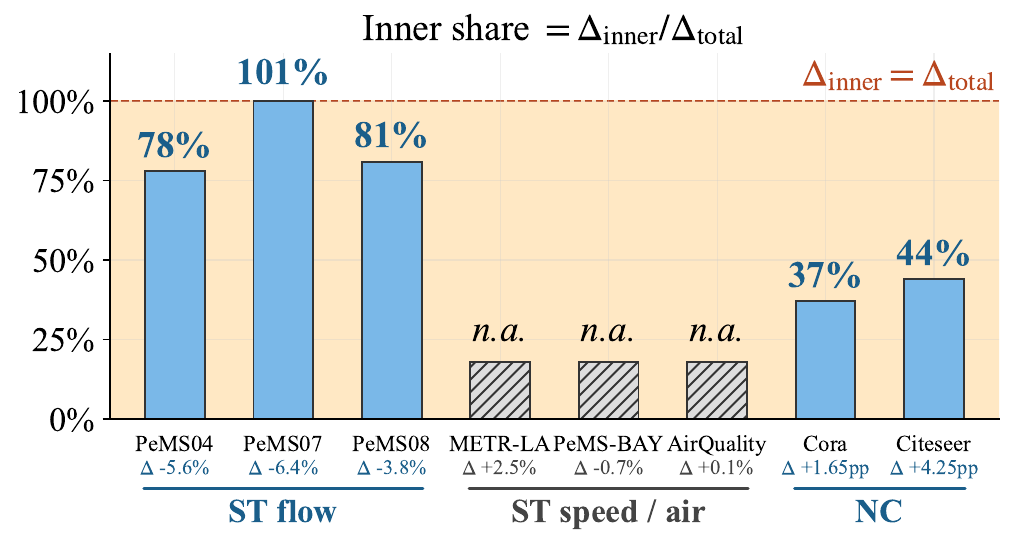}

\end{subfigure}
\caption{\textbf{Frozen-$\phi$ isolates training dynamics from graph modification.} (Left)~The control disables the outer loop while retaining the $T$-step inner loop, decomposing the bilevel gain as $\Delta_{\mathrm{total}} = \Delta_{\mathrm{inner}} + \Delta_{\mathrm{graph}}$, where $\Delta_{\mathrm{inner}} = \mathcal{M}_\mathrm{Vanilla} - \mathcal{M}_\mathrm{Frozen\text{-}\phi}$ measures training dynamics with the graph held constant and $\Delta_{\mathrm{graph}} = \mathcal{M}_\mathrm{Frozen\text{-}\phi} - \mathcal{M}_\mathrm{Bilevel}$ measures graph modification with the inner schedule held constant. (Right)~Flow forecasting: inner channel accounts for 78--101\%; LDS on Cora/Citeseer: more balanced (37--44\%). Hatched: decomposition n/a (Table~\ref{tab:decomp}). $\Delta$: relative MAE (ST); pp (NC).}
\label{fig:hero}
\end{figure}

\begin{table}[h!]
\centering
\caption{Main decomposition. ST: test MAE (DiffConv). NC: accuracy (LDS+GCN). Inner share $= \Delta_{\mathrm{inner}}/\Delta_{\mathrm{total}}$ per~\eqref{eq:decomp}; n/a denotes datasets where $\Delta_{\mathrm{total}} \leq 0$ (METR-LA, AirQuality) or where the inner and graph channels have opposite signs (PeMS-BAY); see Section~\ref{sec:find-scope}. PeMS07 frozen-$\phi$ vs.\ bilevel within seed noise ($101\%$). Cora bootstrap CI wide (small total gain, App.~\ref{app:nc_full}).}
\label{tab:decomp}
\small
\begin{tabular}{llcccccc}
\toprule
Regime & Dataset & Vanilla & Frozen-$\phi$ & Bilevel & Inner & Graph & $p_{\mathrm{total}}$ \\
\midrule
\multirow{6}{*}{ST}
 & PeMS04    & $20.925_{\pm.043}$ & $20.011_{\pm.034}$ & $19.754_{\pm.036}$ & 78\% & 22\% & $<0.001$ \\
 & PeMS07    & $22.277_{\pm.038}$ & $20.839_{\pm.021}$ & $20.848_{\pm.035}$ & 101\% & $-1\%$  & $<0.001$ \\
 & PeMS08    & $15.621_{\pm.057}$ & $15.146_{\pm.079}$ & $15.031_{\pm.071}$ & 81\% & 19\% & $<0.001$ \\
 & METR-LA   & $3.240_{\pm.006}$  & $3.292_{\pm.010}$  & $3.323_{\pm.016}$  & n/a  & n/a & n/a \\
 & PeMS-BAY  & $1.665_{\pm.002}$  & $1.670_{\pm.006}$  & $1.654_{\pm.002}$  & n/a  & n/a & n/a \\
 & AirQuality & $27.631_{\pm.214}$ & $27.863_{\pm.302}$ & $27.652_{\pm.229}$ & n/a & n/a & n/a \\
\midrule
\multirow{2}{*}{NC}
 & Cora      & $81.10_{\pm.67}\%$ & $81.71_{\pm.55}\%$ & $82.75_{\pm.84}\%$ & 37\% & 63\% & $<0.001$ \\
 & Citeseer  & $69.28_{\pm.88}\%$ & $71.13_{\pm.54}\%$ & $73.53_{\pm.83}\%$ & 44\% & 56\% & $<0.0001$ \\
\bottomrule
\end{tabular}
\end{table}

\paragraph{The inner channel accounts for $78$--$101\%$ on flow datasets.} Freezing $\phi$ while retaining $T = 10$ accounts for 78\%, 101\%, and 81\% of the full bilevel improvement on PeMS04, PeMS07, and PeMS08 respectively. PeMS07 is extreme: frozen-$\phi$ and bilevel test MAEs differ by $0.009$, well within the per-seed standard deviation ($\sigma \approx 0.035$, Table~\ref{tab:decomp}); the graph-channel point estimate is statistically indistinguishable from zero. The seed distributions for frozen-$\phi$ and vanilla are clearly separated on PeMS07 and PeMS08 and nearly separated on PeMS04 (Appendix~\ref{app:seed_distributions}). On METR-LA and AirQuality bilevel does not improve over vanilla ($\Delta_{\mathrm{total}} \leq 0$). On PeMS-BAY the bilevel improvement over vanilla is small ($-0.011$ MAE) and the inner and graph channels carry opposite signs (frozen-$\phi$ is slightly worse than vanilla, while bilevel recovers); we therefore mark its decomposition as n/a in Table~\ref{tab:decomp} and examine all three scope-boundary cases in Section~\ref{sec:find-scope}.

We additionally evaluate the frozen-$\phi$ decomposition on node classification with LDS~\citep{franceschi2019learning} on Cora and Citeseer. The choice of LDS is structural: frozen-$\phi$ requires gradient-based bilevel optimization, incremental modification of an initial adjacency, and positive empirical gain on the clean graph (Section~\ref{sec:framework-scope}). Among published GSL methods for node classification, only LDS satisfies all three; GEN~\citep{wang2021graph} uses EM-based replacement of the adjacency rather than incremental modification, Pro-GNN~\citep{jin2020graph} is designed for adversarial defense and produces negative clean-graph gain ($\Delta_{\text{total}} = -3.66$ pp on Cora), and IDGL~\citep{chen2020iterative} is single-level. The graph-distillation complement (Appendix~\ref{app:distillation}) extends the analysis to methods outside this scope. We treat NC as a parameterization-robustness check across a distinct training regime (full-batch reset vs.\ mini-batch reuse) and ST as the primary regime.

\subsection{Two training regimes produce distinct mechanistic signatures}
\label{sec:find-mechanism}

Varying $T$ provides mechanistic evidence for Proposition~\ref{prop:igr}: the mini-batch reuse regime predicts the frozen-$\phi$ loss depends on $T$; the full-batch reset regime predicts it is invariant past inner convergence. We sweep $T \in \{1, 3, 5, 10, 20\}$ in both regimes (Figure~\ref{fig:tsweep}; numerical values in Appendix~\ref{app:hyperparam}).

\paragraph{Mini-batch reuse drives $T$-dependence; full-batch reset yields $T$-invariance.} On PeMS04, three predictions of Proposition~\ref{prop:igr} hold: at $T = 1$ the frozen-$\phi$ MAE matches vanilla within $0.001$, well below the per-seed noise floor of $\sigma \approx 0.04$ (Table~\ref{tab:hp_sens}); frozen-$\phi$ MAE decreases monotonically with $T$ through $T = 10$ as $\RIGR \propto T$ grows; full bilevel tracks frozen-$\phi$ at every $T$ offset by a small approximately $T$-independent graph-channel contribution. On Cora and Citeseer the frozen-$\phi$ accuracy is instead nearly flat across $T \in \{3, 5, 10, 20\}$ while full bilevel maintains higher accuracy than frozen-$\phi$ at every $T$ (clearly improving with $T$ on Citeseer; within seed noise on Cora), the predicted full-batch reset signature (per-outer-iteration parameter reset collapses the inner trajectory to a $T$-invariant function past convergence).

\begin{figure}[h!]
\centering
\includegraphics[width=1.0\textwidth]{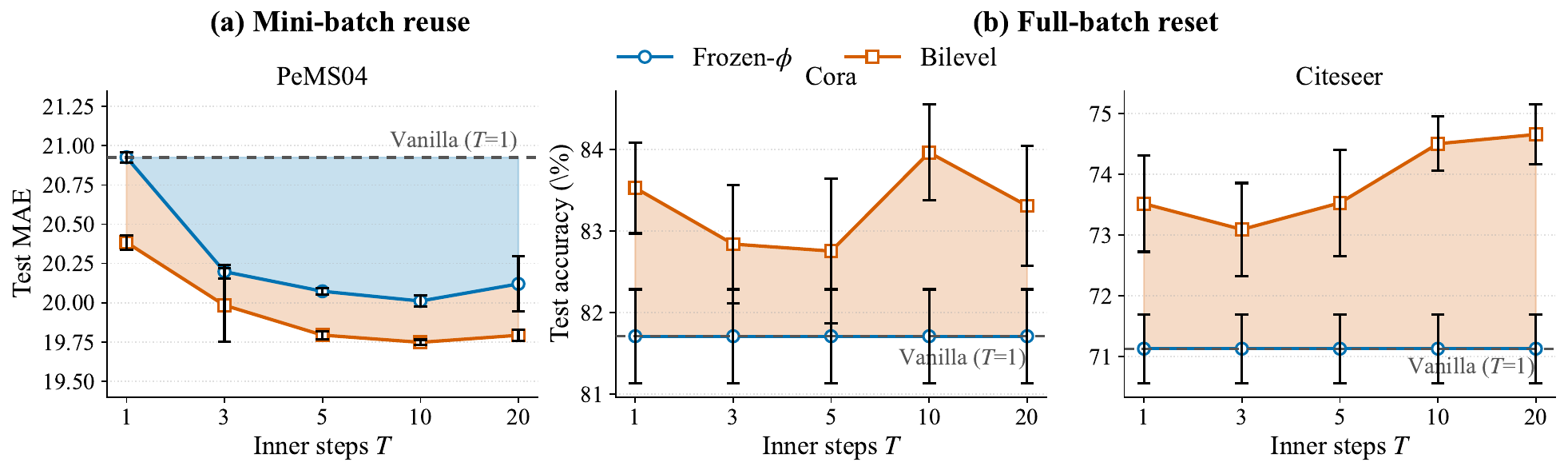}
\caption{\textbf{Two training regimes produce distinct mechanistic signatures.} (a) On PeMS04, frozen-$\phi$ MAE decreases monotonically with $T$ as $\RIGR \propto T$ accumulates within reused mini-batches; at $T = 1$ frozen-$\phi$ collapses to vanilla. (b) On Cora and Citeseer, frozen-$\phi$ accuracy is $T$-invariant past inner convergence: full-batch gradients with per-outer-iteration parameter reset render the inner trajectory $T$-independent. Both signatures match the predictions of Proposition~\ref{prop:igr}.}
\label{fig:tsweep}
\end{figure}

\subsection{Triangulation and scope across regimes}
\label{sec:find-scope}

Three independent diagnostics, edge corruption on Cora, alternative training schedules on PeMS04, and convergent triangulation on PeMS07, corroborate the inner-channel attribution and delineate where the graph channel becomes operative. The unified six-dataset interpretation through three preconditions on the predefined adjacency is developed in Section~\ref{sec:discussion} and Appendix~\ref{app:framework}.

\paragraph{Graph quality determines channel balance.} We hold the node classification parameterization and training regime fixed and vary the quality of the initial graph (precondition~(i) above): starting from Cora, we randomly rewire a fraction $r \in \{0, 0.10, 0.25, 0.50, 0.75\}$ of edges, preserving total edge count, and run the three-way decomposition at each level.

\begin{figure}[h!]
\centering
\includegraphics[width=0.94\textwidth]{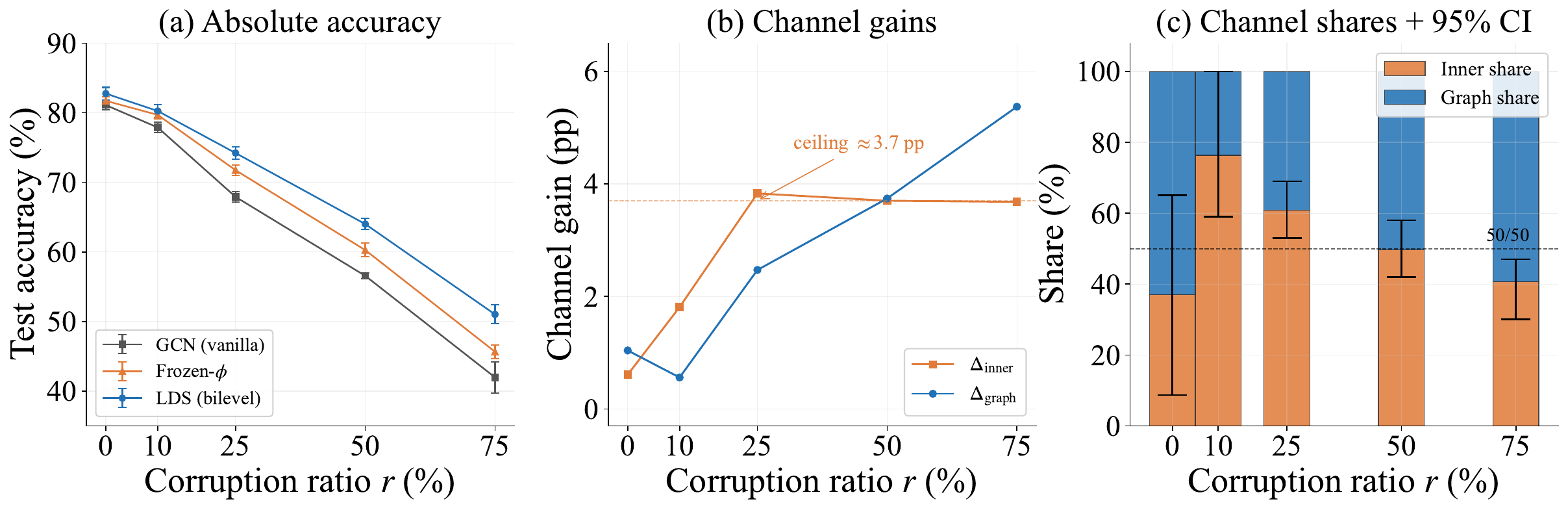}
\caption{Three-way decomposition under edge corruption on Cora. Inner channel saturates near 3.7 pp for $r \geq 0.25$; graph channel grows approximately linearly with $r$ once corruption exceeds the $r = 0$ noise floor. 50/50 crossover near $r = 0.50$.}
\label{fig:corruption}
\end{figure}

The inner channel grows from 0.61 pp at $r = 0$ to 1.81 pp at $r = 0.10$ and plateaus near 3.7 pp across $r \in \{0.25, 0.50, 0.75\}$ (CV $=$ 2.2\% across the plateau). The graph channel grows approximately linearly from 0.56 pp at $r = 0.10$ to 5.37 pp at $r = 0.75$ (Pearson 0.98 with $r$ over $r \geq 0.10$). The channel share crosses 50/50 near $r = 0.50$ and ends graph-dominated at $r = 0.75$; all channel tests reach $p < 0.001$ at every $r \geq 0.10$. On clean Cora the inner channel is marginal ($p = 0.069$) and the graph channel reaches $p < 0.005$. Graph quality, operationalized through controlled corruption, directly determines the channel balance within a single regime: the inner channel saturates at an architecture-dependent ceiling while the graph channel scales with structural slack. The same logic applies to ST: distance-based adjacencies are not optimized for the forecasting task and exhibit large task-initial mismatch, similar to the corrupted Cora regime.

\paragraph{Weight decay and end-to-end joint training do not replicate the inner channel.} Explicit $\ell_2$ regularization on PeMS04 with $\lambda \in \{10^{-4}, 10^{-3}, 10^{-2}\}$ does not close the gap (Appendix~\ref{app:wd_sweep}, Table~\ref{tab:wd}): even the best setting ($\lambda = 10^{-4}$) is marginally worse than vanilla, while frozen-$\phi$ at $T = 10$ achieves an approximately $0.91$ MAE improvement over vanilla. The IGR mechanism of Proposition~\ref{prop:igr} involves gradient norms and stochastic variance rather than parameter norms. A $10\times$-compute vanilla baseline ($1000$ epochs) closes part of the inner-channel gap but leaves the bilevel residual unaffected (Appendix~\ref{app:compute_matched}), supporting the trajectory-length interpretation of IGR. An end-to-end baseline that jointly optimizes $W_\phi$ and $\theta$ on the training loss without a separate outer loop also fails: on PeMS07 bilevel achieves $-1.43$ MAE while E2E achieves $-0.07$, a twenty-fold gap that cannot be attributed to graph modification (which contributes a near-zero share on PeMS07), leaving the $T$-step inner schedule itself as the operative mechanism.

\paragraph{PeMS07: three diagnostics converge on pure training dynamics.} We observe a suggestive anti-correlation between modification magnitude and improvement across the six datasets ($\rho = -0.77$, $p = 0.072$, $N = 6$, marginal at conventional thresholds): PeMS07 shows the largest gain ($\Delta$ MAE $= -6.4\%$) yet the smallest modifications (99\% of edges have $|\Delta w| < 0.1$). Three independent diagnostics on PeMS07 all yield small graph-channel point estimates: frozen-$\phi$ accounts for 101\% of the bilevel gain, graph distillation recovers only 1\%, and end-to-end reweighting is $20\times$ weaker than bilevel (Figure~\ref{fig:triangulation_pems07} in Appendix~\ref{app:triangulation}). Bilevel's success regime on flow forecasting is selective minimal intervention, not aggressive structural rewiring (Appendix~\ref{app:mod_magnitude}).

\paragraph{The inner channel itself fails on speed signals.} Bilevel is neutral on DCRNN and harmful on METR-LA (Appendix~\ref{app:backbone_scope}); frozen-$\phi$ on METR-LA is also worse than vanilla ($\Delta = +0.052$ MAE, $\approx 5\sigma_F$) and on PeMS-BAY marginally worse ($\Delta = +0.005$, within $1\sigma_F$), so the inner channel itself fails on speed signals rather than being canceled by a harmful outer update. Per-distance Jacobian measurements (Appendix~\ref{app:jacobian_cross}) show bilevel relaxes the short-to-long sensitivity ratio on five of six datasets, with METR-LA the exception (vanilla $95\times$ vs.\ bilevel $335\times$), consistent with bilevel actively concentrating propagation in a regime where it harms forecasting.

%% file: sections/06_discussion.tex
\section{Discussion}
\label{sec:discussion}

\paragraph{When does bilevel rewiring help?} The six-dataset gain pattern admits a unified explanation through three preconditions on the predefined adjacency: \textbf{(i) structural slack} ($A_{\text{init}}$ is suboptimal for the task-relevant structure), \textbf{(ii) loss identifiability} (the validation loss has informative gradients in $\phi$ near $A_{\text{init}}$), and \textbf{(iii) parametrization expressiveness} (the policy class can represent the true dependency); formalized in Appendix~\ref{app:framework_def}, per-dataset mapping in Table~\ref{tab:preconditions}. Flow datasets satisfy all three; speed datasets fail (i)--(ii) because kinematic-wave physics~\citep{lighthill1955kinematic} aligns the distance graph with the dynamics and per-node embeddings absorb the residual variance; AirQuality fails (iii) because the standard distance-based adjacency is block-diagonal with zero inter-city edges (Appendix~\ref{app:airquality_anatomy}). The framework correctly predicts the sign of the bilevel gain on all six benchmarks, including the three on which bilevel does not improve over the clean-graph baseline; Appendix~\ref{app:decision_tree} summarizes the resulting practitioner decision flow.

\paragraph{Methodological recommendation.} We propose the frozen-$\phi$ ablation as a standard control in graph structure learning. Concretely, any paper reporting gains from a learned graph should report:
\begin{enumerate}
[label=(\roman*),leftmargin=2em,itemsep=0pt,topsep=2pt]
\item the frozen-$\phi$ test metric under the same $T$-step inner schedule as the full bilevel method,
\item the inner share $\Delta_{\mathrm{inner}}/\Delta_{\mathrm{total}}$ from~\eqref{eq:decomp},
\item a $T$-sweep checking whether the frozen-$\phi$ result is $T$-dependent or $T$-invariant.
\end{enumerate}
Absent these controls, reported GSL gains are confounded with training dynamics introduced by the inner optimization schedule. For methods outside the scope of the frozen-$\phi$ control, graph distillation provides a method-agnostic complement (Appendix~\ref{app:distillation}).

\paragraph{Relation to bilevel optimization beyond GSL.} Implicit regularization in bilevel optimization has been studied in meta-learning~\citep{vicol2022implicit,rajeswaran2019meta}. Our contribution is not to discover this phenomenon but to (i) show that it explains a substantial share of gains attributed to graph modification in bilevel GSL, where this attribution has not been previously questioned, and (ii) provide a diagnostic that GSL practitioners can run on their own methods. The diagnostic principle, isolating outer-parameter contribution by freezing the outer parameter while preserving inner-loop training, potentially applies to bilevel optimization beyond GSL, though we leave such extensions to future work.

\paragraph{Spectral metrics dissociate from task gain.} A natural alternative explanation for the divergence between bilevel and static rewiring is spectral: that bilevel optimizes a graph-spectral objective (larger spectral gap $\lambda_2$ or narrower effective spectral support) that static methods do not match. We tested this directly on the LCC-restricted subgraph that every method shares (Appendix~\ref{app:spectral_negative}). The data shows a clean dissociation. GTR is a strong spectral-gap maximizer: $\lambda_2$ increases by $2.6$--$15.5\times$ on every benchmark with a numerically resolved baseline (Table~\ref{tab:lambda2_main}), yet GTR matches vanilla on test MAE. Bilevel decreases $\lambda_2$ on four of six datasets and produces the largest forecasting gains on flow (between $-3.8\%$ and $-6.4\%$) without raising $\lambda_2$. Spectral gap optimization is therefore decoupled from task improvement on these benchmarks.

\begin{table}[h!]
\centering
\caption{Spectral gap $\lambda_2$ ($\times 10^3$) on the LCC subgraph (identical node set across methods). GTR maximizes $\lambda_2$ by $2.6$--$15.5\times$ on every benchmark with a numerically resolved baseline (PeMS07 $\lambda_2$ is below display precision so the GTR multiplier is ill-defined) yet matches vanilla MAE; bilevel decreases $\lambda_2$ on four of six datasets while producing the largest gains on flow. Bilevel reports mean $\pm$ std across seeds. Full table including FoSR/BORF/SDRF and $W_\epsilon$ measurements in Appendix~\ref{app:spectral_negative}.}
\label{tab:lambda2_main}
\small
\begin{tabular}{lccccc}
\toprule
Dataset & Original & GTR & Bilevel & $\Delta \lambda_2$ bilevel & $\Delta$ MAE bilevel \\
\midrule
PeMS04   & $6.0$ & $55.0$ & $3.0_{\pm 2.0}$ & $\downarrow$ & $-5.6\%$ \\
PeMS07   & $0.0$ & $8.0$  & $0.0_{\pm 0.0}$ & $-$         & $-6.4\%$ \\
PeMS08   & $2.0$ & $31.0$ & $1.0_{\pm 1.0}$ & $\downarrow$ & $-3.8\%$ \\
METR-LA  & $8.0$ & $21.0$ & $5.0_{\pm 1.0}$ & $\downarrow$ & $+2.5\%$ \\
PeMS-BAY & $5.0$ & $13.0$ & $1.0_{\pm 1.0}$ & $\downarrow$ & $-0.7\%$ \\
AirQuality & $1.0$ & $11.0$ & $2.0_{\pm 0.0}$ & $\uparrow$  & $+0.1\%$ \\
\bottomrule
\end{tabular}
\end{table}

\paragraph{Open questions.} Several questions remain open. We highlight two: first, whether the three-precondition framework predicts the sign of the graph-channel contribution on heterophilous benchmarks (e.g., Squirrel, Chameleon, Actor); no validated bilevel GSL configuration currently exists in this regime, and applying LDS to heterophilous benchmarks without tuning would risk reproducing the failure mode of~\citet{tori2025effectiveness}. Second, whether discrete-edge bilevel methods designed for ST forecasting~\citep{shang2021discrete} fall under the mini-batch reuse regime (predicting $T$-dependent inner-channel gains as in our ST setting) or the full-batch reset regime (predicting $T$-invariant frozen-$\phi$ as in NC); the answer determines whether their reported gains derive from the same training-dynamics mechanism we identify here.

\paragraph{Limitations.} Our findings do not claim the over-squashing bounds of~\citet{topping2022understanding,digiovanni2023oversquashing,marisca2025oversquashing} are incorrect; they bound a different quantity than the practical constraint we identify in spatio-temporal forecasting. Consistent with the diagnostic framing, we identify the confound and provide tools for measuring it; the path to mitigation is left to future work. Our decomposition measures contribution, not necessity, of each channel. NC evaluation is limited to Cora and Citeseer. Bilevel incurs $\sim 15\times$ training overhead on PeMS04 DiffConv (Appendix~\ref{app:efficiency}); inference cost is unchanged.

%% file: sections/07_conclusion.tex
\section{Conclusion}
\label{sec:conclusion}

The frozen-$\phi$ control isolates the contribution of graph modification within bilevel GSL by freezing the graph while retaining the inner-loop training schedule. On six benchmarks, the inner channel matches or exceeds the full bilevel pipeline on spatio-temporal flow forecasting and accounts for a substantial share of the gain on node classification, indicating that inner-loop training dynamics contribute substantially more to bilevel GSL gains than is typically credited. We propose frozen-$\phi$ as a standard ablation, with graph distillation as a method-agnostic complement; the diagnostic principle of freezing the outer parameter while preserving inner-loop training may apply to bilevel optimization beyond GSL.

%% file: appendix_main.tex
\newpage
\appendix

\input{appendix_01_setup}
\input{appendix_02_inner_channel}
\input{appendix_03_graph_channel}
\input{appendix_10_spectral_negative}

\input{appendix_04_architecture}
\input{appendix_05_triangulation}

\input{appendix_06_node_classification}
\input{appendix_07_per_horizon}

\input{appendix_08_R}
\input{appendix_09_efficiency_repro}

%% file: appendix_01_setup.tex
\section{Algorithm}
\label{app:algorithm}

\begin{algorithm}[h!]
\caption{First-Order Bilevel Rewiring for STGNNs}
\label{alg:bilevel}
\begin{algorithmic}[1]
\REQUIRE Spatial graph $A$, STGNN $f_\theta$, policy $\pi_\phi$, warmup epochs $E_w$, total epochs $N_{\mathrm{ep}}$, inner steps $T$
\STATE Initialize $\theta$, $\phi$
\FOR{epoch $= 1, \ldots, N_{\mathrm{ep}}$}
\FOR{batch $(X, Y)$ in training set}
\STATE $A_\phi \leftarrow \pi_\phi(A)$ \COMMENT{Reweight spatial graph}
\FOR{$t = 1, \ldots, T$}
\STATE $\theta \leftarrow \theta - \eta_\theta \nabla_\theta \Ltrain(f_\theta(X, A_\phi), Y)$ \COMMENT{Inner step on same batch}
\ENDFOR
\IF{epoch $> E_w$}
\STATE Sample validation batch $(X_v, Y_v)$
\STATE Compute $\nabla_\phi \Lval(f_\theta(X_v, A_\phi), Y_v)$ \COMMENT{$\xi=0$; $\theta$ treated as constant w.r.t.\ $\phi$}
\STATE $\phi \leftarrow \phi - \eta_\phi \nabla_\phi \Lval$ \COMMENT{Outer step}
\ENDIF
\ENDFOR
\ENDFOR
\RETURN Trained $\theta$, learned $A_\phi$
\end{algorithmic}
\end{algorithm}

The first-order approximation at the outer step corresponds to $\xi=0$ in the DARTS notation~\citep{liu2019darts}: $A_\phi$ is detached from the computation graph during the $T$ inner $\theta$ updates (so $\theta$ is not tracked as a function of $\phi$), and the outer gradient is computed via a fresh forward pass at the post-inner $\theta$, treating $\theta$ as constant. This avoids the Hessian-vector product required by second-order bilevel differentiation.

The frozen-$\phi$ control uses the identical procedure but skips the outer-step block, so $\phi$ is never updated and the graph remains at $\Ainit$ throughout training. This preserves the $T$-step inner optimization schedule while isolating the effect of graph modification. All $T$ inner steps operate on the same mini-batch in the spatio-temporal implementation: the training batch is sampled once per outer iteration and reused across inner steps, so $T = 10$ does not expose the model to additional data relative to $T = 1$. For LDS on node classification, the inner loop uses full-batch gradients and the GCN inner parameters are reinitialized at each outer iteration (full-batch reset regime in Section~\ref{sec:framework-igr}). The frozen-$\phi$ initialization for LDS sets $\phi \leftarrow A$ following the original LDS protocol~\citep{franceschi2019learning}: the Bernoulli edge probabilities are held at the original adjacency, and only the GCN parameters are optimized.

\section{Proofs}
\label{app:proofs}

\subsection{Proof of Proposition~\ref{prop:igr}}

\begin{assumption}[Regularity conditions]
\label{ass:regularity}
(A1) $\Ltrain(\cdot; \phi)$ is $\beta$-smooth: $\|\nabla^2_\theta \Ltrain\| \leq \beta$ for all $\theta$ on the trajectory. (A2) The learning rate satisfies $\eta < 1/\beta$. (A3) Within $T$ inner steps, $\|\theta_T - \theta_0\| = O(T\eta)$ remains within a region where $\Ltrain$ is $C^3$-smooth with bounded third derivatives, ensuring validity of the $O(\eta^2)$ truncation in the backward error expansion (as invoked by~\citet{barrett2021implicit} and~\citet{smith2021origin}). (A4) Per-step stochastic gradient statistics $\|g_t\|^2$ and $\mathrm{tr}(\Sigma_t)/B$ are bounded.
\end{assumption}

\begin{proof}
\citet{barrett2021implicit} establish via backward error analysis that a single gradient descent step with step size $\eta$ on $\Ltrain$ maps to the exact flow of a modified loss $\tilde{\Ltrain} = \Ltrain + (\eta/4)\|\nabla \Ltrain\|^2 + O(\eta^2)$. \citet{smith2021origin} extend this to SGD and show (their Eq. 22) that the modified loss for stochastic gradient descent with per-batch gradients $\hat{g}$ and batch size $B$ is $\tilde{\Ltrain}_{\mathrm{SGD}} = \Ltrain + (\eta/4)(\|g\|^2 + \mathrm{tr}(\Sigma)/B) + O(\eta^2)$, where $g$ is the per-step mean of $\hat{g}$ and $\Sigma$ denotes the per-example gradient covariance (so $\hat{g}$ has covariance $\Sigma/B$). With $\phi$ held fixed within the inner loop, the per-step $\RIGR$ is a functional of the inner-loop gradient trajectory at the current $\phi$, and outer-loop updates to $\phi$ do not change its functional form: freezing $\phi$ does not alter the per-step implicit regularization mechanism.

For $T$ consecutive steps with fixed $\phi$, the trajectory tracks the continuous flow of $\tilde{\Ltrain}_{\mathrm{SGD}}$ for time $T\eta$. The per-step regularization coefficient $\eta/4$ does not change with $T$; rather, the trajectory traverses the modified loss landscape for proportionally longer time, and the path integral $\int_0^{T\eta} \RIGR(\theta(s), \eta, B)\, ds$ grows with $T$. Under Assumption~\ref{ass:regularity} (A4), this integral is bounded above and below by $T$-linear expressions in the gradient statistics.
\end{proof}

\begin{remark}[Mini-batch reuse amplification]
When all $T$ inner steps share the same mini-batch, the per-step gradient is deterministic and $\RIGR$ reduces to the GD form $(\eta/4)\|g_t\|^2$ at fixed coefficient. Across outer iterations the mini-batch changes and the variance term contributes through the SGD form. The trajectory under fixed-batch inner steps follows the modified flow of $\tilde{\Ltrain}_{\mathrm{GD}}$ for time $T\eta$, so $\theta_T$ drifts $T$-monotonically away from the gradient flow of $\Ltrain$, consistent with the mini-batch reuse regime in the main text.
\end{remark}

\subsection{Proof of Proposition~\ref{prop:jac}}

\begin{proposition}[Uniform spectral tightening of the spatial Jacobian factor]
\label{prop:jac}
Let $G' = (V, E', A')$ satisfy $\lambda_2(\tilde{L}') \geq \lambda_2(\tilde{L})$ and $\delta' \leq \delta$ (degree heterogeneity does not increase). Then the effective spectral mixing rate of the spatial MP matrix satisfies $\rho_{\mathrm{eff}}' \leq \rho_{\mathrm{eff}}$, and for any $K \geq 1$ the spatial factor admits a uniformly tighter spectral residual bound: $\sup_{u,v} |(S'^{K})_{uv} - S^{\infty}_{uv}| \leq C (\rho_{\mathrm{eff}}')^{K} \leq C \rho_{\mathrm{eff}}^{K}$, where $S^{\infty}$ denotes the stationary limit of $S^K$ on the LCC and $C$ is a constant depending on the spectral geometry. Consequently, the entrywise variation of $(S^{L_S L})_{uv}$ across node pairs is reduced under $G'$.
\end{proposition}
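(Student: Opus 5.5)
The plan is to reduce the statement to a standard spectral decomposition of a symmetric (or symmetrizable) Markov-type operator on the LCC, and then compare the two graphs through their second eigenvalue and degree heterogeneity. Since $A$ is assumed symmetric (Section~\ref{sec:framework-jac}), write $S = (\theta_u/\theta_m) I + c_1 D + c_2 \hat{A}$ with $D = \mathrm{diag}(\hat{A}^\top \mathbf{1})$. I will first normalize $S$ so that its top eigenvalue is $1$. This amounts to dividing by $\|S\|$, which is harmless because $B_S$ absorbs scalars. I then write $S = D^{1/2} M D^{-1/2}$, where $M$ is a symmetric matrix that shares its spectrum with $I - \alpha \tilde{L}$ up to a diagonal perturbation governed by the degree heterogeneity $\delta$, for a fixed $\alpha$ depending on $c_1, c_2, \theta_u/\theta_m$. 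The definition I will adopt is $\rho_{\mathrm{eff}} := \max\{|1-\alpha\lambda_2(\tilde{L})|, |1-\alpha\lambda_{\max}(\tilde{L})|\} + \kappa\delta$. This is the second-largest singular value of $M$ bounded via Weyl's inequality, with $\kappa$ depending only on $c_1$.

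\textbf{Step 1 (mixing-rate monotonicity).} The first part of the claim is $\rho_{\mathrm{eff}}' \le \rho_{\mathrm{eff}}$. The degree term decreases because $\delta' \le \delta$. For the spectral term, $\lambda_2' \ge \lambda_2$ lowers $|1-\alpha\lambda_2|$ as long as $\alpha \lambda_2 \le 1$, which I will impose as the step-size-type regime on $c_2$. The top term $|1-\alpha\lambda_{\max}|$ is controlled because $\lambda_{\max}(\tilde{L}) \le 2$ for the normalized Laplacian. Under the standing lazy-walk condition $\alpha \le 1/2$ it is therefore dominated by the $\lambda_2$ term, so $\rho_{\mathrm{eff}}$ depends monotonically on $\lambda_2$ alone.

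\textbf{Step 2 (residual bound).} Expand $M^K = \psi_1\psi_1^\top + \sum_{j\ge2} \mu_j^K \psi_j\psi_j^\top$. Define $S^\infty = D^{1/2}\psi_1\psi_1^\top D^{-1/2}$, the stationary limit on the LCC. Then
$|(S^K - S^\infty)_{uv}| \le \sqrt{d_u/d_v}\,\rho_{\mathrm{eff}}^K \sum_{j\ge 2}|\psi_j(u)\psi_j(v)| \le \sqrt{d_{\max}/d_{\min}}\,\rho_{\mathrm{eff}}^K$,
where the last inequality uses Cauchy--Schwarz and the orthonormality of the eigenbasis. This gives $C = \sqrt{d_{\max}/d_{\min}}$, the "spectral geometry" constant. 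Applying the same argument to $G'$ and then Step 1 yields the chain $C(\rho_{\mathrm{eff}}')^K \le C\rho_{\mathrm{eff}}^K$. Here I take $C$ as the maximum of the two degree ratios, which the assumption $\delta'\le\delta$ allows.

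\textbf{Step 3 (entrywise variation).} Set $K = L_S L$. The variation $\max_{u,v}(S^K)_{uv} - \min_{u,v}(S^K)_{uv}$ is bounded by the variation of $S^\infty$ plus twice the residual sup. I interpret "variation is reduced" as this upper bound being smaller under $G'$, because the residual shrinks. I will also note that $S^\infty$ has entries determined by degrees, so its variation is likewise controlled by $\delta' \le \delta$.

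\textbf{Main obstacle.} The delicate step is Step 1. The diagonal term $c_1 D$ destroys the exact correspondence between $S$ and $I-\alpha\tilde{L}$, so monotonicity in $\lambda_2$ holds only for the effective, perturbation-bounded rate, not for the true second eigenvalue. My first approach is to treat $c_1(D - \bar{d}I)$ as a perturbation of size $\le c_1\delta$ and apply Weyl's inequality. This is exactly why the statement is phrased in terms of $\rho_{\mathrm{eff}}$ and requires $\delta'\le\delta$ as a hypothesis. The proof will therefore only establish the claim in this bound-level sense, not as a monotonicity of the actual second eigenvalue of $S$.
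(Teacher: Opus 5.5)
Your Steps 1--2 follow the paper's route. Both treat $c_1(\mathrm{diag}(\hat A^\top\mathbf 1)-\bar d I)$ as a diagonal perturbation of norm $\delta$, bound the subdominant spectrum by Weyl, note that the effective rate is monotone in $\lambda_2$ and $\delta$, and finish with the eigen-expansion residual bound.

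You are more explicit than the paper in two useful places. First, your lazy condition $\alpha\le 1/2$ is what stops the most negative eigenvalue from dominating; the paper's $\rho_{\mathrm{eff}}=(\bar\sigma_2+\delta)/(\bar\sigma_1-\delta)$ tracks only $\bar\sigma_2$ and assumes this tacitly. Second, your Cauchy--Schwarz step actually produces $C$ rather than citing ``standard spectral convergence''. Normalizing by $\sigma_1\ge\bar\sigma_1-\delta$ recovers the paper's ratio exactly, and your additive form bounds it. Note, though, that $\alpha=c_2/\bar\sigma_1$ and $\kappa$ then depend on $\bar\sigma_1=\theta_u/\theta_m+c_1\bar d+c_2$, hence on $\bar d$. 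Like the paper, you must hold $\bar d$ fixed across $G$ and $G'$.

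One step needs repair. The factorization $S=D^{1/2}MD^{-1/2}$ with $M$ symmetric and $D=\mathrm{diag}(\hat A^\top\mathbf 1)$ fails precisely when $\delta>0$. With $\hat A$ symmetric, $S$ is already symmetric, and $D^{-1/2}SD^{1/2}$ is symmetric only if $SD=DS$, i.e.\ only if $\hat A^\top\mathbf 1$ is constant on the component. Just take $M=S/\sigma_1$. Then $S^\infty=\psi_1\psi_1^\top$, your Cauchy--Schwarz argument gives $C=1$ for both graphs, and the ``maximum of the two degree ratios'' device is unnecessary. (That device was in any case justified by taking a maximum, not by $\delta'\le\delta$.)

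The genuine gap is Step 3. Your bound ``spread of $S^\infty$ plus twice the residual'' shrinks under $G'$ only if the spread of $S'^\infty$ does not exceed that of $S^\infty$, and $\delta'\le\delta$ does not give this. $S^\infty=\psi_1\psi_1^\top$ is built from the Perron vector of $S$; at $c_1=0$, $\psi_1\propto D_A^{1/2}\mathbf 1$ with $D_A$ the weighted degree matrix of $A$. Its spread is governed by the degree ratio, not by the deviation of $\hat A^\top\mathbf 1$ from its mean.

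The literal claim can in fact fail under the hypotheses. Take the unit-weight path on three nodes and reweight its edges to $(0.3,0.7)$:
\begin{itemize}
\item the normalized-Laplacian spectrum stays $\{0,1,2\}$, so $\lambda_2'=\lambda_2$;
\item $\max_j|(\hat A^\top\mathbf 1)_j-\bar d|$ drops from about $0.471$ to $0.462$, so $\delta'<\delta$ for any $c_1>0$;
\item the max-minus-min spread of $\psi_1\psi_1^\top$ rises from $0.25$ to $0.35$.
\end{itemize}
By continuity, for small $c_1$ in your lazy regime and large $K$, the entrywise spread of the normalized $S'^K$ exceeds that of $S^K$.

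The paper reads the ``consequently'' clause only as the residual statement itself: $(S^{L_SL})_{uv}$ converges more uniformly to its \emph{own} stationary limit. You should adopt that reading and drop the comparison of stationary limits.
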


\paragraph{Spectral control of the spatial MP matrix.} Decompose $S = S_{\mathrm{reg}} + E$ with $S_{\mathrm{reg}} = \alpha I + c_2 \hat{A}$ symmetric with eigenpairs $(\mu_j, \varphi_j)$, so $S_{\mathrm{reg}}$ has eigenvalues $\bar{\sigma}_j = \alpha + c_2 \mu_j$. The perturbation $E = c_1 (\mathrm{diag}(\hat{A}^{\top} \mathbf{1}) - \bar{d} I)$ is diagonal with operator norm $\|E\|_2 = c_1 \max_j |(\hat{A}^{\top} \mathbf{1})_j - \bar{d}| =: \delta$. By Weyl's perturbation inequality, the eigenvalues of $S$ satisfy $|\sigma_j - \bar{\sigma}_j| \leq \delta$. Combining this with standard spectral convergence, $|(S^K)_{uv} - \pi_{uv}| \leq C \cdot \rho_{\mathrm{eff}}^K$ where $\rho_{\mathrm{eff}} = (\bar{\sigma}_2 + \delta)/(\bar{\sigma}_1 - \delta)$. Substituting $\bar{\sigma}_2 = (\alpha + c_2) - c_2 \lambda_2$ and $\bar{\sigma}_1 = \alpha + c_2$ yields the stated expression.

\paragraph{Main argument.} The ST bound factors as $B_T \cdot (R^{L_T L})_{i,0} \cdot B_S \cdot (S^{L_S L})_{uv}$. The temporal factor is unchanged by spatial rewiring. Under $\lambda_2' \geq \lambda_2$ and $\delta' \leq \delta$, the numerator $\bar{\sigma}_2 + \delta = (\alpha + c_2 - c_2 \lambda_2) + \delta$ of $\rho_{\mathrm{eff}}$ decreases and the denominator $\bar{\sigma}_1 - \delta = (\alpha + c_2) - \delta$ increases, so $\rho_{\mathrm{eff}}' \leq \rho_{\mathrm{eff}}$. The standard spectral residual bound $\sup_{u,v} |(S^K)_{uv} - S^{\infty}_{uv}| \leq C \rho_{\mathrm{eff}}^K$ then yields the uniformly tighter bound stated in the proposition for any $K \geq 1$. The spatial Jacobian factor $(S^{L_S L})_{uv}$ thus converges more uniformly to its stationary limit under $G'$, reducing the entrywise variation of $\|\nabla^u_i h^{v(L)}_t\|$ across node pairs.

\paragraph{Connection to over-squashing relief.} The proposition 
concerns the uniform spectral residual rather than pointwise entry 
values. For distant pairs $(u, v)$ with $(S^K)_{uv} < S^{\infty}_{uv}$ 
at finite $K$ (the typical over-squashing regime), faster mixing under 
$G'$ moves $(S'^K)_{uv}$ toward $S^{\infty}_{uv}$ from below, 
consistent with the over-squashing relief direction of 
\citet{topping2022understanding,digiovanni2023oversquashing,marisca2025oversquashing}.
The pointwise direction is therefore expected to flip between near 
and distant pairs, while the uniform residual decreases monotonically; 
our proposition formalizes the latter (which the proof establishes 
directly) while the former is empirically tested through the spectral 
dissociation analysis in Section~\ref{sec:discussion}.

\section{Dataset Statistics}
\label{app:datasets}

\begin{table}[h!]
\centering
\caption{Dataset statistics. ST traffic datasets use 5-minute intervals with 12-step input/output horizons; AirQuality uses 1-hour intervals (24/24). NC datasets use standard Planetoid splits \citep{yang2016planetoid}. Edge-level node homophily $h$ for NC datasets follows the definition of \cite{pei2020geomgcn}; both NC datasets are strongly homophilous, matching the operating regime of LDS \citep{franceschi2019learning}. Heterophilous evaluation is out of scope (Section~\ref{sec:framework-scope}).}
\label{tab:datasets}
\small
\begin{tabular}{llcccll}
\toprule
Regime & Dataset & Nodes & Edges & Mean deg. & Time steps / features & Homophily \\
\midrule
\multirow{6}{*}{ST}
 & PeMS04    & 307 & 268   & 1.7  & 16{,}992 & --- \\
 & PeMS07    & 883 & 395   & 0.9  & 28{,}224 & --- \\
 & PeMS08    & 170 & 183   & 2.2  & 17{,}856 & --- \\
 & METR-LA   & 207 & 757   & 7.3  & 34{,}272 & --- \\
 & PeMS-BAY  & 325 & 1{,}184 & 7.3  & 52{,}116 & --- \\
 & AirQuality & 437 & 2{,}699 & 12.4 & 8{,}760 & --- \\
\midrule
\multirow{2}{*}{NC}
 & Cora      & 2{,}708 & 5{,}429 & 4.0  & 1{,}433 feat./7 cls & $h{=}0.83$ \\
 & Citeseer  & 3{,}327 & 4{,}732 & 2.8  & 3{,}703 feat./6 cls & $h{=}0.71$ \\
\bottomrule
\end{tabular}
\end{table}

Traffic flow datasets use measurements from the Caltrans Performance Measurement System; adjacency is constructed via a thresholded Gaussian kernel on road-network distances. Traffic speed datasets use the same construction~\citep{li2018diffusion}. AirQuality uses PM2.5 monitoring stations with distance-based adjacency \citep[threshold 0.1]{zheng2015airquality}. Cora and Citeseer use the standard Planetoid splits as used by LDS and GSLB. Data splits are 70/10/20 for train/validation/test on all ST datasets. Bilevel optimization uses the validation split for outer-loop graph optimization; all method comparisons use held-out test MAE/accuracy exclusively.

%% file: appendix_02_inner_channel.tex
\section{Hyperparameter Sensitivity (T-sweep)}
\label{app:hyperparam}

We sweep inner steps $T$ on PeMS04 with the DiffConv backbone, holding all other hyperparameters at the default in Table~\ref{tab:hp_st}. Both the frozen-$\phi$ control and full bilevel show monotonic improvement with $T$ through $T = 10$, with frozen-$\phi$ alone capturing most of the gain at every $T$. Table~\ref{tab:hp_sens} reports the per-$T$ values used in the $T$-sweep (Section~\ref{sec:find-mechanism}); first-order bilevel is robust across $T \in [3, 10]$. Figure~\ref{fig:tsweep} in the main text plots the corresponding curves.

\begin{table}[h!]
\centering
\caption{$T$-sweep on PeMS04 (DiffConv). Vanilla reference: $20.925 \pm 0.043$ MAE. Default $T = 10$ marked with $\dagger$.}
\label{tab:hp_sens}
\small
\begin{tabular}{lccc}
\toprule
Inner steps $T$ & Frozen-$\phi$ MAE & Bilevel MAE & $\Delta$ vs vanilla (Bilevel) \\
\midrule
$T = 1$           & $20.926_{\pm.033}$ & $20.384_{\pm.046}$ & $-0.541$ \\
$T = 3$           & $20.198_{\pm.043}$ & $19.984_{\pm.234}$ & $-0.941$ \\
$T = 5$           & $20.073_{\pm.022}$ & $19.794_{\pm.026}$ & $-1.131$ \\
$T = 10^{\dagger}$ & $20.011_{\pm.034}$ & $19.754_{\pm.036}$ & $-1.171$ \\
$T = 20$          & $20.120_{\pm.175}$ & $19.793_{\pm.037}$ & $-1.132$ \\
\bottomrule
\end{tabular}
\end{table}

At $T = 1$ frozen-$\phi$ collapses to vanilla within noise (negative control), confirming the mini-batch reuse regime of Proposition~\ref{prop:igr}: when the inner loop reduces to a single step, the IGR mechanism does not accumulate. Across $T \in \{3, 5, 10, 20\}$ frozen-$\phi$ tracks bilevel within $\sim 0.3$ MAE of the bilevel value, with a small approximately $T$-independent gap explained by the graph channel.

\section{Inner-Step Implementation and Fair Comparison}
\label{app:inner_step}

A central implementation detail of the ST bilevel procedure is that all $T$ inner steps within a single outer iteration operate on the same mini-batch: the training batch is sampled once per outer iteration and reused across the $T$ gradient steps before the next batch is drawn. This appendix documents the resulting computational budget and provides empirical evidence for the accumulated parameter displacement mechanism that underlies the mini-batch reuse regime in Proposition~\ref{prop:igr}.

\subsection{Computational Budget}
\label{app:inner_step_budget}

Table~\ref{tab:fair_comparison} clarifies that frozen-$\phi$ ($T = 10$) and vanilla ($T = 1$) share the same epoch count, learning-rate schedule, and total training data exposure. The cosine annealing scheduler steps once per epoch, not per inner step, so both settings traverse an identical learning-rate trajectory. The only difference is per-batch optimization depth: vanilla performs one gradient step per batch, frozen-$\phi$ performs ten gradient steps per batch. Both settings see exactly $100|\mathcal{D}|$ samples over the full training run, ruling out additional data exposure as an explanation for the inner channel.

\begin{table}[h!]
\centering
\caption{Computational budget comparison. Vanilla ($T = 1$) and frozen-$\phi$ ($T = 10$) share the same epoch count, learning-rate schedule, and total training data; the only difference is per-batch optimization depth. This rules out additional data exposure as an explanation for the inner channel.}
\label{tab:fair_comparison}
\small
\begin{tabular}{lccccc}
\toprule
Setting & Epochs & Batches$/$epoch & Steps$/$batch & LR schedule & Total data seen \\
\midrule
Vanilla ($T = 1$)        & 100 & $|\mathcal{D}|$ & 1  & Cosine, $T_{\max}{=}100$ & $100|\mathcal{D}|$ \\
Frozen-$\phi$ ($T = 10$) & 100 & $|\mathcal{D}|$ & 10 & Cosine, $T_{\max}{=}100$ & $100|\mathcal{D}|$ \\
\bottomrule
\end{tabular}
\end{table}

\subsection{Gradient Norm Analysis}
\label{app:inner_step_grad_norm}

To characterize the inner-step mechanism we instrument the inner loop and record the pre-clipping gradient norm at each step $t \in \{0, \ldots, 9\}$ across the first three epochs on PeMS04 (DiffConv, seed$=$42). Figure~\ref{fig:grad_norm} shows the within-batch profile averaged over batches in epoch 0, together with per-batch training loss trajectories for five representative batches. Figure~\ref{fig:grad_norm_epoch} shows the same gradient-norm profile stratified by epoch.

\begin{figure}[h!]
\centering
\includegraphics[width=0.85\linewidth]{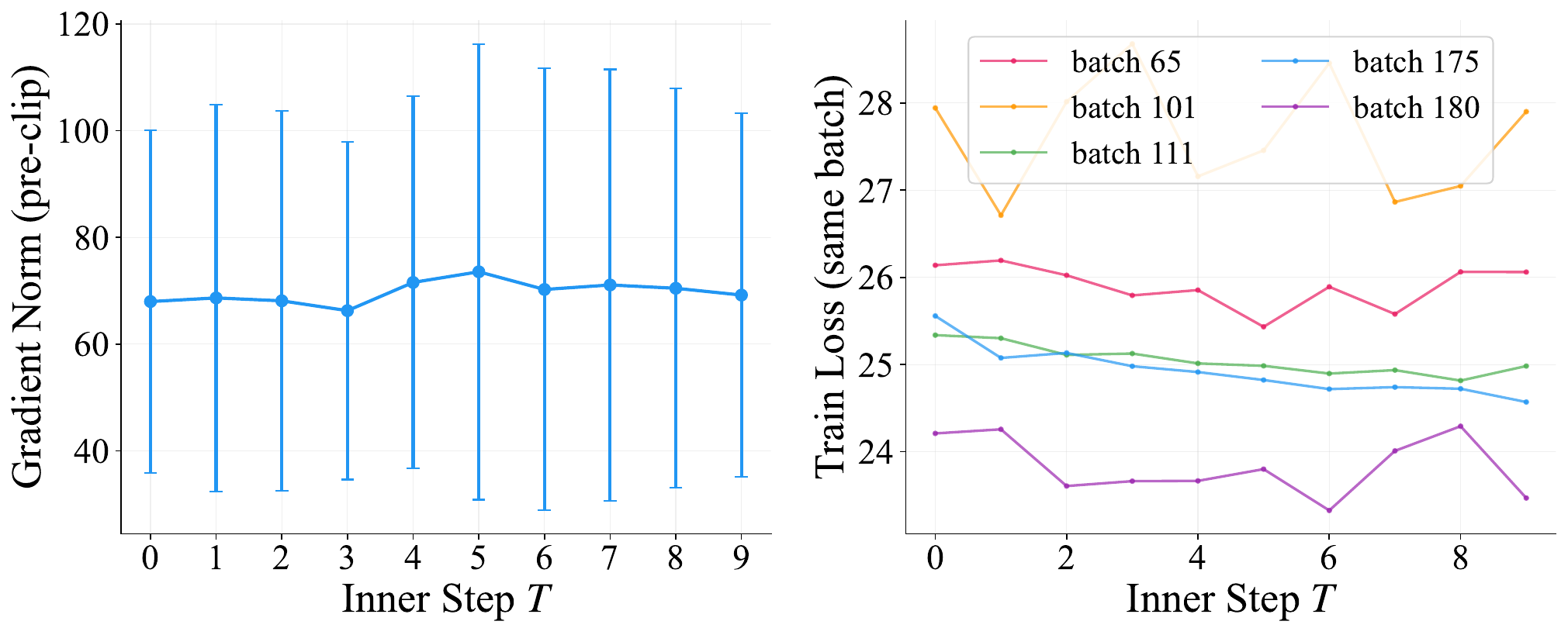}
\caption{PeMS04 (DiffConv, seed$=$42), epoch 0. Left: pre-clipping gradient norm during the inner loop, averaged across batches. The norm is approximately constant across all 10 inner steps, indicating that each step provides a parameter update of comparable magnitude rather than driving per-batch convergence. Right: per-batch training loss over inner steps for five representative batches; loss trajectories are approximately flat or slowly decreasing, confirming that $T = 10$ does not produce intra-batch convergence.}
\label{fig:grad_norm}
\end{figure}

\begin{figure}[h!]
\centering
\includegraphics[width=0.65\linewidth]{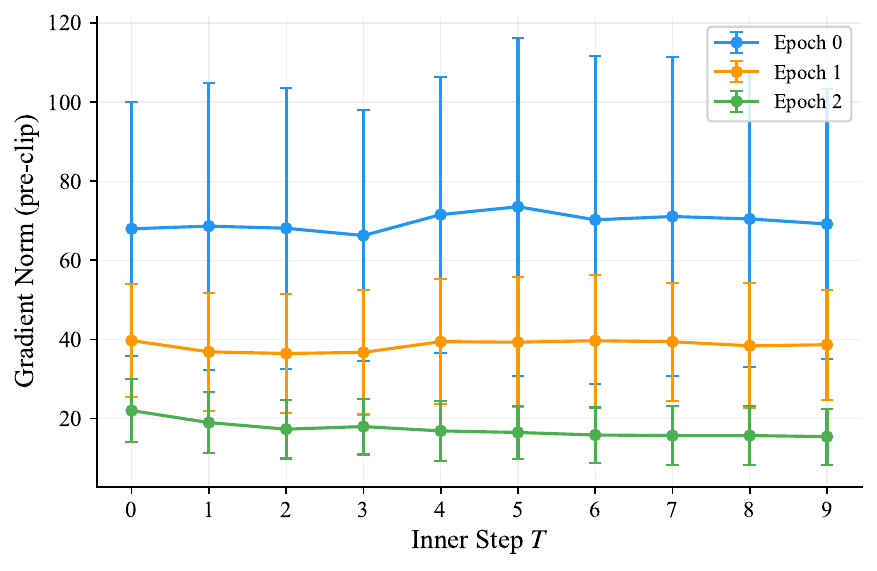}
\caption{Pre-clipping gradient norm across inner steps stratified by epoch (PeMS04, DiffConv, seed$=$42). The overall norm decreases substantially across epochs (epoch 0: $\approx 68$; epoch 1: $\approx 37$; epoch 2: $\approx 22$), but the within-epoch profile remains flat at every epoch. Gradient-norm reduction is driven by inter-epoch learning, not by intra-batch convergence.}
\label{fig:grad_norm_epoch}
\end{figure}

Two observations follow. First, gradient norms are approximately constant across inner steps within each epoch (Figure~\ref{fig:grad_norm}, left), ruling out the hypothesis that $T = 10$ drives per-batch convergence: if the model were converging on each batch, the within-batch gradient norms would decrease monotonically from step 0 to step 9, but they do not. Each inner step instead provides a parameter update of comparable magnitude. Second, gradient norms decrease substantially across epochs (Figure~\ref{fig:grad_norm_epoch}), reflecting the normal training-time convergence that occurs in vanilla training as well.

The mechanism underlying the inner-channel benefit is therefore accumulated parameter displacement: each batch receives $T$ gradient steps of similar magnitude in approximately the same direction, yielding a $T$-fold larger effective per-batch step relative to vanilla. This accelerates optimization without exposing the model to additional data and without altering the learning-rate schedule, and it is exactly the mini-batch reuse setting in which the SGD-form implicit regularization of Proposition~\ref{prop:igr} accumulates linearly over the $T$ steps within a fixed batch (Section~\ref{sec:framework-igr}).

\subsection{Loss Curve Comparison}
\label{app:inner_step_loss_curves}

Figure~\ref{fig:loss_curves} compares training and validation MAE across the three configurations on PeMS04 (DiffConv, seed$=$42).

\begin{figure}[h!]
\centering
\includegraphics[width=\linewidth]{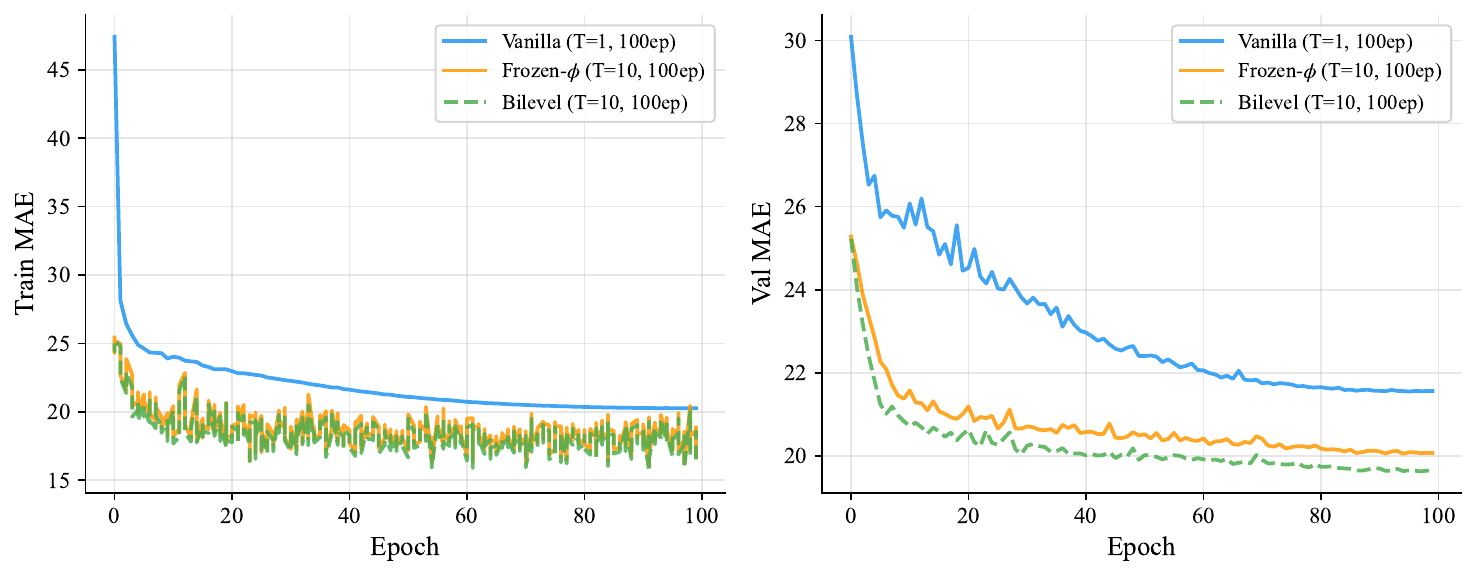}
\caption{Training and validation MAE on PeMS04 (DiffConv, seed$=$42). Vanilla ($T = 1$) converges slowly; frozen-$\phi$ and bilevel (both $T = 10$) converge rapidly under the same epoch count and the same learning-rate schedule. By epoch 10, both $T = 10$ configurations reach the validation MAE that vanilla achieves only near epoch 60. The gap arises from per-batch multi-step optimization, not from additional data exposure.}
\label{fig:loss_curves}
\end{figure}

Both frozen-$\phi$ and full bilevel converge substantially faster than vanilla: by epoch 10 they reach the validation MAE that vanilla achieves only near epoch 60 of training. Validation curves track training curves without divergence, indicating that the faster convergence translates to better generalization rather than to overfitting. The visualization complements the gradient-norm analysis above: the same accumulated-displacement mechanism that produces flat within-batch gradient profiles also produces faster across-epoch convergence in test-relevant metrics under the standard data and learning-rate budget (Table~\ref{tab:fair_comparison}).

\section{Weight Decay Sweep}
\label{app:wd_sweep}

\begin{table}[h!]
\centering
\caption{Weight decay sweep. ST PeMS04 column reports DiffConv test MAE. NC columns report vanilla GCN with weight decay added to the inner Adam optimizer (no LDS). The best ST setting ($\lambda = 10^{-4}$) does not reach vanilla; on NC the best $\lambda$ improves the GCN baseline but does not match the LDS bilevel result ($82.75\%$ on Cora, $73.53\%$ on Citeseer). The ``$\lambda{=}0$ (vanilla)'' NC row reports a 5-seed run with weight decay disabled and differs from the main-text vanilla GCN baseline (Table~\ref{tab:decomp}, Cora $81.10\%$ / Citeseer $69.28\%$), which uses GSLB's per-dataset tuned weight decay ($5\times 10^{-4}$, see Table~\ref{tab:hp_nc}) over 10 seeds.}
\label{tab:wd}
\small
\begin{tabular}{lccc}
\toprule
Weight decay $\lambda$ & PeMS04 (ST) & Cora (NC, GCN+WD) & Citeseer (NC, GCN+WD) \\
\midrule
$0$ (vanilla)  & $20.925_{\pm.043}$ & $79.40_{\pm.50}\%$ & $64.44_{\pm1.18}\%$ \\
$10^{-4}$      & $21.001_{\pm.082}$ & $80.64_{\pm.29}\%$ & $67.08_{\pm.68}\%$ \\
$10^{-3}$      & $21.888_{\pm.114}$ & $80.84_{\pm.91}\%$ & $69.14_{\pm.90}\%$ \\
$10^{-2}$      & $23.428_{\pm.031}$ & $48.88_{\pm2.63}\%$ & $36.16_{\pm5.56}\%$ \\
\bottomrule
\end{tabular}
\end{table}

\begin{figure}[h!]
\centering
\includegraphics[width=0.55\linewidth]{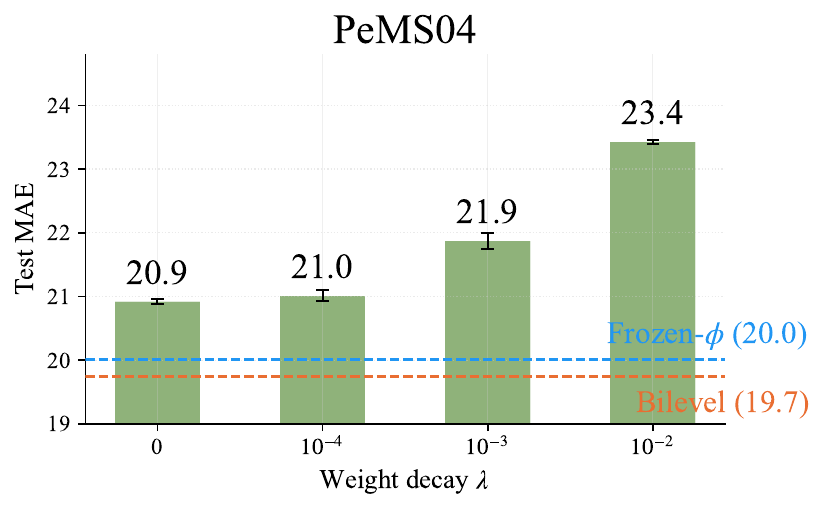}
\caption{ST weight decay sweep (PeMS04, DiffConv). No weight decay setting reaches frozen-$\phi$ or bilevel: $\lambda = 10^{-4}$ is marginally worse than vanilla and larger $\lambda$ degrades rapidly.}
\label{fig:wd_st}
\end{figure}

\begin{figure}[h!]
\centering
\includegraphics[width=\linewidth]{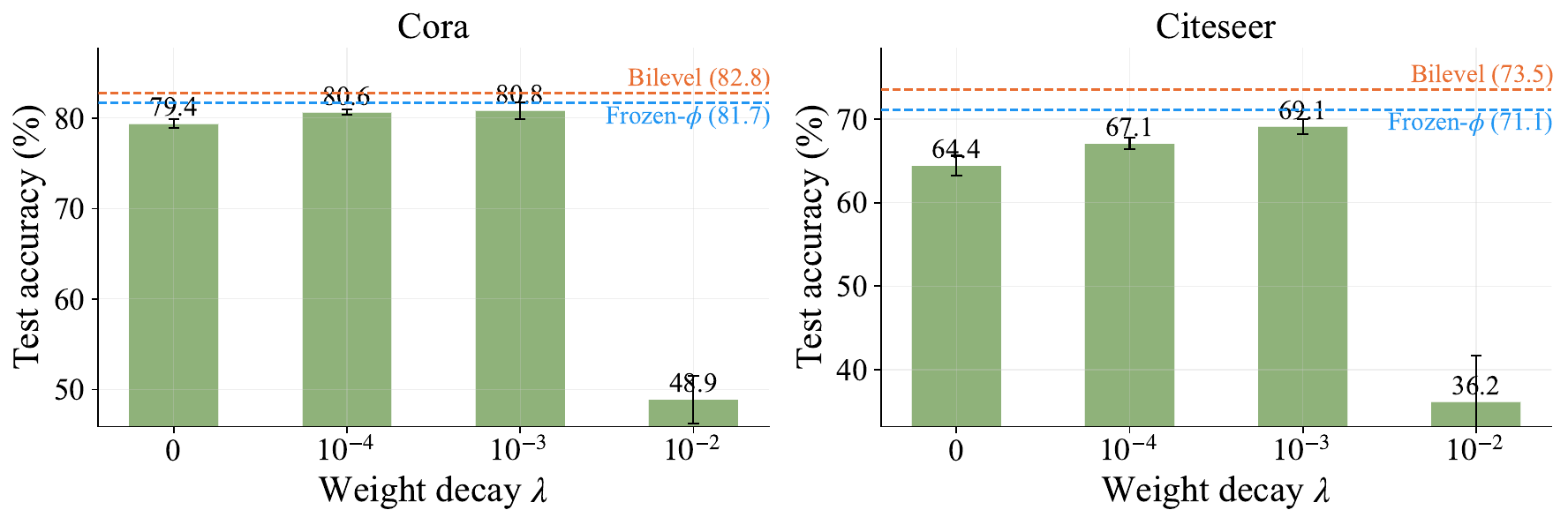}
\caption{NC weight decay sweep (Cora, Citeseer). Weight decay closes part of the GCN-to-LDS gap but does not reach frozen-$\phi$ on either dataset (Cora: $\lambda = 10^{-3}$ at $80.84\%$ vs.\ frozen-$\phi$ $81.71\%$ vs.\ LDS $82.75\%$; Citeseer: $69.14\%$ vs.\ $71.13\%$ vs.\ $73.53\%$). The residual gap to frozen-$\phi$ is tighter than in ST because the NC inner channel itself is small ($37$--$44\%$) under the full-batch reset regime.}
\label{fig:wd_nc}
\end{figure}

\paragraph{ST.} On PeMS04, no weight decay setting matches the frozen-$\phi$ MAE of $20.011$ at $T = 10$: the smallest $\lambda$ of $10^{-4}$ gives MAE $21.001$, marginally worse than vanilla, while $\lambda = 10^{-3}$ and $10^{-2}$ degrade rapidly (Figure~\ref{fig:wd_st}). Explicit $\ell_2$ regularization on parameter norms is qualitatively different from the gradient-norm-based implicit regularization of Proposition~\ref{prop:igr}, and the inability of weight decay to replicate the frozen-$\phi$ effect is consistent with this functional difference.

\paragraph{NC.} The same mechanism check on NC shows a weaker but qualitatively consistent pattern (Figure~\ref{fig:wd_nc}). Weight decay closes part of the GCN-to-LDS gap on both datasets but does not reach frozen-$\phi$: on Cora the best setting ($\lambda = 10^{-3}$) reaches $80.84\%$ versus frozen-$\phi$ $81.71\%$ and LDS $82.75\%$; on Citeseer $\lambda = 10^{-3}$ reaches $69.14\%$ versus frozen-$\phi$ $71.13\%$ and LDS $73.53\%$. Parameter-norm regularization recovers a fraction of the inner-channel benefit but cannot replicate it, and the residual graph-channel gap (frozen-$\phi$ to LDS) is unaffected by weight decay. The NC inner channel itself is small ($37$--$44\%$) under the full-batch reset regime, so the residual gap between weight decay and frozen-$\phi$ ($0.87$pp Cora, $1.99$pp Citeseer) is correspondingly tighter than the ST gap ($\sim 1$ MAE).

\section{Compute-matched vanilla baseline}
\label{app:compute_matched}

The frozen-$\phi$ control retains the $T{=}10$ inner schedule, performing $10\times$ as many parameter updates per batch as the $T{=}1$ vanilla baseline. To isolate whether the inner channel reflects implicit gradient regularization (IGR) versus simple under-convergence of the vanilla baseline at the standard 100-epoch budget, we run vanilla DiffConv on PeMS04 for 1000 epochs ($10\times$ compute), with the cosine learning-rate schedule extended proportionally ($T_{\max}=1000$).

\begin{table}[h!]
\centering
\caption{Compute-matched comparison on PeMS04 (DiffConv). All settings share data exposure per epoch and learning-rate schedule shape; only the epoch count and inner-step count differ. Frozen-$\phi$ and bilevel use the standard 100-epoch budget with $T{=}10$ inner steps; the 1000-epoch vanilla matches the bilevel total parameter-update count without inner-loop reuse.}
\label{tab:compute_matched}
\small
\begin{tabular}{lccc}
\toprule
Setting & Epochs & Updates/batch & Test MAE \\
\midrule
Vanilla              & 100  & 1  & $20.925_{\pm.043}$ \\
Vanilla (10$\times$) & 1000 & 1  & $20.189_{\pm.024}$ \\
Frozen-$\phi$        & 100  & 10 & $20.011_{\pm.034}$ \\
Bilevel              & 100  & 10 & $19.754_{\pm.036}$ \\
\bottomrule
\end{tabular}
\end{table}

\paragraph{Trajectory-length interpretation.} The 1000-epoch vanilla result lies between the 100-epoch vanilla baseline and frozen-$\phi$, closing $80\%$ of the inner-channel gap (from $0.914$ MAE down to a residual $0.178$ MAE relative to frozen-$\phi$). We interpret this through the lens of~\citet{smith2021origin}: per-step IGR accumulates over the optimization trajectory, so two settings that traverse comparable trajectory lengths exhibit comparable accumulated regularization. Frozen-$\phi$ $T{=}10$ at $100$ epochs and vanilla $T{=}1$ at $1000$ epochs both perform $\sim 10^4$ gradient steps and reach a similar regime; the former does so within the standard wall-clock budget, the latter requires $10\times$ wall-clock. Both manifest IGR; frozen-$\phi$ achieves it more efficiently within the standard training budget. This refines, rather than refutes, the inner-channel attribution: the $78$--$101\%$ inner-share figure (Table~\ref{tab:decomp}) reflects matched-compute attribution in the standard $100$-epoch regime, which is the operating point at which bilevel GSL methods are typically reported.

\paragraph{Bilevel residual is unaffected by extended vanilla training.} The bilevel result ($19.754$) remains $0.435$ MAE below the $1000$-epoch vanilla, while the standard frozen-$\phi$ vs.\ bilevel gap is $0.257$ MAE. The graph-channel contribution measured in the main text is therefore not an artifact of comparing against an under-trained vanilla baseline; the gap to bilevel persists under $10\times$ vanilla compute. Combined with the end-to-end joint-training baseline (Section~\ref{sec:find-scope}, $20\times$ weaker than bilevel on PeMS07) and the graph-distillation diagnostic (Appendix~\ref{app:distillation}), the graph-channel contribution is robust to compute-matching as a confound.

\section{Seed-Level Distributions for the Frozen-\texorpdfstring{$\phi$}{phi} Decomposition}
\label{app:seed_distributions}

\begin{figure}[h!]
\centering
\includegraphics[width=0.9\linewidth]{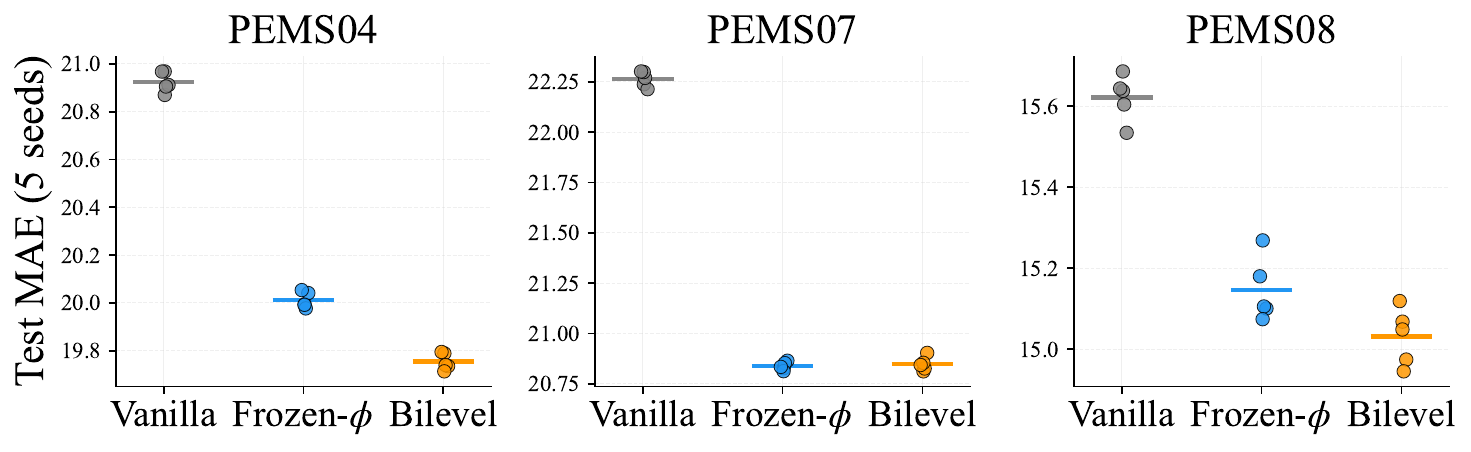}
\caption{Per-seed test MAE for vanilla, frozen-$\phi$, and full bilevel on PeMS04, PeMS07, and PeMS08. Frozen-$\phi$ and vanilla seed distributions are clearly separated on PeMS07 and PeMS08; on PeMS04 they nearly separate (frozen-$\phi$ max $20.960$ versus vanilla min $20.870$, gap $-0.09$\,MAE). The pattern supports the central decomposition claim that the inner channel provides a clean, seed-stable contribution distinct from per-seed noise.}
\label{fig:seed_scatter}
\end{figure}

%% file: appendix_03_graph_channel.tex
\section{Full Six-Dataset Survey}
\label{app:survey}

\begin{table}[h!]
\centering
\caption{Test MAE ($\downarrow$) on six benchmarks (DiffConv). Best bold, second underlined. Static rewiring methods achieve negligible changes on flow datasets where bilevel achieves $3.8$--$6.4\%$ improvement.}
\label{tab:survey}
\small
\begin{tabular}{lcccccc}
\toprule
Method & PeMS04 & PeMS07 & PeMS08 & METR-LA & PeMS-BAY & AirQuality \\
\midrule
Vanilla & $20.925_{\pm.043}$ & $22.277_{\pm.038}$ & $15.621_{\pm.057}$ & $\mathbf{3.240}_{\pm.006}$ & $\underline{1.665}_{\pm.002}$ & $\underline{27.631}_{\pm.214}$ \\
FoSR    & $20.934_{\pm.020}$ & $\underline{22.269}_{\pm.011}$ & $\underline{15.606}_{\pm.074}$ & $\underline{3.247}_{\pm.010}$ & $1.661_{\pm.002}$ & $27.661_{\pm.204}$ \\
BORF    & $20.939_{\pm.042}$ & $22.467_{\pm.024}$ & $15.613_{\pm.062}$ & $3.344_{\pm.011}$ & $1.707_{\pm.002}$ & $\mathbf{27.538}_{\pm.260}$ \\
SDRF    & $21.092_{\pm.030}$ & $22.502_{\pm.023}$ & $15.820_{\pm.055}$ & $3.340_{\pm.004}$ & $1.706_{\pm.002}$ & $27.797_{\pm.299}$ \\
GTR     & $\underline{20.921}_{\pm.053}$ & $22.268_{\pm.023}$ & $15.667_{\pm.062}$ & $3.257_{\pm.013}$ & $\mathbf{1.662}_{\pm.003}$ & $27.639_{\pm.377}$ \\
Bilevel & $\mathbf{19.754}_{\pm.036}$ & $\mathbf{20.848}_{\pm.035}$ & $\mathbf{15.031}_{\pm.071}$ & $3.323_{\pm.016}$ & $1.654_{\pm.002}$ & $27.652_{\pm.229}$ \\
\midrule
$\Delta$ (\%) & $-5.6$ & $-6.4$ & $-3.8$ & $+2.5$ & $-0.7$ & $+0.1$ \\
\bottomrule
\end{tabular}
\end{table}

\section{Spectrum visualization: GTR maximizes \texorpdfstring{$\lambda_2$}{lambda2}, bilevel does not}
\label{app:spectral_band}

Figure~\ref{fig:spectrum_6} shows the eigenvalue spectrum of the normalized Laplacian on the LCC subgraph for the original adjacency, GTR (the strongest spectral-gap maximizer among static methods on this benchmark suite), and the bilevel-learned adjacency on each of the six ST datasets. The figure visualizes the dissociation documented numerically in Appendix~\ref{app:spectral_negative}: GTR shifts the spectrum upward on every benchmark, increasing $\lambda_2$ on the LCC by $2.6$--$15.5\times$, yet matches vanilla MAE; bilevel produces qualitatively different redistribution that does not maximize $\lambda_2$ (often decreases it) while producing the largest forecasting gains on flow. FoSR, BORF, and SDRF produce minimal spectral change on the LCC subgraph and are reported numerically in Table~\ref{tab:spectral_negative_lambda} and~\ref{tab:spectral_negative_we}. The same three regime partition that appears in the main results is visible: (i) flow datasets (PeMS04/07/08) show bilevel-driven spectral redistribution distinct from GTR's gap-maximization, with a forecasting gain; (ii) speed datasets (METR-LA, PeMS-BAY) show bilevel modification at a similar magnitude as GTR but no task gain, indicating that spectral redistribution by itself does not control forecasting gain; (iii) AirQuality shows minimal spectral change combined with no task effect.

\begin{figure}[h!]
\centering
\includegraphics[width=\linewidth]{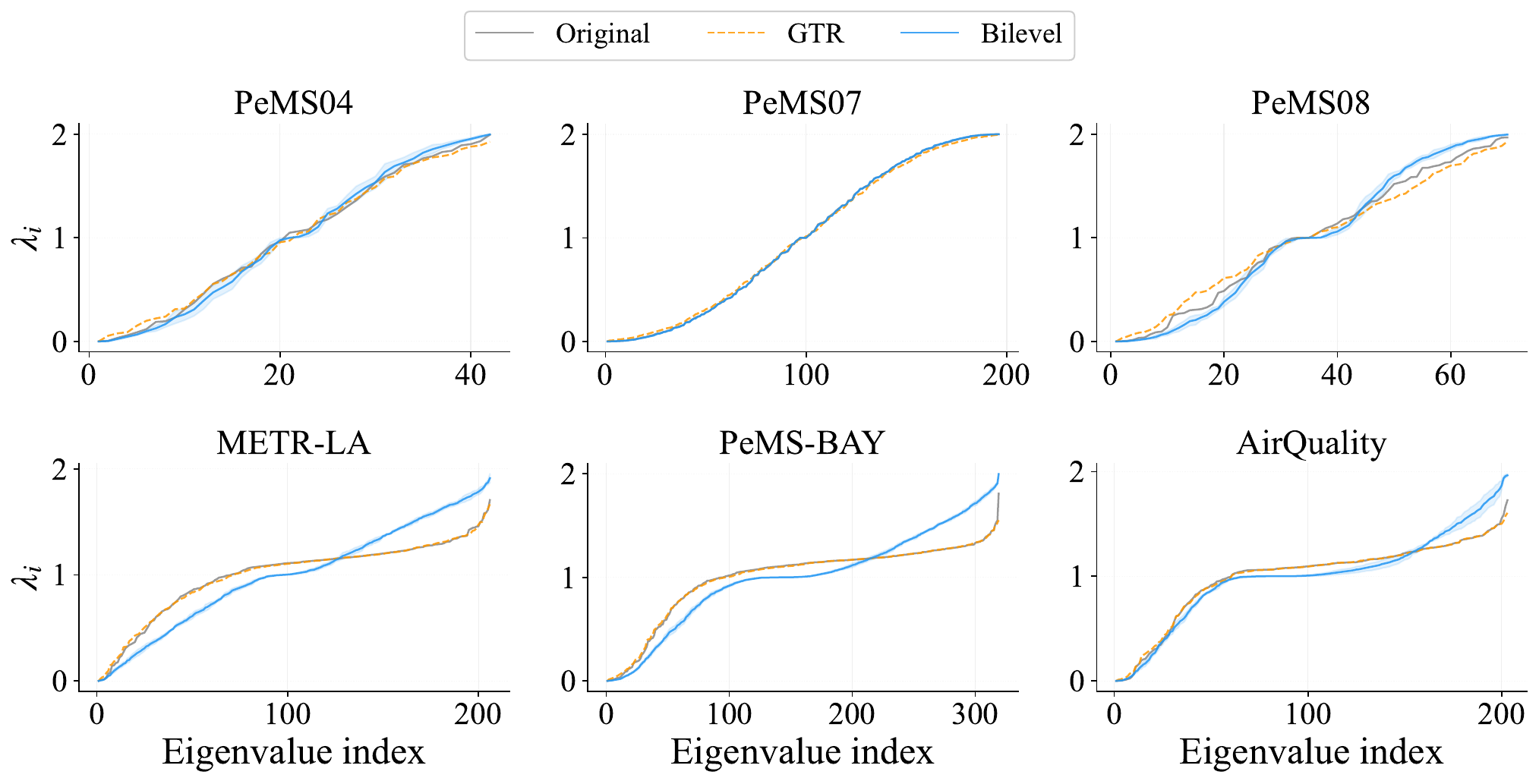}
\caption{Eigenvalue spectrum of the normalized Laplacian on the LCC subgraph (identical node set across methods) for six ST datasets, comparing the original adjacency, GTR (the strongest spectral-gap maximizer), and the bilevel-learned adjacency. GTR shifts the spectrum upward by $2.6$--$15.5\times$ in $\lambda_2$ on every benchmark (Table~\ref{tab:spectral_negative_lambda}) yet matches vanilla MAE; bilevel produces qualitatively different redistribution that does not maximize $\lambda_2$. FoSR, BORF, and SDRF produce minimal spectral change on the LCC subgraph and are reported numerically in Appendix~\ref{app:spectral_negative}.}
\label{fig:spectrum_6}
\end{figure}
\FloatBarrier

\section{Edge-Level Modifications}
\label{app:edge_modifications}

The frequency-based view of how bilevel reshapes the graph's Laplacian (Figure~\ref{fig:spectrum_6}) is also visible at the edge level: Figure~\ref{fig:weight_dist_6} shows the distribution of bilevel-learned edge weights $A_\phi = A \odot \mathrm{softmax}_{\mathrm{row}}(W_\phi)$ versus the original distance-based Gaussian-kernel adjacency for each of the six ST datasets. On flow-signal datasets (PeMS04/07/08), the originally unimodal weight distribution becomes sharply bimodal under bilevel: a large mass concentrated near zero (effective pruning of many edges) and a smaller mass at the typical retained-edge weight. The bimodal pruning at the edge level is the direct algebraic source of the spectral redistribution shown in Figure~\ref{fig:spectrum_6}: removing the many small-weight edges that contribute to mid-frequency content leaves a sparser graph whose spectrum is correspondingly compressed. On speed-signal datasets bilevel produces qualitatively similar bimodality with substantially less sharp separation between modes, and on AirQuality the distribution collapses to a near-single-mode form rather than bimodal restructuring. Together with the spectrum visualizations and the per-dataset Jacobian measurements of Section~\ref{app:jacobian_cross}, the three views yield the same dataset taxonomy: flow signals admit selective minimal intervention with task gain; speed signals admit modification without task alignment; AirQuality admits modification without Jacobian-level effect.

\begin{figure}[h!]
\centering
\includegraphics[width=\linewidth]{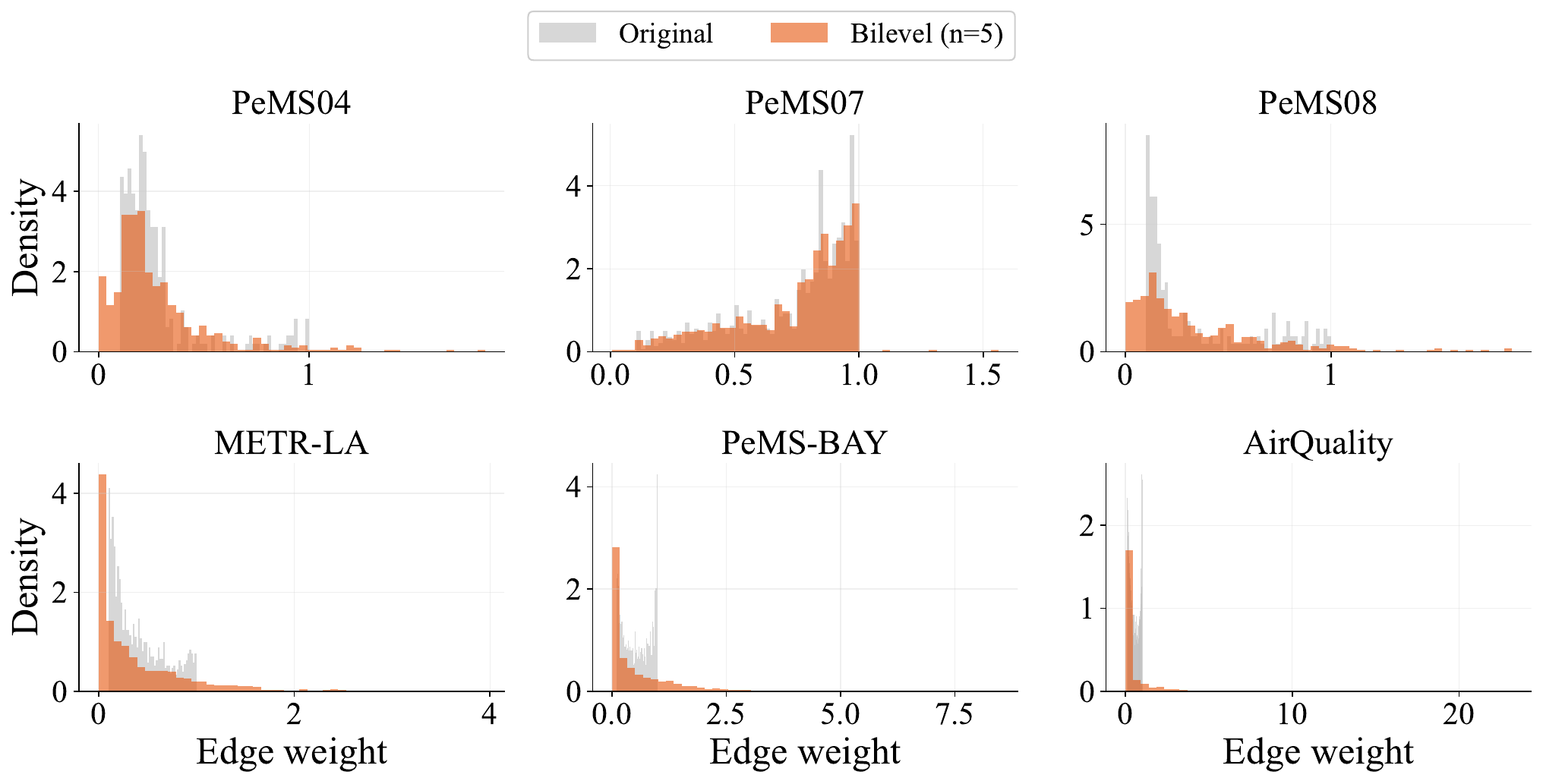}
\caption{Bilevel-learned edge weight distributions versus the original distance-based Gaussian-kernel adjacency on six ST datasets. Flow datasets (PeMS04/07/08) show sharply bimodal post-bilevel distributions; speed datasets (METR-LA, PeMS-BAY) show less sharp bimodality with no task gain; AirQuality shows near-single-mode collapse.}
\label{fig:weight_dist_6}
\end{figure}
\FloatBarrier

\section{Jacobian Cross-Regime Analysis}
\label{app:jacobian_cross}

The Jacobian measurement provides a complementary view of how graph modification affects information propagation. We compute the input-output Jacobian norm $\|\partial \hat{y}^v_T / \partial x^u_0\|_2$ stratified by graph distance between source node $u$ and target node $v$, for both ST and NC regimes (Figure~\ref{fig:jacobian_st_6} for ST; Table~\ref{tab:nc_jacobian} and Figure~\ref{fig:jacobian_nc} for NC). The two regimes exhibit distinct Jacobian signatures (ST: selective short-range amplification driven by per-pair task-aligned reweighting; NC: uniform mild reduction on Citeseer and aggressive sparsification on Cora) yet converge on the same negative diagnostic: neither shows the systematic long-range Jacobian amplification that would be expected if the bilevel gain originated from over-squashing relief.

\begin{figure}[h!]
\centering
\includegraphics[width=\linewidth]{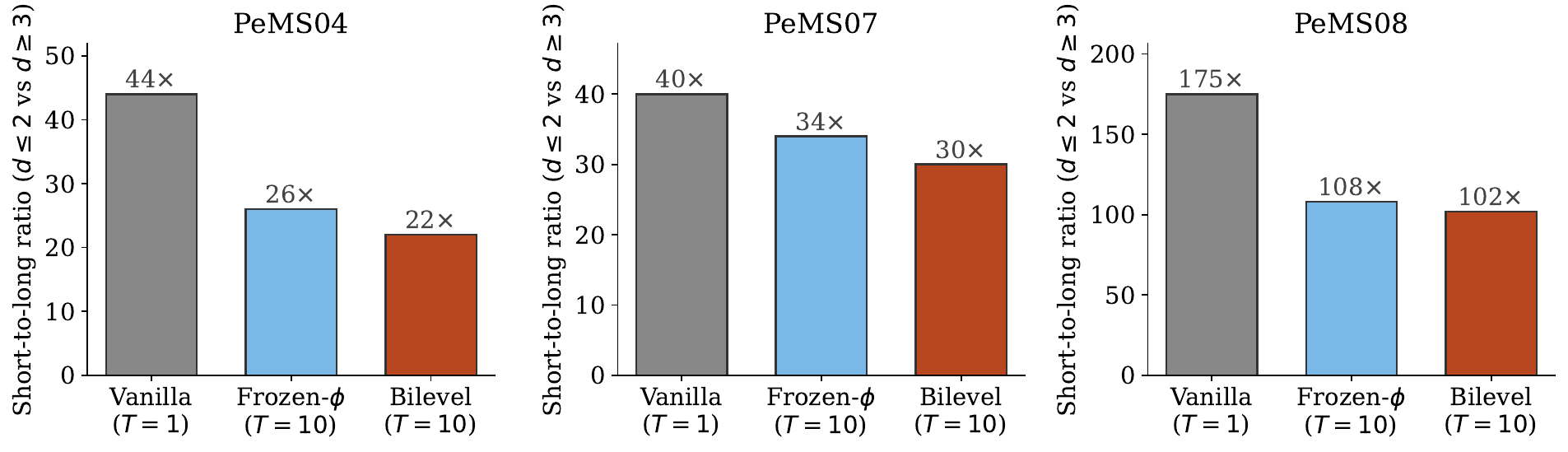}
\caption{Input-output Jacobian norm stratified by graph distance for vanilla, frozen-$\phi$, and bilevel models on the three flow datasets. Vanilla short-to-long ($d \leq 2$ vs $d \geq 3$) ratios are $44\times$ (PeMS04), $40\times$ (PeMS07), $175\times$ (PeMS08); frozen-$\phi$ ratios are $26\times$, $34\times$, $108\times$; bilevel ratios are $22\times$, $30\times$, $102\times$. The vanilla-to-frozen-$\phi$ shift (same adjacency, only inner schedule differs from $T{=}1$ to $T{=}10$) accounts for the majority of the vanilla-to-bilevel reduction (PeMS04: $82\%$; PeMS07: $60\%$; PeMS08: $92\%$); the additional frozen-$\phi$-to-bilevel shift reflects how weights trained against $A_\phi$ behave when evaluated on $\Ainit$. Long-range pairs ($d \geq 6$) are numerically zero on every dataset; the $K{=}2$, $L{=}4$ DiffConv has a maximal $8$-hop receptive field. Wide error bars on bilevel reflect selective task-aligned amplification of specific short-range pairs whose identity varies seed-to-seed, not measurement noise. We restrict the frozen-$\phi$ control to the three datasets where bilevel improves over vanilla and the channel decomposition is well-defined; speed and AirQuality datasets fall outside this scope (Section~\ref{sec:find-scope}).}
\label{fig:jacobian_st_6}
\end{figure}

\FloatBarrier

\paragraph{Node classification: Jacobian comparison for LDS.} For LDS on Cora and Citeseer we compute the same Jacobian quantity using the inference-time GCN under (i) the original adjacency and (ii) the LDS-learned adjacency, stratified by graph distance $d \in \{1, 2, 3, 4, {\geq}5\}$. Table~\ref{tab:nc_jacobian} reports both the mean and the per-pair median (a more robust statistic) at each distance. The two datasets exhibit qualitatively different patterns. On Citeseer the mean ratio (LDS $\div$ original) is below $1.0$ at every distance ($0.86$--$0.93$) and the median ratio is similarly near $1.0$ ($0.97$--$0.99$): LDS produces a uniform mild reduction with no distance-dependent amplification. On Cora the mean ratio drops sharply at short range ($0.28$ at $d{=}1$, $0.18$ at $d{=}2$) and rises above $1.0$ at $d{=}4$ ($3.50$) with a small nonzero norm at $d{\geq}5$ where the original is identically zero, but the per-pair median is identically zero at every distance under LDS. The mean-driven signal is therefore outlier-driven, consistent with the strongly bimodal Bernoulli edge distribution on Cora reported in Appendix~\ref{app:nc_mechanism} (Figure~\ref{fig:lds_edge_dist}, $72\%$ of $\theta_{ij}$ below $0.01$): LDS aggressively prunes the original adjacency and adds a small number of long-range edges that contribute to the mean of a few outlier pairs without producing systematic long-range amplification across the typical pair. Neither pattern is consistent with the over-squashing-relief mechanism, which would predict ratios above $1.0$ at large distances under both robust statistics. We interpret the LDS gain as task-discriminative edge placement plus inner-channel implicit regularization rather than over-squashing relief.

\begin{table}[h!]
\centering
\caption{NC Jacobian norms by graph distance. Columns: mean and per-pair median (a robust statistic that suppresses outlier-pair contributions). Ratio $=$ LDS $\div$ Original. On Citeseer both statistics are robust and below $1.0$ at every distance. On Cora the mean ratio appears amplified at $d{=}4$ ($3.50$) and $d{\geq}5$ (LDS produces a small nonzero value where the original is identically zero), but the per-pair median is identically zero at every distance under LDS, indicating the mean signal is driven by a small number of outlier pairs from LDS-added long-range edges rather than systematic relief. Neither dataset shows the long-range Jacobian amplification expected under an over-squashing-relief mechanism.}
\label{tab:nc_jacobian}
\small
\begin{tabular}{llcccccc}
\toprule
 & & \multicolumn{3}{c}{Mean} & \multicolumn{3}{c}{Per-pair median} \\
\cmidrule(lr){3-5} \cmidrule(lr){6-8}
Dataset & Distance & Original & LDS & Ratio & Original & LDS & Ratio \\
\midrule
\multirow{5}{*}{Cora}
 & $d{=}1$    & $140.963$ & $40.072$ & $0.284$ & $97.426$ & $0.000$ & $0.000$ \\
 & $d{=}2$    & $14.645$  & $2.687$  & $0.183$ & $4.415$  & $0.000$ & $0.000$ \\
 & $d{=}3$    & $2.415$   & $1.511$  & $0.626$ & $0.807$  & $0.000$ & $0.000$ \\
 & $d{=}4$    & $0.254$   & $0.889$  & $3.502$ & $0.052$  & $0.000$ & $0.000$ \\
 & $d{\geq}5$ & $0.000$   & $0.508$  & $-$     & $0.000$  & $0.000$ & $-$     \\
\midrule
\multirow{5}{*}{Citeseer}
 & $d{=}1$    & $317.125$ & $281.418$ & $0.887$ & $180.377$ & $175.981$ & $0.976$ \\
 & $d{=}2$    & $53.747$  & $47.968$  & $0.892$ & $12.478$  & $12.364$  & $0.991$ \\
 & $d{=}3$    & $8.143$   & $7.541$   & $0.926$ & $2.097$   & $2.020$   & $0.963$ \\
 & $d{=}4$    & $0.899$   & $0.770$   & $0.856$ & $0.199$   & $0.193$   & $0.968$ \\
 & $d{\geq}5$ & $0.000$   & $0.000$   & $-$     & $0.000$   & $0.000$   & $-$     \\
\bottomrule
\end{tabular}
\end{table}

\begin{figure}[h!]
\centering
\includegraphics[width=\linewidth]{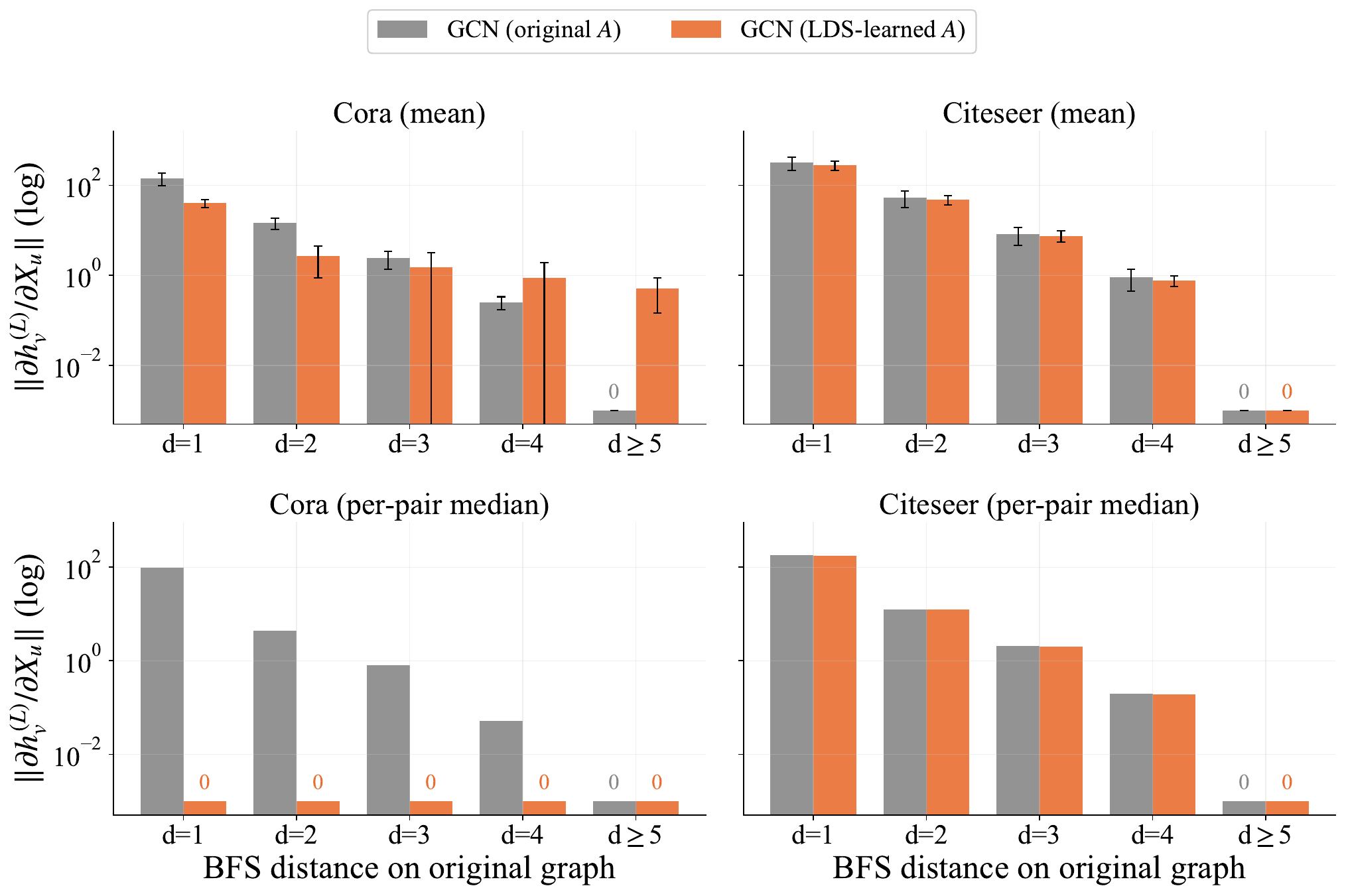}
\caption{NC Jacobian by graph distance. Top row: mean (matches Table~\ref{tab:nc_jacobian} mean columns). Bottom row: per-pair median, which suppresses outlier-pair contributions. On Citeseer both statistics show LDS at or below the original at every distance. On Cora the mean panel appears to show LDS amplification at $d{=}4$ and $d{\geq}5$, but the median panel reveals that the LDS Jacobian norm is identically zero at every distance under the typical pair: the apparent mean amplification is driven by a small number of LDS-added long-range edges that contribute to a few outlier pairs (max per-seed standard deviation $\sigma \approx 22.30$ at $d{=}4$, dwarfing the across-seed mean of $0.89$). Neither the mean nor the median panel exhibits the systematic long-range Jacobian amplification that would be expected if the LDS gain originated from over-squashing relief. Bars labeled ``$0$'' are at the display floor.}
\label{fig:jacobian_nc}
\end{figure}
\FloatBarrier

%% file: appendix_10_spectral_negative.tex
\section{Spectral metrics dissociate from task improvement}
\label{app:spectral_negative}

A natural alternative explanation for the divergence between bilevel and static rewiring is spectral: that bilevel optimizes a graph-spectral objective (larger spectral gap $\lambda_2$ or narrower effective spectral support $W_\epsilon$) that static methods do not match. We tested this directly on six benchmarks. The data shows a clean dissociation between graph-spectral metrics and task improvement: GTR maximizes $\lambda_2$ on every benchmark by an order of magnitude yet produces no MAE gain, while bilevel achieves the largest forecasting gains on flow without maximizing $\lambda_2$ (and often decreasing it).

\paragraph{LCC-aligned measurement protocol.} The standard ST adjacencies are highly fragmented under the threshold $\tau = 0.1$ used by the standard implementation (35 components on PeMS04, 93 on PeMS07, 33 on AirQuality). Static rewiring methods that add edges (FoSR, BORF) merge components and change the largest connected component (LCC); whole-graph $\lambda_2$ is identically zero for any disconnected graph and is therefore an uninformative quantity. We restrict every comparison to the original-graph LCC node subset: spectra are computed on the subgraph induced by the LCC of the unmodified adjacency, identical for every method. Bilevel preserves LCC structure exactly on all bilevel runs (incremental edge reweighting does not change connectivity); static methods that connect new components are evaluated on the restricted subset, dropping any newly-added inter-component edges. Whole-graph $\lambda_2$ is reported separately in Table~\ref{tab:spectral_negative_lambda}.

\paragraph{Spectral gap dissociates from MAE.} Table~\ref{tab:spectral_negative_lambda} reports $\lambda_2$ on the LCC subgraph for the original adjacency, four static rewiring methods, and the bilevel-learned adjacency. GTR is a clean spectral-gap maximizer: $\lambda_2$ increases by 2.6$\times$ to 15.5$\times$ across the five benchmarks where the original $\lambda_2$ is numerically resolved (PeMS07 has a near-zero baseline). Bilevel decreases $\lambda_2$ on four of six datasets, and where it increases (AirQuality), the increase does not produce a forecasting gain. The cross-method comparison against MAE (rightmost column, from Table~\ref{tab:survey}) shows no relationship between $\lambda_2$ change and task improvement: GTR maximizes $\lambda_2$ everywhere with zero MAE gain, while bilevel produces $-3.8$ to $-6.4\%$ on flow datasets without raising $\lambda_2$.

\begin{table}[h!]
\centering
\caption{Spectral gap $\lambda_2$ ($\times 10^3$) on the original-graph LCC subgraph. Whole-graph $\lambda_2$ is identically zero for disconnected graphs and is reported in the rightmost column for the four datasets where the LCC differs from the whole graph. GTR achieves 2.6 to 15.5$\times$ increase in LCC $\lambda_2$ on every benchmark with a numerically resolved baseline yet produces no MAE gain (Table~\ref{tab:survey}); bilevel decreases $\lambda_2$ on four of six datasets while producing the largest forecasting gains on flow.}
\label{tab:spectral_negative_lambda}
\small
\begin{tabular}{lcccccccc}
\toprule
Dataset & Original & FoSR & GTR & BORF & SDRF & Bilevel & $\Delta$ MAE bilevel & Whole-graph $\lambda_2$ \\
\midrule
PeMS04   & $6.0$ & $6.0$ & $55.0$ & $13.0$ & $6.0$ & $3.0_{\pm 2.0}$ & $-5.6\%$ & $0$ \\
PeMS07   & $0.0$ & $0.0$ & $8.0$  & $1.0$  & $0.0$ & $0.0_{\pm 0.0}$ & $-6.4\%$ & $0$ \\
PeMS08   & $2.0$ & $2.0$ & $31.0$ & $4.0$  & $3.0$ & $1.0_{\pm 1.0}$ & $-3.8\%$ & $0$ \\
METR-LA  & $8.0$ & $15.0$ & $21.0$ & $8.0$  & $8.0$ & $5.0_{\pm 1.0}$ & $+2.5\%$ & $\approx \lambda_2^{\mathrm{LCC}}$ \\
PeMS-BAY & $5.0$ & $5.0$ & $13.0$ & $9.0$  & $5.0$ & $1.0_{\pm 1.0}$ & $-0.7\%$ & $\approx \lambda_2^{\mathrm{LCC}}$ \\
AirQuality & $1.0$ & $1.0$ & $11.0$ & $3.0$  & $1.0$ & $2.0_{\pm 0.0}$ & $+0.1\%$ & $0$ \\
\bottomrule
\end{tabular}
\\[0.4em]
\footnotesize Values in $\times 10^3$; original $\lambda_2$ for PeMS07 is below display precision but greater than zero, so a multiplier against GTR is ill-defined. METR-LA (2 components) and PeMS-BAY (7 components) have whole-graph spectra dominated by the LCC.
\end{table}

\paragraph{Effective spectral support also dissociates.} Table~\ref{tab:spectral_negative_we} reports $W_\epsilon$ at $\epsilon = 0.05$ on the same LCC subgraph. Static methods produce minimal $W_\epsilon$ change in either direction (typically within $\pm 0.05$). Bilevel widens $W_\epsilon$ on five of six datasets, with the largest widening on the three datasets where it does not produce a forecasting gain (METR-LA $+35.3\%$, PeMS-BAY $+42.2\%$, AirQuality $+43.0\%$); on the three flow datasets where bilevel does succeed, $W_\epsilon$ change is small (PeMS04 $-1.8\%$, PeMS07 $+5.5\%$, PeMS08 $+5.1\%$). $W_\epsilon$ change has no consistent sign correlation with forecasting gain.

\begin{table}[h!]
\centering
\caption{Effective spectral support $W_\epsilon$ at $\epsilon = 0.05$ on the original-graph LCC subgraph. Static methods produce minimal $W_\epsilon$ change. Bilevel widens $W_\epsilon$ most strongly on the three datasets where it does not produce a forecasting gain (METR-LA, PeMS-BAY, AirQuality).}
\label{tab:spectral_negative_we}
\small
\begin{tabular}{lcccccc}
\toprule
Method & PeMS04 & PeMS07 & PeMS08 & METR-LA & PeMS-BAY & AirQuality \\
\midrule
Original & $1.827$ & $1.532$ & $1.838$ & $1.256$ & $1.214$ & $1.181$ \\
FoSR     & $1.827$ & $1.510$ & $1.838$ & $1.255$ & $1.214$ & $1.181$ \\
GTR      & $1.805$ & $1.522$ & $1.821$ & $1.251$ & $1.215$ & $1.186$ \\
BORF     & $1.856$ & $1.419$ & $1.953$ & $1.287$ & $1.220$ & $1.181$ \\
SDRF     & $1.936$ & $1.483$ & $1.879$ & $1.289$ & $1.215$ & $1.179$ \\
Bilevel  & $1.794_{\pm.097}$ & $1.617_{\pm.002}$ & $1.932_{\pm.029}$ & $1.700_{\pm.037}$ & $1.726_{\pm.045}$ & $1.689_{\pm.121}$ \\
\bottomrule
\end{tabular}
\end{table}

\paragraph{Implication.} GTR is a strong spectral-gap maximizer that produces no forecasting gain; bilevel produces the largest forecasting gains on flow without maximizing $\lambda_2$ or narrowing $W_\epsilon$. Spectral gap optimization and effective-support narrowing are therefore dissociated from task improvement on these benchmarks. This does not contradict the worst-case propagation bounds of~\citet{topping2022understanding,digiovanni2023oversquashing,marisca2025oversquashing}, which bound a different quantity (worst-case sensitivity, not task loss); it constrains the space of mechanistic explanations for the bilevel gain by ruling out a natural one. Combined with the inner-channel attribution (Sections~\ref{sec:find-decomp}--\ref{sec:find-mechanism}), this strengthens the conclusion that the bilevel gain on flow forecasting is dominated by inner-loop training dynamics rather than by graph modification at the spectral level.

%% file: appendix_04_architecture.tex
\section{Architecture Selectivity}
\label{app:backbone_scope}

\begin{table}[h!]
\centering
\caption{Backbone scope on PeMS04. Decoupled polynomial-filter backbones converge to a narrow test-MAE floor despite varying vanilla baselines. DCRNN is empirically near-flat in the bilevel channel.}
\label{tab:backbone_scope}
\small
\begin{tabular}{llccc}
\toprule
Backbone & Regime & Vanilla & Bilevel & $\Delta$ MAE \\
\midrule
DiffConv     & Decoupled & $20.925_{\pm.043}$           & $19.754_{\pm.036}$         & $-5.6\%$ \\
ChebConv     & Decoupled & $20.742_{\pm.035}$           & $19.812_{\pm.035}$         & $-4.5\%$ \\
MPGRU        & Decoupled & $21.081_{\pm.019}$           & $19.814_{\pm.038}$           & $-6.0\%$ \\
DCRNN        & Coupled   & $20.154_{\pm.032}$           & $20.056_{\pm.112}$           & $-0.5\%$ \\
\bottomrule
\end{tabular}
\end{table}

The DCRNN coupled backbone consumes an external adjacency but embeds the spatial operation inside GRU gates, breaking the factored Jacobian of~\cite{marisca2025oversquashing}. Bilevel gain on DCRNN is within noise of zero, while cross-backbone graph distillation (DiffConv-learned adjacency transferred to vanilla DCRNN) improves by $-1.8\%$ (Table~\ref{tab:distill_cross}). This is consistent with the coupled gradient pathway preventing the outer loop from updating the graph effectively while graph quality itself remains relevant once provided as a fixed input.

\section{Adaptive Baseline Reference}
\label{app:adaptive_baselines}

\begin{table}[h!]
\centering
\caption{Absolute performance of adaptive spatio-temporal architectures on flow datasets (test MAE, lower is better). AGCRN does not consume an external adjacency in the forward pass~\citep{bai2020adaptive}; GraphWaveNet's external adjacency usage is confounded by its simultaneously learned internal matrix~\citep{wu2019graph}. Neither admits the frozen-$\phi$ decomposition as a well-defined control. The table is provided for reference.}
\label{tab:adaptive}
\small
\begin{tabular}{lccc}
\toprule
Method & PeMS04 & PeMS07 & PeMS08 \\
\midrule
AGCRN (vanilla)        & $19.38^{\ast}$              & $20.57^{\ast}$              & $15.32^{\ast}$ \\
GraphWaveNet (vanilla) & $18.53^{\ast}$              & $20.47^{\ast}$              & $14.40^{\ast}$ \\
Bilevel DiffConv (ours) & $19.754_{\pm.036}$         & $20.848_{\pm.035}$          & $15.031_{\pm.071}$ \\
\bottomrule
\end{tabular}
\\[0.4em]
\footnotesize $^{\ast}$Adaptive backbone values reproduced by~\citet{liu2023staeformer} in their STAEformer evaluation, cross-validated by BasicTS+ and STID. The frozen-$\phi$ decomposition does not apply to adaptive architectures (Section~\ref{sec:framework-scope}); we do not report bilevel measurements for these backbones.
\end{table}

Adaptive STGNN architectures (AGCRN~\citep{bai2020adaptive}, GraphWaveNet~\citep{wu2019graph}) internalize graph learning into the forward pass. AGCRN's adaptive embedding $E E^\top$ replaces external adjacency entirely, and GraphWaveNet's self-adaptive matrix $\tilde{A}_{\mathrm{adp}}$ is jointly learned alongside any external adjacency. Neither admits a well-defined external rewiring channel for the frozen-$\phi$ decomposition (Section~\ref{sec:framework-scope}). We report their absolute performance in Table~\ref{tab:adaptive} from the STAEformer evaluation of~\citet{liu2023staeformer}, for cross-dataset comparability with bilevel DiffConv. The finding that $78$--$101\%$ of the reported bilevel gain on flow forecasting derives from inner-loop training dynamics holds within the external-rewiring regime that bilevel GSL methods claim to occupy, where ``where do gains come from'' is a meaningful question.

\section{Graph Distillation}
\label{app:distillation}

Frozen-$\phi$ and graph distillation are complementary diagnostics with different scopes. Frozen-$\phi$ measures the contribution of graph modification while \emph{holding inner-loop training constant}: $\theta$ is trained on $\Ainit$ with the same $T$-step inner loop. Distillation measures the contribution of graph modification \emph{plus graph-model co-adaptation}: $\theta$ is re-trained from scratch on $\Aphi$, partially reproducing the joint adaptation of $\theta$ and $\Aphi$ that occurred during bilevel training. Two diagnostically distinguishable patterns follow. When bilevel optimization succeeds, distillation tends to exceed the frozen-$\phi$ graph share by the magnitude of co-adaptation: on PeMS04 with DiffConv, distillation recovers $44\%$ of the bilevel gain while frozen-$\phi$ assigns $22\%$ to the graph channel, locating the $22$-point gap as the empirical co-adaptation magnitude on this dataset. When bilevel optimization is blocked by an architectural scope condition, distillation can exceed direct bilevel because distillation bypasses the blocked optimization path: on DCRNN/PeMS04, direct bilevel improves vanilla by only $-0.5\%$ (within noise) while distillation of a DiffConv-learned graph improves by $-1.8\%$. The two cases are not anomalies but expected signatures of the framework. By ``co-adaptation'' we mean the joint solution $(\theta^*, A_{\phi^*})$ reached by the bilevel inner loop, in which $\theta^*$ is tuned to the evolving $A_\phi$ at each outer iteration; distillation re-trains $\theta$ from random initialization on the final $A_\phi$, recovering only the portion of the joint solution that survives this re-initialization.

\begin{table}[h!]
\centering
\caption{Same-backbone graph distillation on decoupled backbones. DiffConv and MPGRU yield consistent graph shares within each dataset (PeMS04: 44/47\%, PeMS07: 1/1\%, PeMS08: 45/48\%). The bilevel-learned adjacency from a bilevel run with the indicated backbone is used as a fixed adjacency for vanilla training of the same backbone. Graph share is the distillation gain as a fraction of the bilevel gain. ChebConv negative shares reflect cross-filter incompatibility: a DiffConv-tuned adjacency does not transfer cleanly across spatial filters. The coupled DCRNN backbone falls outside the frozen-$\phi$ scope (Section~\ref{sec:framework-scope}); cross-backbone DCRNN evidence appears in Table~\ref{tab:distill_cross}.}
\label{tab:distill_same}
\small
\begin{tabular}{llcccc}
\toprule
Dataset & Backbone & Vanilla & Distilled ($A_\phi$) & Bilevel & Graph share \\
\midrule
\multirow{3}{*}{PeMS04}
 & DiffConv         & $20.925_{\pm.043}$ & $20.410_{\pm.026}$ & $19.754_{\pm.036}$ & $44\%$ \\
 & ChebConv         & $20.742_{\pm.035}$ & $20.814_{\pm.023}$ & $19.812_{\pm.035}$ & $-8\%$ \\
 & MPGRU            & $21.081_{\pm.019}$ & $20.486_{\pm.029}$ & $19.814_{\pm.038}$ & $47\%$ \\
\midrule
\multirow{3}{*}{PeMS07}
 & DiffConv         & $22.277_{\pm.038}$ & $22.260_{\pm.024}$ & $20.848_{\pm.035}$ & $1\%$ \\
 & ChebConv         & $32.186_{\pm1.090}$ & $31.702_{\pm.820}$ & $29.450_{\pm1.216}$ & $18\%$ \\
 & MPGRU            & $22.414_{\pm.036}$ & $22.406_{\pm.044}$ & $20.908_{\pm.039}$ & $1\%$ \\
\midrule
\multirow{3}{*}{PeMS08}
 & DiffConv         & $15.621_{\pm.057}$ & $15.353_{\pm.039}$ & $15.031_{\pm.071}$ & $45\%$ \\
 & ChebConv         & $15.425_{\pm.021}$ & $15.542_{\pm.037}$ & $14.898_{\pm.019}$ & $-22\%$ \\
 & MPGRU            & $15.651_{\pm.014}$ & $15.366_{\pm.018}$ & $15.062_{\pm.061}$ & $48\%$ \\
\bottomrule
\end{tabular}
\end{table}

\begin{table}[h!]
\centering
\caption{Cross-backbone graph transfer on PeMS04. The bilevel-learned DiffConv adjacency is transferred as a fixed adjacency for vanilla training of a different target backbone. MPGRU is decoupled (direct bilevel produces $-6.0\%$ on its own); DCRNN is coupled (direct bilevel produces $-0.5\%$, within noise, on PeMS04). The DCRNN positive distillation result is scope-boundary evidence: graph quality remains useful when supplied as a fixed input that bypasses the blocked optimization path.}
\label{tab:distill_cross}
\small
\begin{tabular}{lccc}
\toprule
Target backbone & Vanilla & Distilled ($A_\phi$) & $\Delta$ vs.\ vanilla \\
\midrule
MPGRU (decoupled)  & $21.081_{\pm.019}$ & $20.541_{\pm.050}$ & $-2.6\%$ \\
DCRNN (coupled)    & $20.154_{\pm.032}$ & $19.795_{\pm.014}$ & $-1.8\%$ \\
\bottomrule
\end{tabular}
\end{table}

\paragraph{PeMS07.} On PeMS07 distillation recovers only $1\%$ of the bilevel gain, consistent with the broader triangulation showing near-zero graph contribution on this dataset (Appendix~\ref{app:triangulation}).

On PeMS04 and PeMS08 distillation recovers $44$--$48\%$ on decoupled backbones (DiffConv, MPGRU), larger than the $19$--$22\%$ graph share measured by frozen-$\phi$; the difference reflects the co-adaptation magnitude on these flow datasets. Cross-backbone distillation results in Table~\ref{tab:distill_cross} use compatible decoupled backbones plus DCRNN as the scope-boundary case.

\paragraph{Negative graph shares: cross-filter incompatibility.} ChebConv shows mixed graph shares across flow datasets ($-8\%$ on PeMS04, $+18\%$ on PeMS07, $-22\%$ on PeMS08), reflecting that a DiffConv-tuned adjacency is filter-specific and does not transfer cleanly across spatial filters; the variability is a property of the cross-backbone transfer rather than of the bilevel-learned adjacency itself.

\paragraph{Cora illustrates the conceptual distinction.} Frozen-$\phi$ attributes $63\%$ of the LDS gain to the graph channel on Cora (Table~\ref{tab:decomp}), while distillation recovers approximately $0\%$ at the modal binarization threshold (Table~\ref{tab:nc_distill}, $\tau{=}0.5$). The gap is the empirical magnitude of $\theta$-$A_\phi$ co-adaptation: the bilevel inner loop builds GCN parameters jointly tuned to the evolving $A_\phi$, and re-training a fresh GCN on the binarized $A_\phi$ does not reproduce this joint solution. Continuous-weight ST distillation does not require binarization and shows smaller co-adaptation gaps (PeMS04: frozen-$\phi$ graph $22\%$ vs.\ distillation $44\%$), reflecting the same mechanism with the binarization step removed.

\paragraph{NC scope.} We do not report graph distillation in the node classification regime as a same-protocol Table~\ref{tab:distill_same} entry: LDS produces Bernoulli edge probabilities rather than continuous edge weights, so any distillation procedure requires a binarization threshold whose choice strongly affects the result. NC distillation results across thresholds are reported separately in Table~\ref{tab:nc_distill}.

\paragraph{Distillation as a method-agnostic fallback.} Distillation extends the diagnostic scope to GSL methods outside the frozen-$\phi$ control. Methods that satisfy the third precondition (positive clean-graph gain) but violate the first or second (e.g., single-level methods like IDGL, EM-based methods like GEN if they produced positive clean-graph gain) admit graph distillation as a method-agnostic complement. In our experiments, GEN and Pro-GNN do not produce positive clean-graph gain on Cora (Table~\ref{tab:method_scope}), so neither diagnostic applies to those methods on the clean Cora graph; this is a property of the methods themselves, not of the diagnostic framework.

%% file: appendix_05_triangulation.tex
\section{PeMS07 Triangulation and End-to-End Baseline}
\label{app:triangulation}

Three independent diagnostics on PeMS07 converge on near-zero graph contribution: frozen-$\phi$ recovers $101\%$ of the bilevel gain (Table~\ref{tab:decomp} in main text), distillation recovers $1\%$ (Table~\ref{tab:distill_same}), and the end-to-end reweight baseline below improves vanilla by only $0.07$ MAE versus bilevel's $1.43$ MAE. Each diagnostic addresses a different attribution question; the convergence is the empirical anchor for the inner-channel claim. Figure~\ref{fig:triangulation_pems07} summarizes the three.

\begin{figure}[h!]
\centering
\includegraphics[width=0.78\linewidth]{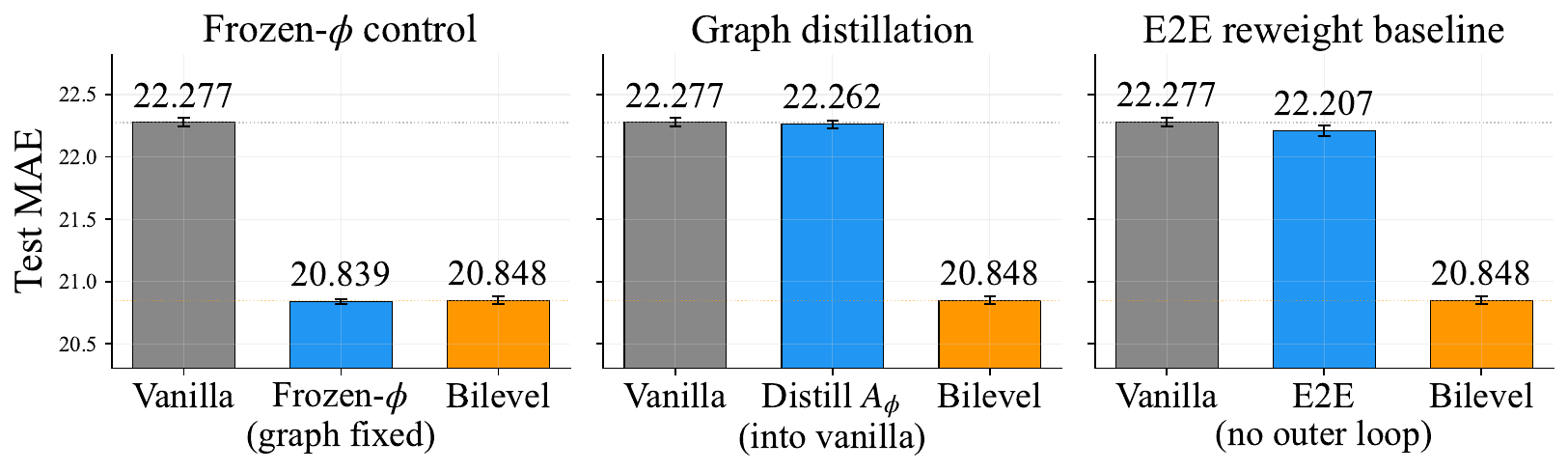}
\caption{Triangulated evidence on PeMS07. Three methodologically independent diagnostics, each answering a different attribution question, all assign near-zero weight to the graph channel: the frozen-$\phi$ control (left, $\sim 101\%$ inner share), graph distillation back into vanilla training (middle, $1\%$ recovery), and an end-to-end reweighting baseline that learns the same parameterization on the training loss without an inner-loop schedule (right, $20\times$ weaker than bilevel).}
\label{fig:triangulation_pems07}
\end{figure}

\begin{table}[h!]
\centering
\caption{End-to-end reweight baseline vs bilevel on flow datasets (DiffConv). E2E reweight uses the same parameterization $A \odot \mathrm{softmax}_{\mathrm{row}}(W_\phi)$ as bilevel but optimizes $W_\phi$ via the training loss with a single optimizer for $(\theta, W_\phi)$, eliminating the inner-loop $T$-step structure and the train/validation split between inner and outer objectives.}
\label{tab:e2e}
\small
\begin{tabular}{lccc}
\toprule
Method & PeMS04 & PeMS07 & PeMS08 \\
\midrule
Vanilla         & $20.925_{\pm.043}$ & $22.277_{\pm.038}$ & $15.621_{\pm.057}$ \\
E2E reweight    & $20.552_{\pm.056}$ & $22.207_{\pm.048}$ & $15.273_{\pm.051}$ \\
Bilevel         & $19.754_{\pm.036}$ & $20.848_{\pm.035}$ & $15.031_{\pm.071}$ \\
\midrule
$\Delta$(E2E - vanilla)     & $-0.373$ & $-0.070$ & $-0.348$ \\
$\Delta$(Bilevel - vanilla) & $-1.171$ & $-1.429$ & $-0.590$ \\
Bilevel/E2E ratio           & $3.1\times$ & $20.4\times$ & $1.7\times$ \\
\bottomrule
\end{tabular}
\end{table}

The E2E baseline is the strongest negative control for graph quality: it learns the same parameterization with the same data, but without the inner-loop multi-step optimization and without the train/validation split between objectives. On PeMS04 and PeMS08 the E2E reweight recovers $32\%$ and $59\%$ of the bilevel gain respectively; on PeMS07 it recovers only $5\%$. The $20.4\times$ bilevel-to-E2E ratio on PeMS07 is the converse of the $1.7\times$ ratio on PeMS08: PeMS07 is the regime where neither the joint train-loss gradient on the same $(\theta, \phi)$ parameterization nor the bilevel-learned graph applied as a fixed input (distillation) can recover the bilevel gain, leaving the inner-loop multi-step training mechanism as the only channel that carries it.

%% file: appendix_06_node_classification.tex
\section{Node Classification Full Results}
\label{app:nc_full}

\paragraph{NC backbone configurations follow GSLB.} We use GSLB's released per-method configurations~\citep{li2023gslb} without modification: the vanilla GCN baseline is a 2-layer \texttt{GCN} encoder (hidden $32$, row-normalized adjacency) and the LDS backbone is a 2-layer \texttt{MetaDenseGCN} (hidden $128$ on Cora / $64$ on Citeseer, symmetric $D^{-1/2}(A{+}I)D^{-1/2}$ normalization), reflecting GSLB's published per-method tuning. The frozen-$\phi$ control matches the LDS configuration exactly (same backbone class, hidden dim, normalization), differing only in whether the outer-loop graph parameter is updated. Consequently, in the NC regime the vanilla$\rightarrow$frozen-$\phi$ comparison reflects the joint effect of (i) backbone capacity, (ii) normalization, and (iii) inner-loop dynamics under GSLB's per-method convention; the frozen-$\phi\rightarrow$bilevel comparison, which isolates the graph channel cleanly under matched architecture, is unaffected by this convention. ST experiments do not exhibit this asymmetry: vanilla, frozen-$\phi$, and bilevel share the same DiffConv backbone, so the ST inner share is a clean attribution.

\paragraph{Parameterization robustness setting.} To test whether the frozen-$\phi$ decomposition extends across graph parameterizations and training regimes, we evaluate LDS~\citep{franceschi2019learning} on Cora and Citeseer with Bernoulli edge probabilities $A \sim \mathrm{Bernoulli}(\sigma(\theta_{ij}))$, full-batch inner updates, and GCN inner parameters reset per outer iteration (full-batch reset regime in Section~\ref{sec:framework-igr}). Citeseer establishes that the decomposition continues to separate channels under a discrete-edge parameterization with a different training regime: the LDS gain over vanilla GCN is $+4.25$ accuracy points (matching the original LDS paper's $+4.32$pp Citeseer gap to within $0.07$pp), the frozen-$\phi$ variant recovers $44\%$ of this gain, and all three channel tests reach $p < 10^{-3}$ (with $p_{\mathrm{total}}$ and $p_{\mathrm{graph}}$ at $p < 10^{-4}$). On Cora the total LDS gain is smaller ($+1.65$pp); the compressed bilevel headroom yields a wide bootstrap $95\%$ CI on the inner-share ratio ($[8.7, 65.1]\%$). Two independent mechanism checks corroborate the decomposition in this regime: the $T$-sweep on both datasets exhibits the predicted full-batch reset signature where frozen-$\phi$ is $T$-invariant past inner convergence (Appendix~\ref{app:nc_tsweep}, Figure~\ref{fig:tsweep}), and the corruption study on Cora (Section~\ref{sec:find-scope}, Figure~\ref{fig:corruption}) shows the decomposition continues to separate channels at every corruption level we test. Independent graph-side evidence appears in Appendices~\ref{app:nc_mechanism} and~\ref{app:nc_distillation}. Under GSLB's per-method convention, the NC inner channel reflects the joint contribution of inner-loop training dynamics and the LDS-tuned backbone; the graph channel ($56$--$63\%$ of total NC gain) is identified cleanly under matched architecture in either reading. The ST inner channel ($78$--$101\%$) is unaffected by this convention since vanilla, frozen-$\phi$, and bilevel share the same DiffConv backbone.

\begin{table}[h!]
\centering
\caption{Node classification detailed results (LDS with GCN inner). Columns $p_{\mathrm{total}}$, $p_{\mathrm{graph}}$, $p_{\mathrm{inner}}$ report paired $t$-test $p$-values for LDS vs GCN, LDS vs frozen-$\phi$, and frozen-$\phi$ vs GCN respectively. Inner share is $\Delta_{\mathrm{inner}} / (\Delta_{\mathrm{inner}} + \Delta_{\mathrm{graph}})$ with $95\%$ bootstrap percentile CI ($B{=}1{,}000$ paired resamples).}
\label{tab:nc_detail}
\resizebox{\textwidth}{!}{%
\small
\begin{tabular}{lccccccc}
\toprule
Dataset & LDS & Frozen-$\phi$ & GCN & Inner share [CI] & $p_{\mathrm{total}}$ & $p_{\mathrm{graph}}$ & $p_{\mathrm{inner}}$ \\
\midrule
Cora     & $82.75_{\pm.84}\%$ & $81.71_{\pm.55}\%$ & $81.10_{\pm.67}\%$ & $37\%$ [8.7, 65.1]    & $<0.001$ & $<0.005$ & $0.069$ \\
Citeseer & $73.53_{\pm.83}\%$ & $71.13_{\pm.54}\%$ & $69.28_{\pm.88}\%$ & $44\%$ [31.1, 54.9]   & $<0.0001$ & $<0.0001$ & $<0.001$ \\
\bottomrule
\end{tabular}%
}
\end{table}

\begin{figure}[h!]
\centering
\includegraphics[width=0.6\linewidth]{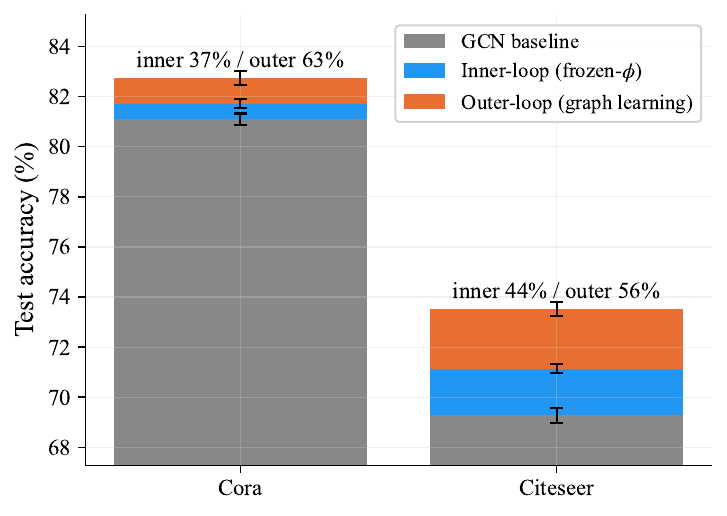}
\caption{Decomposition of LDS gain into inner and graph channels on Cora and Citeseer. Cora: total gain $+1.65$pp (inner $0.61$pp / graph $1.04$pp, $37\%$ inner share). Citeseer: total gain $+4.25$pp (inner $1.85$pp / graph $2.40$pp, $44\%$ inner share). The graph channel contributes the larger fraction on both datasets, with inner-loop training dynamics providing a substantial but minority contribution.}
\label{fig:nc_decomposition}
\end{figure}

\paragraph{Cora inner share confidence interval.} The Cora inner-share point estimate is $37\%$ with a paired $t$-test $p$-value of $0.069$ for the inner-channel test, marginal at conventional thresholds. The nonparametric bootstrap $95\%$ confidence interval is $[8.7, 65.1]\%$. The wide interval reflects the small total gain on Cora ($+1.65$ accuracy points) combined with per-seed variance. On Citeseer the total gain is $+4.25$ points (matching the original LDS paper's $+4.32$pp Citeseer gap to within $0.07$pp), the per-seed noise is proportionally smaller, and the $95\%$ bootstrap CI for the inner share is $[31.1, 54.9]\%$. The Cora point estimate should be read qualitatively rather than as a precise attribution; the evidence supports a substantial inner-channel contribution, with Citeseer providing the tighter attribution due to the larger total gain.

\section{NC Corruption Study}
\label{app:nc_corruption}

Edges are corrupted by random swap: at corruption fraction $r$, a uniformly random subset of $r \cdot |E|$ existing edges is removed, and an equal number of new edges is sampled uniformly from the complement (excluding self-loops), preserving the total edge count and the symmetric structure of the adjacency. The same corruption seed is used across all three conditions (GCN, frozen-$\phi$, LDS) at each $r$, so the GCN, frozen-$\phi$, and LDS rows in Table~\ref{tab:corruption_detail} share an identical corrupted graph at every cell; only the training procedure differs. Per-cell paired $t$-tests (LDS vs.\ frozen-$\phi$ for the graph channel, frozen-$\phi$ vs.\ GCN for the inner channel, and LDS vs.\ GCN for the total) reach $p < 0.001$ at every $r \geq 0.10$; on clean Cora the inner-channel test is marginal ($p = 0.069$) and the graph-channel test reaches $p < 0.005$.

\begin{table}[h!]
\centering
\caption{Three-way decomposition on Cora under controlled edge-swap corruption. Inner share crosses $50\%$ near $r \approx 0.50$; the inner channel saturates at $\sim 3.7$ accuracy points across $r \in \{0.50, 0.75\}$ while the graph channel scales linearly with corruption.}
\label{tab:corruption_detail}
\small
\begin{tabular}{ccccccc}
\toprule
$r$ & GCN & Frozen-$\phi$ & LDS & $\Delta_{\mathrm{inner}}$ & $\Delta_{\mathrm{graph}}$ & Inner share [CI] \\
\midrule
0.00 & $81.10_{\pm.67}\%$ & $81.71_{\pm.55}\%$ & $82.75_{\pm.84}\%$ & $+0.61$ & $+1.04$ & $37\%$ [9, 65] \\
0.10 & $77.88_{\pm.77}\%$ & $79.69_{\pm.53}\%$ & $80.25_{\pm.92}\%$ & $+1.81$ & $+0.56$ & $77\%$ [59, 100] \\
0.25 & $67.91_{\pm.74}\%$ & $71.74_{\pm.73}\%$ & $74.21_{\pm.92}\%$ & $+3.83$ & $+2.47$ & $61\%$ [53, 69] \\
0.50 & $56.56_{\pm.44}\%$ & $60.26_{\pm.98}\%$ & $64.00_{\pm.79}\%$ & $+3.70$ & $+3.74$ & $50\%$ [42, 58] \\
0.75 & $41.97_{\pm2.24}\%$ & $45.65_{\pm.98}\%$ & $51.02_{\pm1.36}\%$ & $+3.68$ & $+5.37$ & $41\%$ [30, 47] \\
\bottomrule
\end{tabular}
\end{table}

Two non-monotonic features warrant explicit comment. First, the inner-channel absolute gain is monotonic in $r$ until it saturates: it grows from $0.61$pp at $r = 0$ to $1.81$ at $r = 0.10$, then plateaus at $3.83$, $3.70$, and $3.68$ across $r \in \{0.25, 0.50, 0.75\}$ (within-plateau coefficient of variation $2.2\%$). The saturation is consistent with the graph-independent character of the implicit gradient regularization mechanism in Proposition~\ref{prop:igr}: once $T$ inner steps are sufficient to recover the inner-loop training benefit, additional graph degradation cannot enlarge the inner channel further. Second, the inner share is non-monotonic in $r$ ($37\%$, $77\%$, $61\%$, $50\%$, $41\%$) even though both absolute channels are well-behaved: at $r = 0$ the small total ($\Delta_{\mathrm{total}} = 1.65$pp) places the ratio in a high-variance regime (bootstrap CI $[8.7, 65.1]\%$); at $r = 0.10$ the inner channel jumps faster than the graph channel because the latter has just emerged above its noise floor. The qualitative picture stable across $r \geq 0.25$ is the one reported in the main text: an inner channel ceiling and a linearly scaling graph channel produce a graph-quality-dependent balance with a $50/50$ crossover near $r = 0.50$.

\section{NC T-sweep}
\label{app:nc_tsweep}

\begin{table}[h!]
\centering
\caption{LDS T-sweep on Cora and Citeseer (test accuracy, \%). The frozen-$\phi$ control is $T$-invariant by definitional construction: the LDS outer loop reinitializes the GCN inner parameters at each outer iteration, so under frozen-$\phi$ the adjacency distribution does not vary with $T$ and the inner GCN sees a constant outer-iteration count rather than a constant-batch reuse. Mini-batch reuse amplification does not apply in the LDS implementation, in contrast to the ST regime; the full-batch reset regime (full-batch with inner-parameter reset) is the operative implementation. The frozen-$\phi$ accuracy listed is the single $T$-invariant value; the $T$-varying numbers in the LDS Bilevel column reflect the outer-loop dynamics, not mini-batch reuse.}
\label{tab:nc_tsweep}
\small
\begin{tabular}{lcccccc}
\toprule
Dataset & $T = 1$ & $T = 3$ & $T = 5$ & $T = 10$ & $T = 20$ & Frozen-$\phi$ \\
\midrule
Cora     & $83.53_{\pm.56}\%$ & $82.84_{\pm.73}\%$ & $82.75_{\pm.84}\%$ & $83.96_{\pm.59}\%$ & $83.31_{\pm.74}\%$ & $81.71_{\pm.55}\%$ \\
Citeseer & $73.52_{\pm.79}\%$ & $73.09_{\pm.76}\%$ & $73.53_{\pm.83}\%$ & $74.50_{\pm.45}\%$ & $74.66_{\pm.49}\%$ & $71.13_{\pm.54}\%$ \\
\bottomrule
\end{tabular}
\end{table}

Figure~\ref{fig:tsweep}(b) in the main text visualizes the LDS T-sweep on Cora and Citeseer.

The contrast with the ST T-sweep (Table~\ref{tab:hp_sens}) is the empirical signature of the mini-batch reuse versus full-batch reset distinction discussed in Section~\ref{sec:framework-igr}. Under the LDS implementation (full-batch gradients, GCN inner reset per outer iteration), $T$ has no lever on the frozen-$\phi$ inner channel, and the LDS bilevel curve with $T$ reflects outer-loop optimization dynamics rather than mini-batch reuse amplification. Under the ST implementation (mini-batch SGD, persistent inner state across outer iterations), $T$ directly controls accumulated parameter displacement on the same batch, and the frozen-$\phi$ MAE varies systematically with $T$. The two T-sweeps therefore correspond to the two regimes of Proposition~\ref{prop:igr}, with the predicted opposing $T$-signatures.

\section{NC Graph Distillation}
\label{app:nc_distillation}

\begin{table}[h!]
\centering
\caption{Node classification graph distillation: the LDS-learned Bernoulli edge-probability matrix is binarized at threshold $\tau$ and used as a fixed adjacency for vanilla GCN training. $\tau = 0.5$ corresponds to the modal binarization (edge present if probability exceeds $0.5$); $\tau = 0.1$ and $\tau = 0.01$ correspond to denser graphs that include edges with low LDS-assigned probability.}
\label{tab:nc_distill}
\small
\begin{tabular}{lcccccc}
\toprule
Dataset & Threshold $\tau$ & GCN base & Distilled & $\Delta$ vs GCN & $\Delta$ vs LDS & LDS reference \\
\midrule
\multirow{3}{*}{Cora}
 & $0.5$  & $81.10\%$ & $81.06_{\pm.69}\%$  & $-0.04$pp & $-1.69$pp & $82.75\%$ \\
 & $0.1$  & $81.10\%$ & $76.84_{\pm13.77}\%$ & $-4.26$pp & $-5.91$pp & $82.75\%$ \\
 & $0.01$ & $81.10\%$ & $76.80_{\pm2.79}\%$  & $-4.30$pp & $-5.95$pp & $82.75\%$ \\
\midrule
\multirow{3}{*}{Citeseer}
 & $0.5$  & $69.28\%$ & $69.28_{\pm.93}\%$  & $+0.00$pp & $-4.25$pp & $73.53\%$ \\
 & $0.1$  & $69.28\%$ & $69.31_{\pm.65}\%$  & $+0.03$pp & $-4.22$pp & $73.53\%$ \\
 & $0.01$ & $69.28\%$ & $72.25_{\pm1.53}\%$ & $+2.97$pp & $-1.28$pp & $73.53\%$ \\
\bottomrule
\end{tabular}
\end{table}

The threshold-dependence of NC distillation makes the protocol qualitatively different from the continuous-weight ST distillation in Appendix~\ref{app:distillation}. On Cora the modal binarization ($\tau = 0.5$) recovers essentially zero of the LDS gain; lowering $\tau$ degrades performance because over-thresholded sparse graphs lose informative edges. On Citeseer the densest setting ($\tau = 0.01$) recovers approximately $70\%$ of the LDS gain, but at the cost of including a large number of low-probability edges. The high standard deviation at Cora $\tau{=}0.1$ ($\pm 13.77$) reflects the brittleness of the binarized graph at intermediate thresholds. NC distillation provides independent evidence that LDS carries task-relevant graph modification on Citeseer; the threshold sensitivity reflects the discrete-edge parameterization rather than the diagnostic itself, and frozen-$\phi$ remains the primary attribution tool in the NC regime.

\section{LDS Edge Probability Distributions and Dirichlet Energy}
\label{app:nc_mechanism}

\begin{figure}[h!]
\centering
\includegraphics[width=\linewidth]{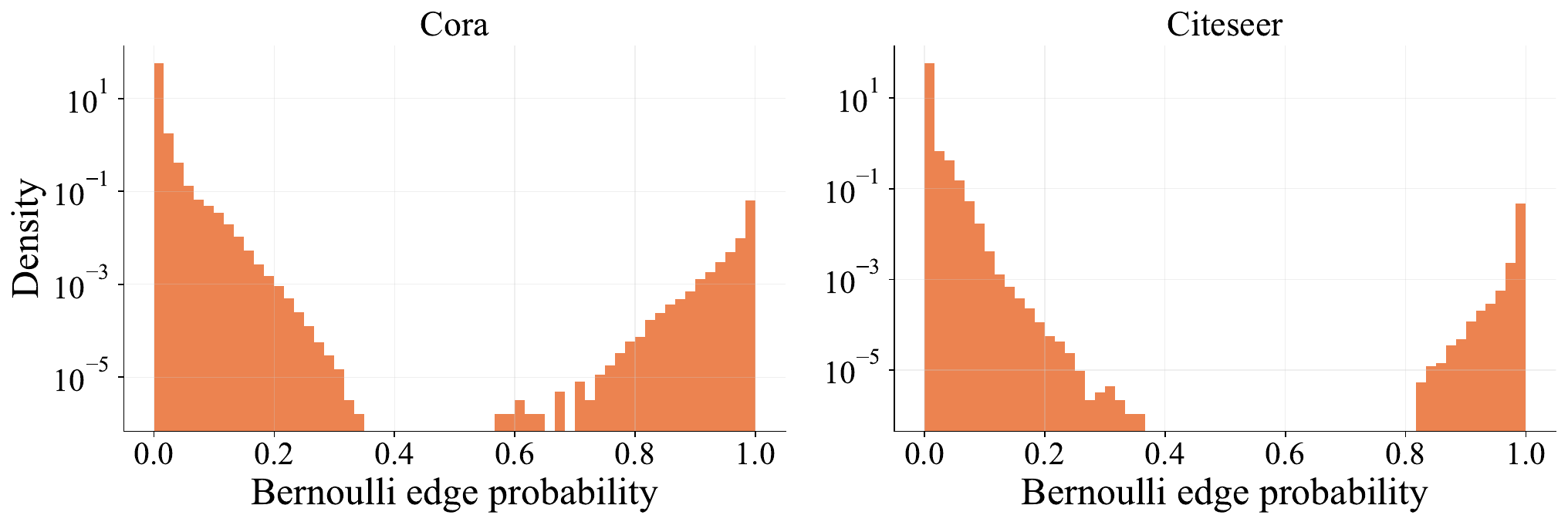}
\caption{Distribution of learned Bernoulli edge probabilities $\theta_{ij}$ from LDS at convergence. Citeseer shows a strongly bimodal distribution, evidence that LDS performs substantive graph modification. Cora concentrates more closely around the original adjacency values, consistent with the smaller total LDS gain on Cora.}
\label{fig:lds_edge_dist}
\end{figure}

The Bernoulli edge probability distributions show that the magnitude of LDS graph modification varies by an order of magnitude across the two datasets. On Citeseer, $89\%$ of all potential edge slots have $\theta_{ij} < 0.01$ and $4\%$ have $\theta_{ij} > 0.9$, producing a strongly bimodal distribution. The corresponding modal binarization (threshold $0.5$) substitutes $34\%$ of original edges and adds $7\%$ of new edges, yielding an adjacency substantively different from the citation graph. On Cora, the distribution concentrates near initial values: $72\%$ of $\theta_{ij}$ are below $0.01$ and only $1\%$ exceed $0.9$, with the modal binarization producing an adjacency nearly identical to the original. This asymmetry provides direct evidence that LDS exercises its graph-modification capacity to different degrees on the two datasets, and is consistent with the larger total LDS gain on Citeseer ($+4.25$ accuracy points) versus Cora ($+1.65$ points).

\paragraph{Dirichlet energy.} The Dirichlet energy $\mathcal{E}(X) = \mathrm{tr}(X^{\top} \tilde{L} X) / \mathrm{tr}(X^{\top} X)$, using the symmetric normalized Laplacian $\tilde{L} = I - D^{-1/2} A D^{-1/2}$, of raw node features on the initial versus LDS-learned adjacency (binarized, threshold $0.01$) increases substantially on both datasets: Cora from $0.8451$ to $0.9508 \pm 0.0016$ ($+12.5\% \pm 0.2\%$), Citeseer from $0.7897$ to $0.9530 \pm 0.0008$ ($+20.7\% \pm 0.1\%$). The increase is consistent across all seeds with negligible per-seed spread, indicating a stable direction of effect rather than a noise-driven artifact. LDS therefore trades feature-graph smoothness for task-discriminative edge placement, the opposite of an over-squashing-targeted rewiring, and consistent with the Jacobian measurement in Section~\ref{app:jacobian_cross} (Table~\ref{tab:nc_jacobian}).

\paragraph{GEN and Pro-GNN scope clarification.} Table~\ref{tab:method_scope} reports vanilla GCN, LDS, GEN, and Pro-GNN on Cora under the same protocol. Neither GEN nor Pro-GNN satisfies the method-level scope conditions of Section~\ref{sec:framework-scope}, but the underlying reasons differ. GEN's EM-based outer step replaces the adjacency with a posterior estimate at each iteration rather than modifying it incrementally from $\Ainit$; the frozen-$\phi$ control assumes that the outer parameter $\phi$ initialized at $\Ainit$ is the same object the procedure would otherwise update incrementally, and this correspondence does not hold for GEN's posterior-replacement step. Pro-GNN couples graph learning with an adversarial defense objective targeting noisy or perturbed graphs, and on the clean Cora graph it produces a negative total gain ($77.44 \pm 0.67\%$ versus vanilla GCN $81.10 \pm 0.67\%$); $\Delta_{\text{total}}$ is negative, so the inner/graph share decomposition is undefined regardless of any frozen-$\phi$ value. 
Both observations are consistent with the scope conditions in Section~\ref{sec:framework-scope} together with the implicit requirement of positive clean-graph gain for the share decomposition to be well-defined. GEN's posterior-replacement violates condition (ii) [incremental modification]; Pro-GNN's negative clean-graph gain on Cora makes $\Delta_{\mathrm{total}} \leq 0$, leaving the channel share undefined regardless of any frozen-$\phi$ value.
For methods outside this scope, the graph-distillation fallback (Appendices~\ref{app:distillation} and~\ref{app:nc_distillation}) remains applicable whenever the GSL method itself produces a positive clean-graph gain, which neither GEN nor Pro-GNN does on Cora.

\begin{table}[h!]
\centering
\caption{Method-level scope comparison on Cora. GEN and Pro-GNN are reported for completeness; neither produces a clean-graph gain that could be decomposed by the frozen-$\phi$ protocol.}
\label{tab:method_scope}
\small
\begin{tabular}{lccl}
\toprule
Method & Cora accuracy & $\Delta_{\text{total}}$ vs GCN & In scope? \\
\midrule
Vanilla GCN & $81.10_{\pm.67}\%$  & --- (baseline) & baseline \\
LDS         & $82.75_{\pm.84}\%$  & $+1.65$ pp     & yes \\
GEN         & $79.54_{\pm1.76}\%$ & $-1.56$ pp     & no (non-incremental EM step) \\
Pro-GNN     & $77.44_{\pm.67}\%$  & $-3.66$ pp     & no (negative $\Delta_{\text{total}}$) \\
\bottomrule
\end{tabular}
\end{table}

%% file: appendix_07_per_horizon.tex
\section{Per-Horizon MAE Breakdown}
\label{app:per_horizon}

We report per-horizon MAE on PeMS04, PeMS07, and PeMS08 across decoupled (ChebConv, DiffConv, MPGRU) and coupled (DCRNN) backbones, and the frozen-$\phi$ control on the primary DiffConv backbone. All values are mean $\pm$ std at the epoch with the best validation MAE. Bilevel gain on decoupled backbones increases monotonically with prediction horizon. We show below that this horizon scaling is reproduced almost entirely by the frozen-$\phi$ inner channel (per-horizon inner share $77$--$101\%$), consistent with the central inner-channel attribution of the bilevel gain in Section~\ref{sec:findings}. DCRNN shows weak horizon-dependent variation on PeMS04 and PeMS08 and uniform degradation on PeMS07, consistent with the architectural scope condition (Section~\ref{sec:framework-scope}).

\begin{table}[h!]
\centering
\caption{Per-horizon MAE on PeMS04. Bilevel gain increases monotonically with prediction horizon for decoupled backbones; DCRNN shows minimal horizon-dependent variation. The percentage in parentheses on each bilevel row is the relative improvement at $h = 12$.}
\label{tab:per_horizon_pems04}
\small
\begin{tabular}{llcccc}
\toprule
Backbone & Method & $h=3$ & $h=6$ & $h=9$ & $h=12$ \\
\midrule
\multirow{2}{*}{ChebConv}
 & Vanilla & $19.028_{\pm.022}$ & $20.570_{\pm.038}$ & $22.043_{\pm.045}$ & $23.726_{\pm.058}$ \\
 & Bilevel & $18.506_{\pm.018}$ & $19.689_{\pm.037}$ & $20.805_{\pm.051}$ & $22.175_{\pm.057}$ ($-6.5\%$) \\
\midrule
\multirow{3}{*}{DiffConv}
 & Vanilla & $19.108_{\pm.014}$ & $20.744_{\pm.026}$ & $22.323_{\pm.045}$ & $24.090_{\pm.063}$ \\
 & Frozen-$\phi$ & $18.614_{\pm.017}$ & $19.881_{\pm.032}$ & $21.076_{\pm.038}$ & $22.472_{\pm.050}$ \\
 & Bilevel & $18.488_{\pm.013}$ & $19.649_{\pm.042}$ & $20.730_{\pm.048}$ & $21.976_{\pm.058}$ ($-8.8\%$) \\
\midrule
\multirow{2}{*}{MPGRU}
 & Vanilla & $19.223_{\pm.009}$ & $20.885_{\pm.017}$ & $22.504_{\pm.034}$ & $24.293_{\pm.049}$ \\
 & Bilevel & $18.570_{\pm.030}$ & $19.713_{\pm.043}$ & $20.753_{\pm.048}$ & $22.003_{\pm.050}$ ($-9.4\%$) \\
\midrule
\multirow{2}{*}{DCRNN}
 & Vanilla & $18.766_{\pm.040}$ & $20.001_{\pm.036}$ & $21.165_{\pm.043}$ & $22.639_{\pm.026}$ \\
 & Bilevel & $18.874_{\pm.095}$ & $19.970_{\pm.097}$ & $20.902_{\pm.157}$ & $22.110_{\pm.096}$ ($-2.3\%$) \\
\bottomrule
\end{tabular}
\end{table}

\begin{table}[h!]
\centering
\caption{Per-horizon MAE on PeMS07. DCRNN degrades at every horizon, while decoupled backbones gain more at longer horizons. ChebConv on PeMS07 has a high vanilla MAE due to a known under-fitting regime of the Chebyshev filter on this graph (overall vanilla $32.186_{\pm1.090}$); the relative gain pattern still scales with horizon.}
\label{tab:per_horizon_pems07}
\small
\begin{tabular}{llcccc}
\toprule
Backbone & Method & $h=3$ & $h=6$ & $h=9$ & $h=12$ \\
\midrule
\multirow{2}{*}{ChebConv}
 & Vanilla & $29.729_{\pm1.115}$ & $32.014_{\pm1.105}$ & $34.093_{\pm1.058}$ & $36.406_{\pm1.022}$ \\
 & Bilevel & $27.257_{\pm1.258}$ & $29.234_{\pm1.224}$ & $31.122_{\pm1.183}$ & $33.227_{\pm1.129}$ ($-8.7\%$) \\
\midrule
\multirow{3}{*}{DiffConv}
 & Vanilla & $20.177_{\pm.033}$ & $22.142_{\pm.040}$ & $23.939_{\pm.043}$ & $25.966_{\pm.055}$ \\
 & Frozen-$\phi$ & $19.208_{\pm.028}$ & $20.752_{\pm.017}$ & $22.147_{\pm.019}$ & $23.727_{\pm.020}$ \\
 & Bilevel & $19.212_{\pm.025}$ & $20.767_{\pm.036}$ & $22.160_{\pm.040}$ & $23.744_{\pm.055}$ ($-8.6\%$) \\
\midrule
\multirow{2}{*}{MPGRU}
 & Vanilla & $20.284_{\pm.023}$ & $22.284_{\pm.033}$ & $24.106_{\pm.058}$ & $26.163_{\pm.071}$ \\
 & Bilevel & $19.288_{\pm.031}$ & $20.829_{\pm.039}$ & $22.186_{\pm.046}$ & $23.787_{\pm.054}$ ($-9.1\%$) \\
\midrule
\multirow{2}{*}{DCRNN}
 & Vanilla & $19.767_{\pm.040}$ & $21.179_{\pm.053}$ & $22.384_{\pm.061}$ & $23.860_{\pm.075}$ \\
 & Bilevel & $20.012_{\pm.049}$ & $21.617_{\pm.088}$ & $22.958_{\pm.041}$ & $24.410_{\pm.076}$ ($+2.3\%$) \\
\bottomrule
\end{tabular}
\end{table}

\begin{table}[h!]
\centering
\caption{Per-horizon MAE on PeMS08. Decoupled backbones gain monotonically with prediction horizon; DCRNN bilevel is essentially flat ($-0.3\%$ at $h = 12$).}
\label{tab:per_horizon_pems08}
\small
\begin{tabular}{llcccc}
\toprule
Backbone & Method & $h=3$ & $h=6$ & $h=9$ & $h=12$ \\
\midrule
\multirow{2}{*}{ChebConv}
 & Vanilla & $14.408_{\pm.015}$ & $15.365_{\pm.021}$ & $16.213_{\pm.032}$ & $17.274_{\pm.052}$ \\
 & Bilevel & $13.980_{\pm.027}$ & $14.823_{\pm.025}$ & $15.603_{\pm.026}$ & $16.569_{\pm.029}$ ($-4.1\%$) \\
\midrule
\multirow{3}{*}{DiffConv}
 & Vanilla & $14.484_{\pm.026}$ & $15.529_{\pm.056}$ & $16.491_{\pm.076}$ & $17.653_{\pm.107}$ \\
 & Frozen-$\phi$ & $14.159_{\pm.066}$ & $15.068_{\pm.073}$ & $15.914_{\pm.093}$ & $16.889_{\pm.071}$ \\
 & Bilevel & $14.082_{\pm.058}$ & $14.970_{\pm.073}$ & $15.759_{\pm.087}$ & $16.701_{\pm.085}$ ($-5.4\%$) \\
\midrule
\multirow{2}{*}{MPGRU}
 & Vanilla & $14.511_{\pm.013}$ & $15.558_{\pm.011}$ & $16.545_{\pm.023}$ & $17.712_{\pm.032}$ \\
 & Bilevel & $14.092_{\pm.050}$ & $15.015_{\pm.069}$ & $15.781_{\pm.073}$ & $16.728_{\pm.072}$ ($-5.6\%$) \\
\midrule
\multirow{2}{*}{DCRNN}
 & Vanilla & $14.306_{\pm.025}$ & $15.186_{\pm.033}$ & $15.964_{\pm.048}$ & $17.026_{\pm.072}$ \\
 & Bilevel & $14.425_{\pm.043}$ & $15.308_{\pm.084}$ & $16.021_{\pm.084}$ & $16.977_{\pm.049}$ ($-0.3\%$) \\
\bottomrule
\end{tabular}
\end{table}

\paragraph{Inner channel reproduces horizon-dependent error growth.} On all three flow datasets the frozen-$\phi$ DiffConv control exhibits monotonic horizon scaling closely paralleling vanilla and bilevel ($h{=}12$ minus $h{=}3$ gaps of $3.86$, $4.52$, and $2.73$ MAE on PeMS04/PeMS07/PeMS08). Computing the inner-channel share per horizon, $\Delta_{\mathrm{inner}}/\Delta_{\mathrm{total}} = (\mathcal{M}_\mathrm{Vanilla} - \mathcal{M}_{\mathrm{Frozen}\text{-}\phi}) / (\mathcal{M}_\mathrm{Vanilla} - \mathcal{M}_\mathrm{Bilevel})$, the share remains approximately invariant across horizons (PeMS04: $77$--$80\%$; PeMS07: $100$--$101\%$; PeMS08: $79$--$82\%$), matching the horizon-aggregated values reported in Table~\ref{tab:decomp}. The horizon-dependent error growth is therefore reproduced almost entirely by the frozen-$\phi$ inner channel without graph modification, consistent with the inner-channel-as-trajectory-regularization interpretation of Proposition~\ref{prop:igr}: the implicit regularization accumulated across the $T$-step inner schedule scales with the difficulty of long-horizon prediction. The complementary DCRNN result (weak horizon scaling on PeMS04, near-flat on PeMS08, uniform degradation on PeMS07) reflects the architectural scope boundary of Section~\ref{sec:framework-scope}: the coupled backbone breaks the factored Jacobian of \eqref{eq:jacobian} and does not benefit from improved spatial graph quality at any horizon.

%% file: appendix_08_R.tex
\section{Three-precondition framework}
\label{app:framework}

This appendix develops the three-precondition framework introduced in Section~\ref{sec:find-scope}, which predicts when bilevel graph structure learning produces forecasting benefit. Section~\ref{app:framework_def} states the preconditions formally. Section~\ref{app:airquality_anatomy} documents the AirQuality (AQ-437) adjacency anatomy that motivates precondition~(iii). Section~\ref{app:precondition_table} classifies the six spatio-temporal datasets by the framework. Section~\ref{app:mod_magnitude} reports the modification-magnitude figure migrated from the main text, and Section~\ref{app:flow_vs_speed} consolidates the flow-versus-speed signal-type analysis.

\subsection{Formal statement of the three preconditions}
\label{app:framework_def}

Let $G = (V, E, A_{\text{init}})$ denote the predefined adjacency available to a spatio-temporal forecasting model on $N$ sensors, and let the underlying data generator induce a true relational structure between sensors. Let $\Phi$ denote the policy class for the bilevel adjacency parameterization (here, a static undirected reweighting of $A_{\text{init}}$).

\begin{definition}[Three preconditions for bilevel-GSL benefit]
\label{def:framework}
We say bilevel GSL is \emph{plausibly beneficial} on $G$ for the prediction task if all three of the following hold.

\textbf{(i) Structural slack.} The predefined adjacency $A_{\text{init}}$ is suboptimal for the task-relevant relational structure. Operationally, structural slack is small when distance-based or domain-prescribed edges already align with the directions of large prediction error gradients, and large when task-coupled sensor pairs experience high commute time or low Dirichlet alignment under $A_{\text{init}}$~\citep{topping2022understanding,digiovanni2023oversquashing,black2023understanding}.

\textbf{(ii) Loss identifiability.} The prediction loss has informative gradients with respect to graph parameters: $\| \nabla_\phi \Lval(\theta^*(\phi), A_\phi) \| > \tau$ for some task-meaningful threshold $\tau$, when evaluated near $A_{\text{init}}$. Identifiability fails when per-node local effects (sensor-specific embeddings, time-of-day patterns, day-of-week patterns) absorb the spatially-extractable variance, leaving the loss flat in the structural direction~\citep{cini2023taming,manenti2025learning}.

\textbf{(iii) Parametrization expressiveness.} The static undirected adjacency class $\{A_\phi : \phi \in \Phi\}$ can in principle represent the true dependency. Expressiveness fails when the true dependency is directional, time-varying, or supported on connectivity that is structurally absent from $A_{\text{init}}$ in a way no reweighting can recover.
\end{definition}

The three preconditions are individually necessary for a non-trivial forecasting gain from a bilevel update of the static undirected adjacency. They are stated as conditions on the adjacency and the loss, not on the bilevel optimization procedure itself: a procedure may fail to find the gain even when (i)--(iii) hold, but the gain cannot exist when any one fails. Empirically, joint satisfaction of (i)--(iii) predicts a positive gain on all flow benchmarks (Table~\ref{tab:preconditions}).

\paragraph{Mapping to the six benchmark regimes.} Flow-signal datasets (PeMS04/07/08) satisfy all three preconditions: distance-based highway adjacency is incomplete with respect to flow conservation, the squared-error loss on traffic counts is informative with respect to inter-sensor structural changes, and a static undirected reweighted graph is an adequate representation. Speed-signal datasets (METR-LA, PeMS-BAY) fail preconditions (i)--(ii): the distance graph is well-aligned with the kinematic-wave physics that governs speed propagation~\citep{lighthill1955kinematic} (so (i) fails), and per-node embeddings absorb the residual variance in modern STGNN backbones (so (ii) fails by the local-effects mechanism of~\citet{cini2023taming}). AirQuality (AQ-437) fails precondition (iii) for the structural reason analyzed in Section~\ref{app:airquality_anatomy}.

\paragraph{Distinguishing data limits from optimization limits.} Failures of bilevel GSL on clean benchmarks admit two distinct diagnoses: (a) the predefined adjacency carries little exploitable slack and no procedure can recover gains beyond the inner channel (data limit), or (b) sufficient slack exists but the gradient-based outer loop fails to find it (optimization limit). The decomposition toolkit distinguishes them. When frozen-$\phi$, distillation, and end-to-end reweighting all converge on near-zero graph contribution (PeMS07: 101\%, 1\%, 5\% respectively for the three diagnostics), the bottleneck is data; when they diverge (bilevel low but distillation high), the bottleneck is optimization. The corruption study on Cora provides the converse evidence: injected slack is captured approximately linearly by the bilevel procedure (graph channel grows from 0.56pp at $r{=}0.10$ to 5.37pp at $r{=}0.75$), demonstrating that the optimizer is not the binding constraint when slack exists. We therefore interpret the small graph channel on clean Cora and the near-zero graph channel on PeMS07 as data-limit regimes rather than optimizer failures.
\subsection{AirQuality adjacency anatomy}
\label{app:airquality_anatomy}

The AirQuality benchmark used throughout the spatio-temporal GSL literature is the 437-station network of~\citet{cini2022filling}, publicly distributed~\citep{cini2022tsl}. Its standard adjacency is constructed by the \texttt{distance} method:
\begin{equation}
\label{eq:aq_adj}
A^{\text{AQ-437}}_{\text{init},ij} = \mathbb{1}\!\left[\exp\!\left(-d_{ij}^2 / \theta^2\right) \geq \tau\right] \cdot \exp\!\left(-d_{ij}^2 / \theta^2\right),
\end{equation}
with $d_{ij}$ a great-circle distance in kilometers, $\tau = 0.1$ a hard threshold, and the bandwidth $\theta$ chosen in the reference implementation as
\begin{equation}
\label{eq:aq_theta}
\theta = \mathrm{std}\!\left(\{d_{ij}\}_{i, j \in \mathrm{AQ\text{-}36}}\right),
\end{equation}
where AQ-36 is the 36-station Beijing-only subset that predates the 437-station release. The standard bandwidth value computed by this rule is $\theta \approx 26.1$ km, fitted to Beijing intra-city distances and then applied to the full 437-station graph spanning $43$ cities across China.

\paragraph{Block-diagonal structure.} The standard adjacency contains 2{,}699 edges, mean degree $12.4$, and \textbf{33 connected components}: of these, 14 are isolated singletons and the remaining 19 form urban clusters that under single-linkage at 80\,km collapse to 13 distinct cities. Every one of the 2{,}699 retained edges connects two stations \emph{within the same urban cluster}; \textbf{the adjacency contains zero inter-city edges}. The intra-city weight median is $0.47$ and the inter-city weight is exactly zero, making the AQ-437 adjacency \emph{strictly} block-diagonal under the canonical bandwidth choice. The mechanism is that intra-Beijing pairwise distances lie around $20$ to $30$\,km (matching $\theta = 26.1$\,km), while inter-city distances exceed several hundred kilometers; the ratio $d^2/\theta^2$ for inter-city pairs reaches values of several hundred, driving the Gaussian kernel below the $\tau = 0.1$ threshold for every inter-city pair.

\paragraph{Reasons no static undirected reweighting can repair this.} A bilevel reweighting policy $A_\phi = A_{\text{init}} \odot \pi_\phi$ scales existing edges; it cannot create edges that the threshold operation~(\ref{eq:aq_adj}) excluded. The 33 components are therefore frozen in the policy class, and any propagation between cities requires either edge addition (outside the $\Phi$ used in the bilevel ST setting) or non-static parameterization (outside the precondition (iii) class). This is the structural reason precondition (iii) fails on AQ-437 and bilevel produces no forecasting benefit, despite producing the largest median modification magnitude across the six benchmarks (Section~\ref{app:mod_magnitude}).

\paragraph{Bandwidth ablation.} To confirm that the failure is attributable to the bandwidth scale rather than to the kernel form or threshold, we recompute the adjacency under five alternative bandwidth choices and report the resulting connectivity in Table~\ref{tab:aq_bandwidth}. Bandwidths fitted to the AQ-36 subset or to per-city distances (both at the $\sim$20\,km scale of intra-city station spacing) yield strictly block-diagonal adjacencies with zero inter-city edges; bandwidths fitted to the full 437-station distance distribution or to global percentiles ($\geq 200$\,km) yield connected adjacencies in which the block structure dissolves.

\begin{table}[h!]
\centering
\caption{AirQuality adjacency under alternative bandwidth choices, all with the standard threshold $\tau = 0.1$. Bandwidths at the intra-city scale (canonical and per-city std) produce strictly block-diagonal adjacencies with zero inter-city edges; bandwidths at the inter-city scale yield connected graphs. The failure is therefore tied to bandwidth scale, not to the kernel form.}
\label{tab:aq_bandwidth}
\small
\begin{tabular}{lcccccc}
\toprule
Bandwidth choice & $\theta$\,(km) & \#comp & \#iso & mean deg & \#edges & \#inter-city \\
\midrule
$\theta = \mathrm{std}(d \mid \mathrm{AQ\text{-}36})$ \quad (canonical) & 26.1   & 33 & 14 & 12.4  & 2{,}699  & \textbf{0} \\
$\theta = $ per-city std                                                       & 20.9   & 44 & 20 &  9.1  & 1{,}999  & \textbf{0} \\
$\theta = $ percentile-based ($p = 25$)                                        & 202.8  &  2 &  0 & 173.7 & 37{,}961 & 7{,}877 \\
$\theta = $ percentile-based ($p = 50$)                                        & 422.3  &  2 &  0 & 231.6 & 50{,}599 & 15{,}815 \\
$\theta = \mathrm{std}(d \mid \mathrm{AQ\text{-}437})$                         & 795.9  &  2 &  0 & 231.6 & 50{,}604 & 15{,}820 \\
$\theta = $ percentile-based ($p = 95$)                                        & 1989.1 &  1 &  0 & 436.0 & 95{,}266 & 60{,}482 \\
\bottomrule
\end{tabular}
\end{table}

\paragraph{Visual evidence.} Figure~\ref{fig:aq_block} reorders the 437 rows and columns of the standard adjacency by city assignment. The block-diagonal structure becomes visually direct: solid squares appear on the diagonal corresponding to intra-city cliques, and off-diagonal entries are uniformly zero.

\begin{figure}[h!]
\centering
\includegraphics[width=0.5\linewidth]{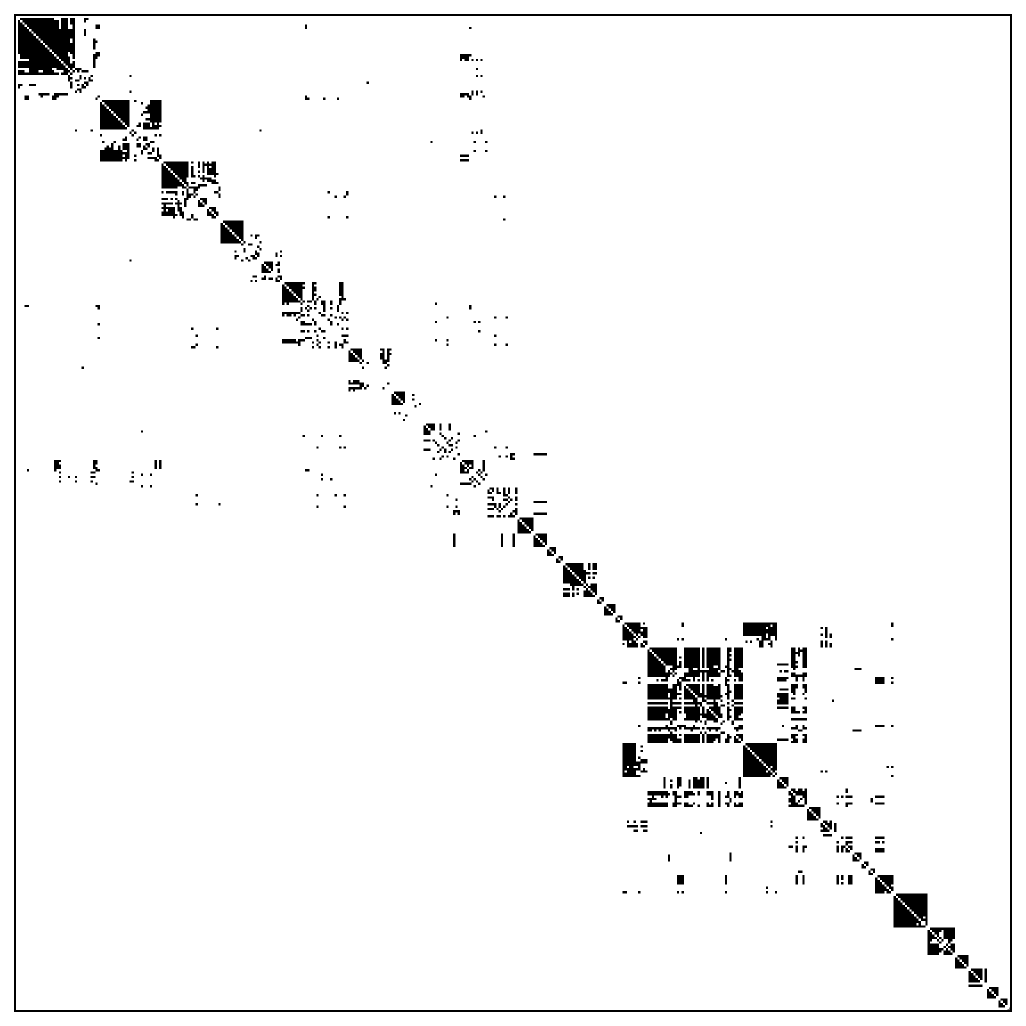}
\caption{AQ-437 adjacency reordered by city assignment (single-linkage at 80\,km, 13 distinct cities). City blocks appear as solid squares on the diagonal; off-diagonal entries are uniformly zero. The figure visualizes the structural failure mode of precondition (iii) on this benchmark.}
\label{fig:aq_block}
\end{figure}

\paragraph{Relation to prior acknowledgments.} The closest prior acknowledgment is~\citet{cini2022filling}, who note in their imputation evaluation that GRIN ``consistently outperforms BRITS in imputing missing values also for sensors corresponding to isolated (disconnected) nodes, i.e., nodes corresponding to stations more than 40\,km away from any other station.'' The disconnection is acknowledged but not quantified, named as block structure, traced to the bandwidth scale, or linked to the failure of bilevel rewiring. \citet{cini2023sparse} run AQ experiments per-city rather than on the joint 437-station graph, implicitly conceding the structural issue without articulating it. To our knowledge the precise diagnosis above has not appeared in print.

\subsection{Dataset classification by the framework}
\label{app:precondition_table}

Table~\ref{tab:preconditions} reports the per-dataset classification together with the predicted and observed bilevel effect; the framework correctly predicts the sign and qualitative magnitude on all six benchmarks. Structural slack and identifiability arguments are summarized in Section~\ref{app:framework_def} and the parametrization-expressiveness failure on AirQuality is documented in Section~\ref{app:airquality_anatomy}. The framework is dataset-level. Architecture-level scope conditions (coupled DCRNN, adaptive AGCRN/GraphWaveNet) are treated separately in Section~\ref{sec:framework-scope} and Appendix~\ref{app:backbone_scope}; for AirQuality, the failure already occurs at the dataset level under the standard backbone, so the architectural axis is not the binding constraint.

\begin{table}[h!]
\centering
\caption{Three-precondition classification of the six ST benchmarks (DiffConv backbone). \emph{Predicted}: \emph{Large gain} when all three hold, \emph{Marginal/Harm} when (i)--(ii) fail, \emph{Null} when (iii) fails. \emph{Observed}: bilevel $\Delta$ MAE\% from Table~\ref{tab:survey}.}
\label{tab:preconditions}
\small
\begin{tabular}{lcccll}
\toprule
Dataset & (i) Slack & (ii) Identifiable & (iii) Expressible & Predicted & Observed \\
\midrule
PeMS04     & Yes & Yes & Yes & Large gain    & $-5.6\%$ \\
PeMS07     & Yes & Yes & Yes & Large gain    & $-6.4\%$ \\
PeMS08     & Yes & Yes & Yes & Large gain    & $-3.8\%$ \\
METR-LA    & No  & No  & Yes & Harm          & $+2.5\%$ \\
PeMS-BAY   & No  & No  & Yes & Marginal      & $-0.7\%$ \\
AirQuality & --  & --  & No  & Null          & $+0.1\%$ \\
\bottomrule
\end{tabular}
\end{table}

\subsection{Modification magnitude versus forecasting improvement}
\label{app:mod_magnitude}

Figure~\ref{fig:mod_magnitude_app} provides the per-dataset breakdown summarized in the main text. Datasets where bilevel improves performance (PeMS04/07/08) exhibit median $|\Delta w|$ on existing edges approximately three times smaller than datasets where bilevel is neutral or harmful (PeMS-BAY, AirQuality, METR-LA).

\begin{figure}[h!]
\centering
\includegraphics[width=0.7\linewidth]{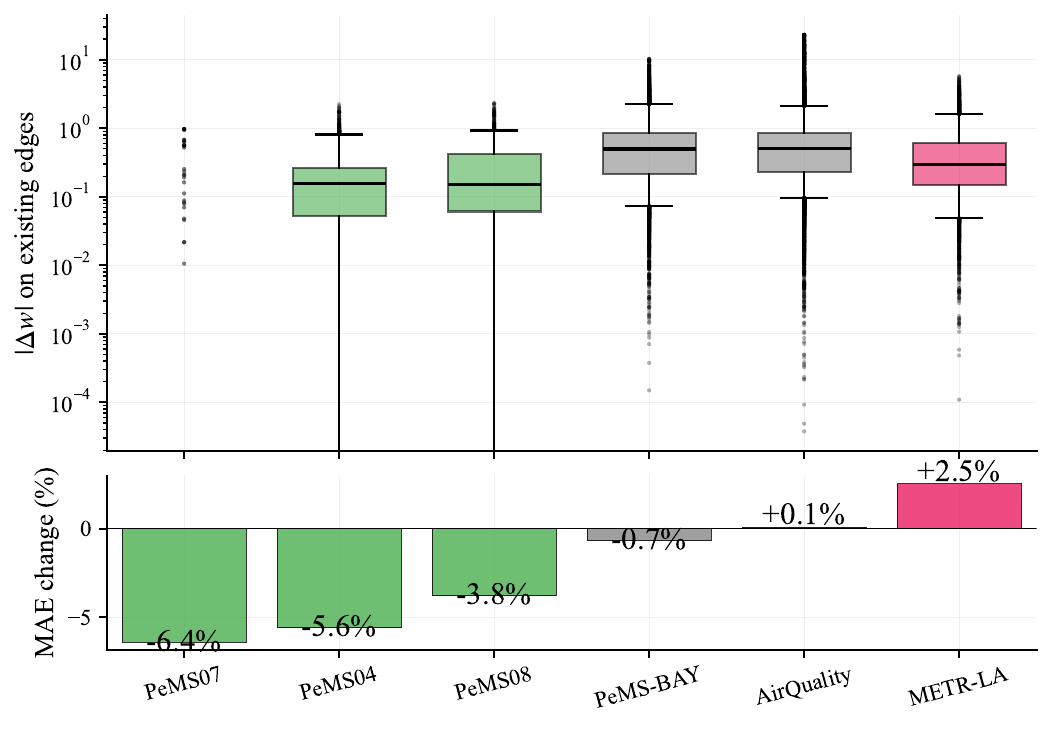}
\caption{Modification magnitude versus forecasting improvement across six ST datasets. Top: distribution of $|\Delta w|$ on existing edges (log scale). Bottom: $\Delta$ MAE\%. Spearman $\rho = -0.77$ ($p = 0.072$). Datasets where bilevel improves performance (PeMS04/07/08, green) exhibit the smallest median modifications; PeMS07 (median $|\Delta w| \approx 0$) achieves the largest improvement.}
\label{fig:mod_magnitude_app}
\end{figure}

\paragraph{Connection to the framework.} The anti-correlation aligns with the precondition diagnosis: on flow datasets where preconditions (i)--(iii) hold, bilevel applies selective minimal intervention to a graph already close to the task-relevant structure. On AirQuality where (iii) fails, bilevel applies aggressive modification ($|\Delta w|$ outliers reach $\sim 25$) that does not translate to forecasting benefit, consistent with policy-class limits rather than insufficient modification effort. This is the structural counterpart of precondition (iii): a parametrization that cannot represent the true dependency cannot produce the gain regardless of the modification budget the optimizer expends.

\subsection{Flow versus speed signal pattern}
\label{app:flow_vs_speed}

The flow/speed split observed across the four PeMS datasets aligns with preconditions (i)--(ii) in the framework. Flow-signal sensors record vehicle counts that obey approximate flow conservation across road segments: a perturbation at one sensor propagates as a wave whose direction and amplitude depend on inter-sensor connectivity beyond geometric distance. The static distance-based adjacency carries non-trivial structural slack with respect to this connectivity (precondition (i) holds), and the squared-error loss is informative with respect to inter-sensor structural changes because flow signals couple across sensors more strongly than per-node temporal patterns can absorb (precondition (ii) holds).

Speed-signal sensors record per-segment vehicle velocity, which under the Lighthill-Whitham-Richards (LWR) kinematic-wave model~\citep{lighthill1955kinematic} propagates locally along the road network with shockwaves whose speed and direction are determined by physical road geometry. The distance graph is therefore well-aligned with the dynamics (precondition (i) fails: structural slack is small), and per-node temporal patterns (rush-hour, weekday/weekend, holiday) account for the dominant fraction of the observed variance in modern STGNN backbones~\citep{cini2023taming} (precondition (ii) fails: the loss is flat in the structural direction). The flow/speed boundary is therefore not signal type per se but the alignment of the distance graph with the underlying transport physics, mediated by the share of variance absorbed by sensor-local effects.

\paragraph{Empirical confirmation.} On flow datasets (PeMS04/07/08), bilevel achieves between $-3.8\%$ and $-6.4\%$ MAE improvement with complete distributional separation across seeds. On speed datasets (METR-LA, PeMS-BAY), bilevel produces $-0.7\%$ and $+2.5\%$ respectively, and the frozen-$\phi$ control on METR-LA is also worse than vanilla, indicating that the inner channel itself fails on speed signals rather than being canceled by a harmful outer update. This is the empirical signature predicted by the framework when preconditions (i)--(ii) fail: neither the inner channel (no structural information to extract) nor the outer channel (no informative gradient) produces benefit.

\subsection{Practitioner decision flow}
\label{app:decision_tree}

Figure~\ref{fig:decision_tree} summarizes the resulting recommendation for practitioners as a decision flow over the three preconditions of Appendix~\ref{app:framework_def} together with the method-level scope conditions of Section~\ref{sec:framework-scope}.

\begin{figure}[h!]
\centering
\includegraphics[width=0.88\textwidth]{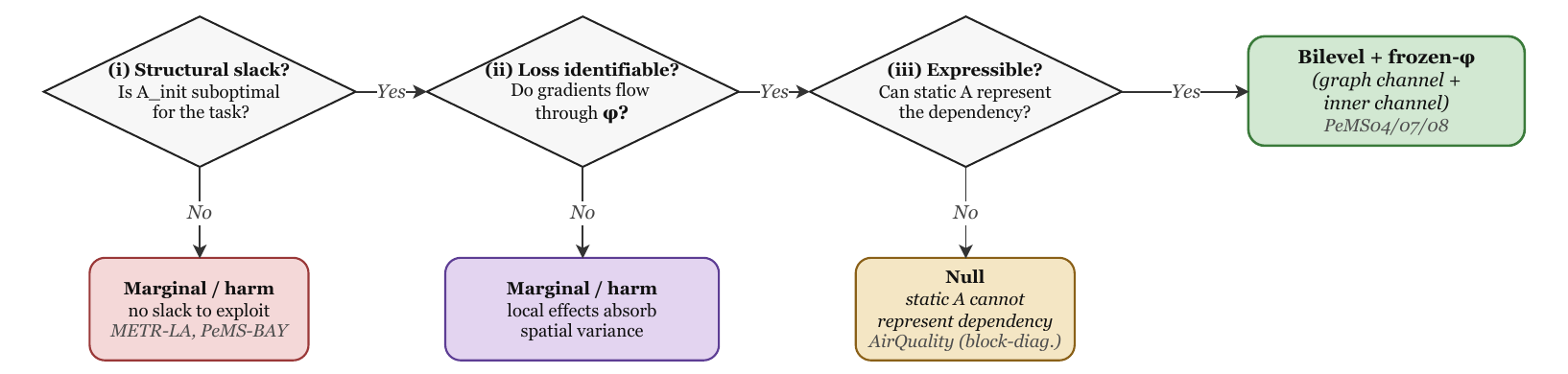}
\caption{Decision flow for attributing gains in bilevel GSL. Diamonds: decision nodes; boxes: recommended actions. Four regimes: frozen-$\phi$ + bilevel (decoupled, structurally suboptimal graph; primary); graph distillation (coupled or non-incremental); rewiring inapplicable (no external adjacency); vanilla suffices (signal aligned with graph).}
\label{fig:decision_tree}
\end{figure}

%% file: appendix_09_efficiency_repro.tex
\section{Efficiency}
\label{app:efficiency}

Bilevel incurs approximately $15\times$ training overhead due to $T$ inner steps per outer step ($4.5$h vs vanilla $0.3$h on PeMS04 DiffConv; Table~\ref{tab:hp_compute}), though inference cost is unchanged. The frozen-$\phi$ control achieves $78$--$101\%$ of bilevel's gain on flow datasets at approximately the same training cost as bilevel ($3.8$h, the $T$-step inner loop dominates). When graph-signal alignment is already high (speed datasets), practitioners can adopt the $T = 10$ schedule alone (Section~\ref{sec:discussion}).

\section{Reproducibility Details}
\label{app:reproducibility}

\paragraph{Hardware.} Experiments used L40S (48GB).

\paragraph{Software.} PyTorch 2.1, PyTorch Geometric, Torch Spatiotemporal~\citep{cini2022tsl}, PyTorch Lightning with manual optimization for ST bilevel; the GSLB fork~\citep{li2023gslb} for NC. All runs fix random seeds and enable deterministic CUDA operations. Code, configuration files, learned adjacencies, and experiment logs will be released upon acceptance. We additionally provide a self-contained frozen-$\phi$ wrapper compatible with the GSLB benchmarking suite~\citep{li2023gslb}, enabling practitioners to run the inner/graph decomposition on any LDS-compatible bilevel GSL method with a single configuration flag.

\paragraph{Statistical reporting.} ST and NC error bars are standard deviations across seeds. We report ST regression results to three-decimal test MAE and NC classification results to two-decimal test accuracy. NC channel significance is assessed by paired two-sided $t$-tests across seed indices. Inner-share confidence intervals are 95\% bootstrap percentile intervals ($B = 1{,}000$ paired resamples). For ST, effect sizes are large relative to per-seed noise (e.g., PeMS04 bilevel improvement of $1.17$ MAE versus per-seed std $\leq 0.08$), so we report mean$\pm$std without further significance tests.

\paragraph{Random seeds.} ST multi-seed experiments use $\{42, 123, 456, 789, 1024\}$ (5 seeds, following~\citep{cini2022tsl}). NC multi-seed experiments use $\{42, 123, 456, 789, 1024, 2025, 3047, 4096, 5555, 7777\}$ (10 seeds), doubling the 5-seed protocol of the original LDS paper~\citep{franceschi2019learning} to obtain tighter confidence intervals on the smaller absolute gains characteristic of the homophilous-NC regime. The first five NC seeds match the ST set; the remaining five are arbitrary fixed integers selected prior to running any experiments. Seeds are fixed across all configurations. We report all 10 NC seeds without omission, including any configurations where bilevel underperforms vanilla on individual seeds.

\FloatBarrier
\begin{table}[h!]
\centering
\caption{Hyperparameters: ST regime. Defaults apply to all spatio-temporal datasets unless otherwise noted; deviations for METR-LA and AirQuality are listed in the rightmost notes column.}
\label{tab:hp_st}
\small
\begin{tabular}{llp{6.5cm}}
\toprule
Category & Setting & Value (default / deviations) \\
\midrule
\multicolumn{3}{l}{\textit{Optimization}} \\
Optimizer & --- & Adam \\
Learning rate (inner) $\eta_\theta$ & --- & $1\times10^{-3}$ \\
Outer LR $\eta_\phi$ & --- & $1\times10^{-3}$ \\
LR schedule & --- & CosineAnnealingLR ($T_{\max}{=}$epochs, $\eta_{\min}{=}10^{-6}$) \\
Weight decay & --- & $0$ \\
Gradient clipping & --- & $5.0$ \\
Batch size & --- & $64$ \\
Epochs & total / warmup $E_w$ & $100$ / $10$ \\
Early stopping & patience (epochs) & $30$ \\
\midrule
\multicolumn{3}{l}{\textit{Architecture (decoupled backbone)}} \\
Spatial filter & DiffConv hops $K$ & $2$, bidirectional (default configuration) \\
Temporal module & TCN kernel / dilation & $3$ / $1$ (exponential, GeLU) \\
Hidden dim & --- & $64$ \\
Layers & spatial / temporal & $1$ / $2$ \\
Dropout & --- & $0$ \\
Input / output horizon & steps & $12 / 12$; AirQuality $24/24$ \\
\midrule
\multicolumn{3}{l}{\textit{Bilevel}} \\
Method & --- & First-order \citep{liu2019darts} \\
Inner steps $T$ & --- & $10$; METR-LA deviation $T = 1$ \\
Outer policy & parameterization & $A \odot \mathrm{softmax}_{\mathrm{row}}(W_\phi)$ \\
\midrule
\multicolumn{3}{l}{\textit{Static rewiring baselines}} \\
\multicolumn{3}{l}{FoSR \citep{karhadkar2023fosr}: \texttt{num\_iterations=20}, \texttt{initial\_power\_iters=10}} \\
\multicolumn{3}{l}{BORF \citep{nguyen2023revisiting}: \texttt{loops=10}, \texttt{batch\_add=4}, \texttt{batch\_remove=2}} \\
\multicolumn{3}{l}{SDRF \citep{topping2022understanding}: \texttt{num\_iterations=20}, curvature=BFC, \texttt{remove\_edges=true}, \texttt{removal\_bound=0.5}, $\tau{=}1$} \\
\multicolumn{3}{l}{GTR \citep{black2023understanding}: \texttt{num\_edges=20}} \\
\bottomrule
\end{tabular}
\\[0.4em]
\footnotesize Defaults shown for the DiffConv backbone; other backbones (ChebConv, MPGRU, DCRNN, AGCRN, GraphWaveNet) inherit the shared optimization and training settings and use their published architectural hyperparameters.
\end{table}

\FloatBarrier
\begin{table}[h!]
\centering
\caption{Hyperparameters: NC regime (LDS on Cora and Citeseer). Settings follow GSLB's per-dataset tuned configuration~\citep{li2023gslb}; values fall within the validated grids of the original LDS paper~\citep{franceschi2019learning}.}
\label{tab:hp_nc}
\small
\begin{tabular}{llp{6.5cm}}
\toprule
Category & Setting & Value \\
\midrule
\multicolumn{3}{l}{\textit{Backbone}} \\
GCN layers & --- & 2 \\
Hidden dim & --- & $128$ (Cora) / $64$ (Citeseer) \\
Dropout & --- & $0.5$ \\
\midrule
\multicolumn{3}{l}{\textit{Optimization}} \\
Optimizer (inner / outer) & --- & Adam / SGD with exponential decay \\
Inner LR $\eta_\theta$ & --- & $5\times10^{-3}$ (Cora) / $1\times10^{-2}$ (Citeseer) \\
Outer LR $\eta_\phi$ & --- & $1.0$ \\
Weight decay (inner) & --- & $5\times10^{-4}$ \\
Epochs & total / inner-only & $400$ / $400$ \\
Early stopping patience & --- & $20$ \\
Batch / split protocol & --- & Full-batch, Planetoid splits~\citep{yang2016planetoid} \\
\midrule
\multicolumn{3}{l}{\textit{Bilevel (LDS)}} \\
Inner steps $T$ & --- & $5$ (GSLB default) \\
Edge parameterization & --- & Bernoulli with sample count $S = 16$ \\
\midrule
\multicolumn{3}{l}{\textit{Frozen-$\phi$ control}} \\
$\phi$ initialization & --- & Original adjacency $A$ (matching LDS initialization) \\
Inner-loop schedule & --- & Identical to LDS \\
Outer step & --- & Skipped \\
\bottomrule
\end{tabular}
\end{table}

\begin{table}[h!]
\centering
\caption{Computational budget per configuration, single-seed measurement on NVIDIA L40S (48GB).}
\label{tab:hp_compute}
\small
\begin{tabular}{llcc}
\toprule
Regime & Setting & Wall-clock (h) / seed & Peak GPU mem (GB) \\
\midrule
ST (PeMS04, DiffConv) & Vanilla & $0.3$ & $1.7$ \\
ST (PeMS04, DiffConv) & Frozen-$\phi$ ($T{=}10$) & $3.8$ & $\sim 1.7$--$2.0$ \\
ST (PeMS04, DiffConv) & Bilevel ($T{=}10$)       & $4.5$ & $\sim 1.7$--$2.0$ \\
NC (Cora, GCN)         & Vanilla & ${<}0.01$ & $0.2$ \\
NC (Cora, GCN)         & Frozen-$\phi$        & $0.03$ & $1.5$ \\
NC (Cora, GCN)         & LDS                  & $0.08$ & $1.6$ \\
\bottomrule
\end{tabular}
\\[0.4em]
\footnotesize Bilevel/Vanilla overhead $\approx 15\times$. Frozen-$\phi$ retains the full $T$-step inner loop and accounts for $\sim 84\%$ of bilevel's wall-clock. Peak GPU memory is comparable across Frozen-$\phi$ and Bilevel configurations.
\end{table}

\paragraph{Total compute.} Approximately $1{,}600$ GPU-hours total: $\sim 1{,}490$ GPU-hours for ST experiments (5-seed main-table runs plus hyperparameter sweeps and replication runs), $\sim 24$ GPU-hours for NC experiments, and $\sim 80$ GPU-hours for scope-condition checks (GEN, Pro-GNN) and ablations not included in the main results.

\paragraph{Asset licenses.} Code dependencies: PyTorch (BSD-3), PyTorch Geometric (MIT), PyTorch Lightning (Apache 2.0), Torch Spatiotemporal~\citep{cini2022tsl} (MIT). The LDS reference implementation~\citep{franceschi2019learning} is distributed under a NEC Laboratories Europe academic non-commercial license; we use it for non-commercial research only and preserve the original license headers in any derived files. The GSLB benchmark~\citep{li2023gslb} and the static rewiring baselines (FoSR, SDRF, BORF, GTR) are distributed publicly without explicit license files; we credit each method through citation. Datasets: PeMS04/07/08 from Caltrans PeMS public sensor data; METR-LA and PeMS-BAY from the DCRNN release~\citep{li2018diffusion}; AirQuality from KDD Cup 2018~\citep{zheng2015airquality}; Cora and Citeseer under the LINQS terms via Planetoid splits~\citep{yang2016planetoid}.

%% file: main.bbl
\begin{thebibliography}{44}
\providecommand{\natexlab}[1]{#1}
\providecommand{\url}[1]{\texttt{#1}}
\expandafter\ifx\csname urlstyle\endcsname\relax
  \providecommand{\doi}[1]{doi: #1}\else
  \providecommand{\doi}{doi: \begingroup \urlstyle{rm}\Url}\fi

\bibitem[Alon and Yahav(2021)]{alon2021bottleneck}
Uri Alon and Eran Yahav.
\newblock On the bottleneck of graph neural networks and its practical implications.
\newblock In \emph{ICLR}, 2021.

\bibitem[Bai et~al.(2020)Bai, Yao, Li, Wang, and Wang]{bai2020adaptive}
Lei Bai, Lina Yao, Can Li, Xianzhi Wang, and Can Wang.
\newblock Adaptive graph convolutional recurrent network for traffic forecasting.
\newblock In \emph{NeurIPS}, 2020.

\bibitem[Banerjee et~al.(2022)Banerjee, Karhadkar, Wang, Alon, and Mont{\'u}far]{banerjee2022oversquashing}
Pradeep~Kumar Banerjee, Kedar Karhadkar, Yu~Guang Wang, Uri Alon, and Guido Mont{\'u}far.
\newblock Oversquashing in {GNN}s through the lens of information contraction and graph expansion.
\newblock In \emph{Allerton}, 2022.

\bibitem[Barrett and Dherin(2021)]{barrett2021implicit}
David~G.T. Barrett and Benoit Dherin.
\newblock Implicit gradient regularization.
\newblock In \emph{ICLR}, 2021.

\bibitem[Black et~al.(2023)Black, Wan, Nayyeri, and Wang]{black2023understanding}
Mitchell Black, Zhengchao Wan, Amir Nayyeri, and Yusu Wang.
\newblock Understanding oversquashing in {GNNs} through the lens of effective resistance.
\newblock In \emph{ICML}, 2023.

\bibitem[Chen et~al.(2020)Chen, Wu, and Zaki]{chen2020iterative}
Yu~Chen, Lingfei Wu, and Mohammed~J. Zaki.
\newblock Iterative deep graph learning for graph neural networks: Better and robust node embeddings.
\newblock In \emph{NeurIPS}, 2020.

\bibitem[Cini and Marisca(2022)]{cini2022tsl}
Andrea Cini and Ivan Marisca.
\newblock Torch spatiotemporal, 2022.
\newblock Software library.

\bibitem[Cini et~al.(2022)Cini, Marisca, and Alippi]{cini2022filling}
Andrea Cini, Ivan Marisca, and Cesare Alippi.
\newblock Filling the g\_ap\_s: Multivariate time series imputation by graph neural networks.
\newblock In \emph{ICLR}, 2022.

\bibitem[Cini et~al.(2023{\natexlab{a}})Cini, Marisca, Zambon, and Alippi]{cini2023taming}
Andrea Cini, Ivan Marisca, Daniele Zambon, and Cesare Alippi.
\newblock Taming local effects in graph-based spatiotemporal forecasting.
\newblock In \emph{NeurIPS}, 2023{\natexlab{a}}.

\bibitem[Cini et~al.(2023{\natexlab{b}})Cini, Zambon, and Alippi]{cini2023sparse}
Andrea Cini, Daniele Zambon, and Cesare Alippi.
\newblock Sparse graph learning from spatiotemporal time series.
\newblock \emph{Journal of Machine Learning Research}, 2023{\natexlab{b}}.

\bibitem[Di~Giovanni et~al.(2023)Di~Giovanni, Giusti, Barbero, Luise, Li{\`o}, and Bronstein]{digiovanni2023oversquashing}
Francesco Di~Giovanni, Lorenzo Giusti, Federico Barbero, Giulia Luise, Pietro Li{\`o}, and Michael~M. Bronstein.
\newblock On over-squashing in message passing neural networks: The impact of width, depth, and topology.
\newblock In \emph{ICML}, 2023.

\bibitem[Fatemi et~al.(2021)Fatemi, El~Asri, and Kazemi]{fatemi2021slaps}
Bahare Fatemi, Layla El~Asri, and Seyed~Mehran Kazemi.
\newblock {SLAPS}: Self-supervision improves structure learning for graph neural networks.
\newblock In \emph{NeurIPS}, 2021.

\bibitem[Franceschi et~al.(2019)Franceschi, Niepert, Pontil, and He]{franceschi2019learning}
Luca Franceschi, Mathias Niepert, Massimiliano Pontil, and Xiao He.
\newblock Learning discrete structures for graph neural networks.
\newblock In \emph{ICML}, 2019.

\bibitem[Gasteiger et~al.(2019)Gasteiger, Wei{\ss}enberger, and G{\"u}nnemann]{gasteiger2019diffusion}
Johannes Gasteiger, Stefan Wei{\ss}enberger, and Stephan G{\"u}nnemann.
\newblock Diffusion improves graph learning.
\newblock In \emph{NeurIPS}, 2019.

\bibitem[Hu et~al.(2022)Hu, Chang, Ma, and Shan]{hu2022efficient}
Minyang Hu, Hong Chang, Bingpeng Ma, and Shiguang Shan.
\newblock Learning continuous graph structure with bilevel programming for graph neural networks.
\newblock In \emph{IJCAI}, 2022.

\bibitem[Jiang et~al.(2023)Jiang, Wang, Yong, Jeph, Chen, Kobayashi, Song, Fukushima, and Suzumura]{jiang2023spatio}
Renhe Jiang, Zhaonan Wang, Jiawei Yong, Puneet Jeph, Quanjun Chen, Yasumasa Kobayashi, Xuan Song, Shintaro Fukushima, and Toyotaro Suzumura.
\newblock Spatio-temporal meta-graph learning for traffic forecasting.
\newblock In \emph{AAAI}, 2023.

\bibitem[Jin et~al.(2020)Jin, Ma, Liu, Tang, Wang, and Tang]{jin2020graph}
Wei Jin, Yao Ma, Xiaorui Liu, Xianfeng Tang, Suhang Wang, and Jiliang Tang.
\newblock Graph structure learning for robust graph neural networks.
\newblock In \emph{KDD}, 2020.

\bibitem[Karhadkar et~al.(2023)Karhadkar, Banerjee, and Mont{\'u}far]{karhadkar2023fosr}
Kedar Karhadkar, Pradeep~Kumar Banerjee, and Guido Mont{\'u}far.
\newblock {FoSR}: First-order spectral rewiring for addressing oversquashing in {GNNs}.
\newblock In \emph{ICLR}, 2023.

\bibitem[Li and Talwalkar(2019)]{li2019random}
Liam Li and Ameet Talwalkar.
\newblock Random search and reproducibility for neural architecture search.
\newblock In \emph{UAI}, 2019.

\bibitem[Li et~al.(2018)Li, Yu, Shahabi, and Liu]{li2018diffusion}
Yaguang Li, Rose Yu, Cyrus Shahabi, and Yan Liu.
\newblock Diffusion convolutional recurrent neural network: Data-driven traffic forecasting.
\newblock In \emph{ICLR}, 2018.

\bibitem[Li et~al.(2023)Li, Wang, Sun, Luo, Zhu, Chen, Luo, Zhou, Liu, Wu, Yu, and Wang]{li2023gslb}
Zhixun Li, Liang Wang, Xin Sun, Yifan Luo, Yanqiao Zhu, Dingshuo Chen, Yingtao Luo, Xiangxin Zhou, Qiang Liu, Shu Wu, Jeffrey~Xu Yu, and Liang Wang.
\newblock {GSLB}: The graph structure learning benchmark.
\newblock In \emph{NeurIPS Datasets and Benchmarks Track}, 2023.

\bibitem[Lighthill and Whitham(1955)]{lighthill1955kinematic}
M.~J. Lighthill and G.~B. Whitham.
\newblock On kinematic waves. {II}. {A} theory of traffic flow on long crowded roads.
\newblock \emph{Proceedings of the Royal Society A: Mathematical, Physical and Engineering Sciences}, 229\penalty0 (1178):\penalty0 317--345, 1955.

\bibitem[Liu et~al.(2023)Liu, Dong, Jiang, Deng, Deng, Chen, and Song]{liu2023staeformer}
Hangchen Liu, Zheng Dong, Renhe Jiang, Jiewen Deng, Jinliang Deng, Quanjun Chen, and Xuan Song.
\newblock Spatio-temporal adaptive embedding makes vanilla transformer {SOTA} for traffic forecasting.
\newblock In \emph{CIKM}, 2023.

\bibitem[Liu et~al.(2019)Liu, Simonyan, and Yang]{liu2019darts}
Hanxiao Liu, Karen Simonyan, and Yiming Yang.
\newblock {DARTS}: Differentiable architecture search.
\newblock In \emph{ICLR}, 2019.

\bibitem[Liu et~al.(2022)Liu, Zheng, Zhang, Chen, Peng, and Pan]{liu2022sublime}
Yixin Liu, Yu~Zheng, Daokun Zhang, Hongxu Chen, Hao Peng, and Shirui Pan.
\newblock Towards unsupervised deep graph structure learning.
\newblock In \emph{WWW}, 2022.

\bibitem[Manenti et~al.(2025)Manenti, Zambon, and Alippi]{manenti2025learning}
Alessandro Manenti, Daniele Zambon, and Cesare Alippi.
\newblock Learning latent graph structures and their uncertainty.
\newblock In \emph{ICML}, 2025.

\bibitem[Marisca et~al.(2025)Marisca, Bamberger, Alippi, and Bronstein]{marisca2025oversquashing}
Ivan Marisca, Jacob Bamberger, Cesare Alippi, and Michael~M. Bronstein.
\newblock Over-squashing in spatio-temporal graph neural networks.
\newblock In \emph{NeurIPS}, 2025.

\bibitem[Nguyen et~al.(2023)Nguyen, Hieu, Nguyen, Ho, Osher, and Nguyen]{nguyen2023revisiting}
Khang Nguyen, Nong~Minh Hieu, Vinh~Duc Nguyen, Nhat Ho, Stanley Osher, and Tan~Minh Nguyen.
\newblock Revisiting over-smoothing and over-squashing using {O}llivier-{R}icci curvature.
\newblock In \emph{ICML}, 2023.

\bibitem[Pei et~al.(2020)Pei, Wei, Chang, Lei, and Yang]{pei2020geomgcn}
Hongbin Pei, Bingzhe Wei, Kevin Chen-Chuan Chang, Yu~Lei, and Bo~Yang.
\newblock {Geom-GCN}: Geometric graph convolutional networks.
\newblock In \emph{ICLR}, 2020.

\bibitem[Platonov et~al.(2023)Platonov, Kuznedelev, Diskin, Babenko, and Prokhorenkova]{platonov2023critical}
Oleg Platonov, Denis Kuznedelev, Michael Diskin, Artem Babenko, and Liudmila Prokhorenkova.
\newblock A critical look at the evaluation of {GNN}s under heterophily: Are we really making progress?
\newblock In \emph{ICLR}, 2023.

\bibitem[Rajeswaran et~al.(2019)Rajeswaran, Finn, Kakade, and Levine]{rajeswaran2019meta}
Aravind Rajeswaran, Chelsea Finn, Sham~M. Kakade, and Sergey Levine.
\newblock Meta-learning with implicit gradients.
\newblock In \emph{NeurIPS}, 2019.

\bibitem[Shang et~al.(2021)Shang, Chen, and Bi]{shang2021discrete}
Chao Shang, Jie Chen, and Jinbo Bi.
\newblock Discrete graph structure learning for forecasting multiple time series.
\newblock In \emph{ICLR}, 2021.

\bibitem[Smith et~al.(2021)Smith, Dherin, Barrett, and De]{smith2021origin}
Samuel~L. Smith, Benoit Dherin, David~G.T. Barrett, and Soham De.
\newblock On the origin of implicit regularization in stochastic gradient descent.
\newblock In \emph{ICLR}, 2021.

\bibitem[Tang et~al.(2022)Tang, Qian, Liu, Du, Hu, and Li]{tang2022stlgsl}
Jiabin Tang, Tang Qian, Shijing Liu, Shengdong Du, Jie Hu, and Tianrui Li.
\newblock Spatio-temporal latent graph structure learning for traffic forecasting.
\newblock In \emph{IJCNN}, 2022.

\bibitem[Topping et~al.(2022)Topping, Di~Giovanni, Chamberlain, Dong, and Bronstein]{topping2022understanding}
Jake Topping, Francesco Di~Giovanni, Benjamin~Paul Chamberlain, Xiaowen Dong, and Michael~M. Bronstein.
\newblock Understanding over-squashing and bottlenecks on graphs via curvature.
\newblock In \emph{ICLR}, 2022.

\bibitem[Tori et~al.(2025)Tori, Holst, and Ginis]{tori2025effectiveness}
Floriano Tori, Vincent Holst, and Vincent Ginis.
\newblock The effectiveness of curvature-based rewiring and the role of hyperparameters in {GNN}s revisited.
\newblock In \emph{ICLR}, 2025.

\bibitem[Vicol et~al.(2022)Vicol, Lorraine, Pedregosa, Duvenaud, and Grosse]{vicol2022implicit}
Paul Vicol, Jonathan~P. Lorraine, Fabian Pedregosa, David Duvenaud, and Roger Grosse.
\newblock On implicit bias in overparameterized bilevel optimization.
\newblock In \emph{ICML}, 2022.

\bibitem[Wang et~al.(2021)Wang, Mou, Wang, Xiao, Ju, Shi, and Xie]{wang2021graph}
Ruijia Wang, Shuai Mou, Xiao Wang, Wanpeng Xiao, Qi~Ju, Chuan Shi, and Xing Xie.
\newblock Graph structure estimation neural networks.
\newblock In \emph{WWW}, 2021.

\bibitem[Wu et~al.(2019)Wu, Pan, Long, Jiang, and Zhang]{wu2019graph}
Zonghan Wu, Shirui Pan, Guodong Long, Jing Jiang, and Chengqi Zhang.
\newblock Graph {W}ave{N}et for deep spatial-temporal graph modeling.
\newblock In \emph{IJCAI}, 2019.

\bibitem[Yang et~al.(2020)Yang, Esperan{\c{c}}a, and Carlucci]{yang2020nas}
Antoine Yang, Pedro~M. Esperan{\c{c}}a, and Fabio~M. Carlucci.
\newblock {NAS} evaluation is frustratingly hard.
\newblock In \emph{ICLR}, 2020.

\bibitem[Yang et~al.(2016)Yang, Cohen, and Salakhutdinov]{yang2016planetoid}
Zhilin Yang, William~W. Cohen, and Ruslan Salakhutdinov.
\newblock Revisiting semi-supervised learning with graph embeddings.
\newblock In \emph{ICML}, 2016.

\bibitem[Zhang et~al.(2020)Zhang, Chang, Meng, Xiang, and Pan]{zhang2020stgsl}
Qi~Zhang, Jianlong Chang, Gaofeng Meng, Shiming Xiang, and Chunhong Pan.
\newblock Spatio-temporal graph structure learning for traffic forecasting.
\newblock In \emph{AAAI}, 2020.

\bibitem[Zheng et~al.(2015)Zheng, Yi, Li, Li, Shan, Chang, and Li]{zheng2015airquality}
Yu~Zheng, Xiuwen Yi, Ming Li, Ruiyuan Li, Zhangqing Shan, Eric Chang, and Tianrui Li.
\newblock Forecasting fine-grained air quality based on big data.
\newblock In \emph{KDD}, 2015.

\bibitem[Zhou et~al.(2023)Zhou, Zhou, Mao, Zhou, Chen, Tan, Zha, Feng, Chen, and Wang]{zhou2023opengsl}
Zhiyao Zhou, Sheng Zhou, Bochao Mao, Xuanyi Zhou, Jiawei Chen, Qiaoyu Tan, Daochen Zha, Yan Feng, Chun Chen, and Can Wang.
\newblock {OpenGSL}: A comprehensive benchmark for graph structure learning.
\newblock In \emph{NeurIPS Datasets and Benchmarks Track}, 2023.

\end{thebibliography}
